\documentclass{tlp}
\usepackage{amsmath}
\usepackage{amssymb}
\usepackage{graphicx}
\usepackage{upquote}
\usepackage{float}
\usepackage{enumerate}
\usepackage{textcomp}
\usepackage{scrextend}
\floatstyle{boxed}
\restylefloat{figure}

\newtheorem{theorem}{Theorem}
\newtheorem{observation}{Observation}
\newtheorem{lemma}[observation]{Lemma}

\newtheorem{definition}{Definition}
\newtheorem{notation}{Notation}
\newtheorem{example}{Example}

\title[Justifying Answer Sets using Argumentation]{Justifying Answer Sets using Argumentation}
\author[Claudia Schulz and Francesca Toni]{CLAUDIA SCHULZ and FRANCESCA TONI\\ Department of Computing, Imperial College London\\ London SW7 2AZ, UK\\ \email{\{claudia.schulz, f.toni\}@imperial.ac.uk}}
\pagerange{\pageref{firstpage}--\pageref{lastpage}}

 \submitted{13 April 2014}
 \revised{2 October 2014}
 \accepted{30 Otober 2014}

\begin{document}
\maketitle

\noindent
{\bf Note:} This article has been accepted for publication in
\emph{Theory and Practice of Logic Programming}, \copyright Cambridge
University Press.\\

\label{firstpage}

%%%%%%%%%%%%%%%%%%%%%%%%%%%%%%%%%%%%%%%%%%%%%%%%%%%%%%%%%%%%%%%%%%%%%%%%%%%%%%%%%%%%%%%%%%%%%%%%%%%%%%%%%%%%%%%%%%%%%%%%%%%%%%%%%%%%%%%%%%%%%%%%%%%%%%%%%%%%%%%
\begin{abstract}
An answer set is a plain set of literals which has no further structure that would explain why certain literals are part of it and why others are not.
We show how argumentation theory can help to explain why a literal is or is not contained in a given answer set by defining
two justification methods,
both of which make use of the correspondence between answer sets of a logic program and
stable extensions of the Assumption-Based Argumentation (ABA) framework constructed from the same logic program.
\emph{Attack Trees} justify a literal in argumentation-theoretic terms, i.e. using arguments and attacks between them,
whereas \emph{ABA-Based Answer Set Justifications} express the same justification structure in logic programming terms, that is using literals and their relationships.
Interestingly, an ABA-Based Answer Set Justification corresponds to an admissible fragment of the answer set in question, and
an Attack Tree corresponds to an admissible fragment of the stable extension corresponding to this answer set.
\end{abstract}

\begin{keywords}
 Answer Set Programming, Assumption-Based Argumentation, Stable Extension, Explanation
\end{keywords}

%%%%%%%%%%%%%%%%%%%%%%%%%%%%%%%%%%%%%%%%%%%%%%%%%%%%%%%%%%%%%%%%%%%%%%%%%%%%%%%%%%%%%%%%%%%%%%%%%%%%%%%%%%%%%%%%%%%%%%%%%%%%%%%%%%%%%%%%%%%%%%%%%%%%%%%%%%%%%%%%%%
%%%%%%%%%%%%%%%%%%%%%%%%%%%%%%%%%%%%%%%%%%%%%%%%%%%%%%%%%%%%%%%%%%%%%%%%%%%%%%%%%%%%%%%%%%%%%%%%%%%%%%%%%%%%%%%%%%%%%%%%%%%%%%%%%%%%%%%%%%%%%%%%%%%%%%%%%%%%%%%%%%
\section{Introduction}
\label{sec:intro}

Answer Set Programming (ASP) is one of the most widely used non-monotonic reasoning paradigms, allowing to efficiently compute solutions to problems involving defaults and exceptions \cite{answer_sets}.
A problem is represented in terms of a logic program, that is if-then clauses containing negation-as-failure (NAF) literals which express exception conditions for the applicability of clauses.
The solutions to the problem are then given by the declarative answer set semantics \cite{AS_gelfond} for the logic program.
ASP is applied in a variety of different areas, ranging from bioinformatics \cite{ASP_bioinformatics} over music composition \cite{ASP_music} to multi-agent systems \cite{ASP_multiagent}.
Answer set solvers like clingo \cite{clingo}, smodels \cite{smodels}, and DLV \cite{dlv} provide efficient tools for the computation of answer sets.

Especially with respect to the application of ASP in real-world scenarios involving non-experts,
it is useful to have an explanation as to why something does or does not belong to a solution.
As an example, consider a medical decision support system which operates on a logic program comprising general treatment decision rules along with facts about a patient's medical conditions.
The answer sets of such a logic program contain treatment suggestions or exclusions for the given patient.
For a doctor using this medical decision support system, it is important to know why the system suggests a certain treatment
as well as why a treatment is not part of a solution.
In ASP terms, the doctor needs a justification as to why a literal is or is not contained in an answer set.
This is particularly important if the doctor's intended treatment decision disagrees with the system's suggestion.
However, no matter whether an answer set is computed by an answer set solver or by hand using trial and error, it is a plain set of literals.
That is to say that an answer set does not provide any justification as to why certain literals are part of it whereas others are not.

In this paper we present two methods for justifying literals with respect to an answer set of a consistent logic program
by applying argumentation theory, another widely used technique in the field of non-monotonic reasoning.
Here, we use \emph{Assumption-Based Argumentation} (ABA) \cite{aba_lp,assumption_based}, a structured argumentation framework which constructs arguments from rules and assumptions,
and attacks from the notion of contrary of assumptions.
ABA is particularly suitable for our purpose as it was inspired by logic programming, default logic and other non-monotonic reasoning approaches \cite{aba_lp} which are closely related to ASP.
Due to this connection, it is straight forward to construct the \emph{translated ABA framework} of a logic program, i.e. the ABA framework expressing the same problem as the logic program.
One of the semantics for ABA frameworks is the stable extension semantics \cite{aba_lp,Dung},
which has its roots in the stable model semantics for logic programs.
Since the answer set semantics is based on the stable model semantics as well,
every answer set of a logic program corresponds to a stable extension of the translated ABA framework, and vice versa.
We make use of this connection to justify literals with respect to a given answer set of a consistent logic program by means of
arguments in the context of the corresponding stable extension of the translated ABA framework.

The first justification approach, an \emph{Attack Tree}, expresses how to construct an argument for the literal in question (the supporting argument)
as well as which arguments attack the argument for the literal in question (the attacking arguments);
the same information is provided for all arguments attacking the attacking arguments, and so on.
The second justification approach, an \emph{ABA-Based Answer Set (ABAS) Justification} of a literal, represents the same information as an Attack Tree,
but expressed in terms of literals rather than arguments.
An ABAS Justification comprises facts and NAF literals necessary to derive the literal in question (the ``supporting literals'')
as well as information about literals which are in conflict with the literal in question (the ``attacking literals'').
The same information is provided for all supporting and attacking literals of the literal in question, for all their supporting and attacking literals, and so on.

An Attack Tree is a (possibly infinite) tree with nodes holding arguments, where the argument held by a child node attacks the argument held by the parent node.
Since arguments are trees themselves, indicating which components (rules, assumptions) are necessary to construct the argument,
an Attack Tree has a two-layered structure: It is a tree consisting of trees.
An ABAS Justification is the flattened version of an Attack Tree,
containing literal-pairs which express the different parent-child relations expressed in an Attack Tree.
The relation between arguments in the Attack Tree is represented in terms of literal-pairs which are in an attack relation;
the relation between components of an argument is represented in terms of literal-pairs which are in a support relation.
An ABAS Justification can also be interpreted as a graph, where every literal occurring in a pair forms a node in the graph.
The graph has a support edge between two literal-nodes if these two literals occur as a literal-pair in a support relation in the ABAS Justification.
Analogously, the graph has an attack edge between two literal-nodes if these two literals occur as a literal-pair in an attack relation in the ABAS Justification.

Our justification approaches have two purposes.
On the one hand, they contribute to the field of answer set justification research, 
which has been identified as an important but not yet sufficiently studied research area \cite{explanation_review,asp_expl_position}.
The reason to use ABA for explanations instead of constructing justifications from the logic program straight away in terms of simple derivations or proof trees \cite{explanation_derivation,explanation_prooftree}
is that ABA is conceptually close to logic programs but provides additional concepts and constructs which have been identified as useful for explanation purposes,
such as the notion of arguments and attacks \cite{explanation_argumentsLP,explanation_argumentation}.
On the other hand, our justification approaches also provide a theoretical impact with respect to the relation between non-monotonic reasoning systems.
Even though ASP has been applied to argumentation theory in the sense that an argumentation framework can be equivalently expressed in ASP \cite{argASP,delp_asp},
the converse has not been discussed in the literature.
To the best of our knowledge, Attack Trees and ABAS Justifications are the first approaches applying argumentation theory for ASP, with the 
exception of
\begin{itemize}
 \item early work on manually constructing arguments and attacks from a logic program
according to Toulmin's argument scheme, which then serves as an explanation of the logic program \cite{explanation_argumentsLP}; and
 \item Argumentation-Based Answer Set Justification \cite{arg_based_just} which can be considered as a predecessor of ABAS Justifications.
 Similarly to ABAS Justifications, Argumentation-Based Answer Set Justifications are constructed from arguments and attacks between them,
but using the ASPIC+ argumentation framework \cite{Prakken} instead of ABA.
\end{itemize}

The paper is organized as follows: In Section~\ref{sec:background} we recall some key concepts of ASP and ABA and
give some preliminary definitions and results building upon this background.
Furthermore, we give a motivating (medical) example for ABAS Justifications.
In Section~\ref{sec:transatlion} we show how to translate a logic program into an ABA framework and prove their correspondence with respect to the stable model semantics.
In Section~\ref{sec:attackTrees} we introduce Attack Trees drawn from a translated ABA framework as a first justification method,
show their relationship with abstract dispute trees for ABA \cite{abaDialectical}, and characterize the explanation they provide as an
admissible fragment of the answer set in question.
Based on Attack Trees, we define two forms of ABAS Justifications:
Basic ABA-Based Answer Set Justifications (Section~\ref{sec:ASjust}) demonstrate the main idea of flattening Attack Trees,
yielding a justification in terms of literals and their relations.
Labelled ABA-Based Answer Set Justifications (Section~\ref{sec:ASjust_labelled}) are a more elaborate version of
Basic ABA-Based Answer Set Justifications, following the same flattening strategy, but additionally using labels to solve some deficiencies of the basic variant.
In Section~\ref{sec:relWork} we compare ABAS Justifications to related work and in
Section~\ref{sec:conclusion} we conclude.

%%%%%%%%%%%%%%%%%%%%%%%%%%%%%%%%%%%%%%%%%%%%%%%%%%%%%%%%%%%%%%%%%%%%%%%%%%%%%%%%%%%%%%%%%%%%%%%%%%%%%%%%%%%%%%%%%%%%%%%%%%%%%%%%%%%%%%%%%%%%%%%%%%%%%%%%%%%%%%%%%%
%%%%%%%%%%%%%%%%%%%%%%%%%%%%%%%%%%%%%%%%%%%%%%%%%%%%%%%%%%%%%%%%%%%%%%%%%%%%%%%%%%%%%%%%%%%%%%%%%%%%%%%%%%%%%%%%%%%%%%%%%%%%%%%%%%%%%%%%%%%%%%%%%%%%%%%%%%%%%%%%%%
\section{Background and Preliminaries}
\label{sec:background}

This section describes all necessary background about ASP and ABA to understand the definitions of ABAS Justifications.
In addition, we prove some core results about concepts in ASP and in ABA which have not or have only partially been considered in the literature before.
We then use these to prove our main results in the remainder of the paper.

\subsection{Answer Set Programming}
A \emph{logic program} $\mathcal{P}$ is a (finite) set of clauses of the form $l_0 \leftarrow l_1, \ldots, l_m,$ $not ~ l_{m+1},\\ \ldots, not ~ l_{m+n}$ with $m,n \geq 0$.
All $l_i$ are classical ground\footnote{As conventional in the logic programming literature,
clauses containing variables are shorthand for all their ground instances.} literals,
i.e. atoms $a$ or negated atoms $\neg a$, and $not~ l_{m+1},\ldots, not~ l_{m+n}$ are \emph{negation-as-failure} (NAF) literals.
The classical literal $l_0$ on the left-hand side of the arrow is referred to as the clause's \emph{head}, all literals on the right of the arrow form the \emph{body} of the clause.
If the body of a clause is empty, the head is called a \emph{fact}.

\begin{notation}
The letter $k$ is used for a literal in general, i.e. a classical literal $l$ or a NAF literal $not~ l$.
$\mathcal{HB}_{\mathcal{P}}$ denotes the Herbrand Base of $\mathcal{P}$, that is the set of all ground atoms of $\mathcal{P}$.
$Lit_{\mathcal{P}} = \mathcal{HB}_{\mathcal{P}} \; \cup \; \{\neg a \; | \; a \in \mathcal{HB}_{\mathcal{P}}\}$ is the set of all classical literals of $\mathcal{P}$, and
$\textit{NAF}_{\mathcal{P}} = \{ not~ l \; |\; l \in Lit_{\mathcal{P}} \}$
consists of all NAF literals of $\mathcal{P}$.
We say that $l$ is the \emph{corresponding classical literal} of a NAF literal $not~ l$.
\end{notation}

In the following, we recall the concept of answer sets as introduced in \cite{AS_gelfond}.
Let $\mathcal{P}$ be a logic program not containing NAF literals.
The \emph{answer set} of $\mathcal{P}$, denoted $\mathcal{AS}(\mathcal{P})$, is the smallest set $S \subseteq Lit_{\mathcal{P}}$ such that:
\begin{enumerate}
	\item for any clause $l_0 \leftarrow l_1,\ldots,l_m$ in $\mathcal{P}$: if $l_1,\ldots,l_m \in S$ then $l_0 \in S$; and
	\item $S = Lit_{\mathcal{P}}$ if $S$ contains complementary classical literals $a$ and $\neg a$.
\end{enumerate}

For a logic program $\mathcal{P}$, possibly containing NAF literals,
and any set $S \subseteq Lit_{\mathcal{P}}$, the \emph{reduct} $\mathcal{P}^S$ is obtained from $\mathcal{P}$ by deleting:
\begin{enumerate}
	\item all clauses with $not~ l$ in their bodies where $l \in S$, and
	\item all NAF literals in the remaining clauses.
\end{enumerate}
Then, $S$ is an answer set of $\mathcal{P}$ if it is the answer set of the reduct $\mathcal{P}^S$, i.e. if $S = \mathcal{AS}(\mathcal{P}^S)$.
A logic program is \emph{inconsistent} if it has no answer set or if its only answer set is $Lit_{\mathcal{P}}$; otherwise it is \emph{consistent}.
In the remainder of the paper, and if not stated otherwise, we assume that logic programs are consistent.

Note that answer sets only contain classical literals.
However, if $l \notin S$ for an answer set $S$ of $\mathcal{P}$ and some classical literal $l \in Lit_{\mathcal{P}}$, 
then $not~ l$ is considered satisfied with respect to $S$.
Thus, we introduce the following new definition.

\begin{definition}[Answer Set with NAF literals]
\label{def:AS_NAF}
Let $\mathcal{P}$ be a logic program and let $S \subseteq Lit_{\mathcal{P}}$ be a set of classical literals.
$\Delta_S = \{ not ~ l \in \textit{NAF}_{\mathcal{P}}\; | \; l \notin S \}$ consists of all NAF literals $not~ l$ whose corresponding classical literal $l$ is not contained in $S$.
If $S$ is an answer set of $\mathcal{P}$, then $S_{\textit{NAF}} = S\; \cup\; \Delta_S$ is an \emph{answer set with NAF literals} of $\mathcal{P}$.
\end{definition}

Intuitively, $S_{\textit{NAF}}$ consists of all literals in an answer set $S$ plus all NAF literals which are satisfied with respect to $S$.
For the purpose of proving correspondence between answer sets of a logic program and stable extensions of an argumentation framework in Section~\ref{sec:transatlion},
we introduce a new reformulation of answer sets in terms of modus ponens and prove correspondence with the original definition:

\begin{notation}
\label{not:mp}
 $\vdash_{MP}$ denotes derivability using \emph{modus ponens} on $\leftarrow$ as the only inference rule.
 $\mathcal{P} \; \cup \; \Delta_S$, for $\mathcal{P}$ a logic program and $\Delta_S \subseteq \textit{NAF}_{\mathcal{P}}$,
 denotes the logic program $\mathcal{P} \;\cup\; \{not ~ l \leftarrow\; | \; not ~ l \in \Delta_S\}$.
When used on such $\mathcal{P} \; \cup \; \Delta_S$, $\vdash_{MP}$ treats NAF literals purely syntactically as in \cite{modusponens_lp}
and treats facts $l \leftarrow\:$ as $l \leftarrow true$
 where $\mathcal{P} \; \cup \; \Delta_S \vdash_{MP} true$ for any logic program $\mathcal{P}$ and any set of NAF literals $\Delta_S$.
\end{notation}

\begin{lemma}
 \label{lem:AS_MP}
 Let $\mathcal{P}$ be a consistent logic program and let $S \subseteq Lit_{\mathcal{P}}$ be a set of classical literals.
 \begin{itemize}
  \item $S$ is an answer set of $\mathcal{P}$ if and only if $S = \{ l \in Lit_{\mathcal{P}}\; |\; \mathcal{P} \; \cup \; \Delta_S \vdash_{MP} l\}$.
  \item $S_{\textit{NAF}} = S\; \cup \; \Delta_S$ is an answer set with NAF literals of $\mathcal{P}$ if and only if $S_{\textit{NAF}} = \{ k \; | \; \mathcal{P} \; \cup \; \Delta_S \vdash_{MP} k\}$.
 \end{itemize}
\end{lemma}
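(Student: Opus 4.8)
The plan is to prove the first bullet by a double inclusion argument that goes through the definition of the reduct, and then to obtain the second bullet as an almost immediate consequence. Throughout, write $T = \{ l \in Lit_{\mathcal{P}} \mid \mathcal{P} \cup \Delta_S \vdash_{MP} l\}$ for the set of classical literals MP-derivable from $\mathcal{P}$ augmented with the facts $\Delta_S$. The central observation I would establish first is a bridge lemma between the two syntactic mechanisms: for a fixed $S$, the classical literals derivable by $\vdash_{MP}$ from $\mathcal{P} \cup \Delta_S$ are exactly the classical literals derivable by $\vdash_{MP}$ from the reduct $\mathcal{P}^S$ (together with $true$). The reason is that a clause of $\mathcal{P}$ with body NAF literals $not~l_{m+1}, \ldots, not~l_{m+n}$ contributes a usable MP-step from $\mathcal{P} \cup \Delta_S$ precisely when all of $not~l_{m+1}, \ldots, not~l_{m+n}$ are in $\Delta_S$, i.e. when $l_{m+1}, \ldots, l_{m+n} \notin S$ — and this is exactly the condition under which the clause survives into $\mathcal{P}^S$ (as the clause $l_0 \leftarrow l_1, \ldots, l_m$, with its NAF literals stripped). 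Conversely, clauses with some $not~l_j$ where $l_j \in S$ can never fire from $\mathcal{P}\cup\Delta_S$ because $not~l_j \notin \Delta_S$ and $not~l_j$ is not the head of any rule, so those clauses are dead weight; they are precisely the ones deleted in forming $\mathcal{P}^S$. Hence $T = \{ l \in Lit_{\mathcal{P}} \mid \mathcal{P}^S \vdash_{MP} l \}$.

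Next I would reconcile $\vdash_{MP}$ on the NAF-free program $\mathcal{P}^S$ with the least-fixed-point definition of $\mathcal{AS}(\mathcal{P}^S)$. For a NAF-free program $Q$, let $U = \{ l \in Lit_{\mathcal{P}} \mid Q \vdash_{MP} l \}$. Condition~(1) in the definition of $\mathcal{AS}(Q)$ says exactly that $U$ is closed under the clauses of $Q$ (an MP-step), and an easy induction on the length of MP-derivations shows $U$ is the \emph{smallest} such set; so $U$ coincides with $\mathcal{AS}(Q)$ \emph{unless} the consistency clause~(2) kicks in, i.e. unless $U$ contains a complementary pair $a, \neg a$, in which case $\mathcal{AS}(Q) = Lit_{\mathcal{P}} \neq U$ in general. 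This is where the hypothesis that $\mathcal{P}$ is \emph{consistent} is used: I would argue that if $S$ is an answer set then $\mathcal{P}^S$ does not derive a complementary pair (otherwise $\mathcal{AS}(\mathcal{P}^S) = Lit_{\mathcal{P}}$, making $Lit_{\mathcal{P}}$ the only answer set, contradicting consistency), and conversely if $T = S$ for the set $T$ above then one must separately note that $S \neq Lit_{\mathcal{P}}$ forces the same. So on consistent programs the two notions of "derivable classical literals from the reduct" agree with $\mathcal{AS}(\mathcal{P}^S)$.

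Putting the two bridges together: $T = \mathcal{AS}(\mathcal{P}^S)$, and therefore $S = T$ iff $S = \mathcal{AS}(\mathcal{P}^S)$, which by definition is exactly "$S$ is an answer set of $\mathcal{P}$". That proves the first bullet. For the second bullet, recall $S_{\textit{NAF}} = S \cup \Delta_S$ by definition of an answer set with NAF literals, and $\{ k \mid \mathcal{P} \cup \Delta_S \vdash_{MP} k \}$ splits as its classical part, which is $T$, together with its NAF part; the NAF part is exactly $\Delta_S$, since the only way to MP-derive a NAF literal $not~l$ from $\mathcal{P} \cup \Delta_S$ is to use one of the added facts $not~l \leftarrow$ (NAF literals never appear as rule heads in $\mathcal{P}$), so the derivable NAF literals are precisely those in $\Delta_S$. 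Hence $\{ k \mid \mathcal{P} \cup \Delta_S \vdash_{MP} k\} = T \cup \Delta_S$, which equals $S \cup \Delta_S = S_{\textit{NAF}}$ iff $T = S$; by the first bullet this holds iff $S$ is an answer set, i.e. iff $S_{\textit{NAF}}$ is an answer set with NAF literals of $\mathcal{P}$.

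I expect the main obstacle to be the careful bookkeeping around the consistency clause~(2): one must be sure that, on the one side, an answer set $S$ genuinely yields $T = \mathcal{AS}(\mathcal{P}^S)$ without clause~(2) distorting things, and on the other side, that the equation $S = \{ l \mid \mathcal{P}\cup\Delta_S \vdash_{MP} l\}$ cannot hold "accidentally" for some $S$ that is not a least fixed point because $\mathcal{AS}(\mathcal{P}^S)$ has collapsed to $Lit_{\mathcal{P}}$. The rest — the clause-by-clause matching between firing under $\vdash_{MP}$ on $\mathcal{P} \cup \Delta_S$ and survival in $\mathcal{P}^S$, and the induction identifying MP-derivability with the least closed set — is routine.
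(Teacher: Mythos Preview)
Your proposal is correct and rests on the same underlying mechanism as the paper: matching clauses that can fire from $\mathcal{P}\cup\Delta_S$ with clauses that survive into the reduct $\mathcal{P}^S$. The organization differs. The paper argues the two inclusions $S\subseteq T$ and $T\subseteq S$ directly, tracing a single clause each way through the reduct construction, without isolating your intermediate identity $T=\{l\mid \mathcal{P}^S\vdash_{MP} l\}$. You instead factor the argument into two bridges (first $\mathcal{P}\cup\Delta_S$ versus $\mathcal{P}^S$, then MP-closure versus $\mathcal{AS}(\mathcal{P}^S)$) and compose them. This buys you a cleaner separation of concerns and, in particular, an explicit treatment of the consistency clause~(2) in the definition of $\mathcal{AS}$, which the paper's proof passes over silently. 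For the second bullet, both you and the paper proceed identically: split derivable literals into classical and NAF parts, observe that the NAF part is exactly $\Delta_S$ since NAF literals never occur as heads in $\mathcal{P}$, and reduce to the first bullet.
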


\begin{proof}
 We prove both items:
 \begin{itemize}
  \item If $S$ is an answer set of $\mathcal{P}$ then $S = \mathcal{AS}(\mathcal{P}^S)$. 
  This means that $\forall l \in S$ there exists a clause $l \leftarrow l_1, \ldots, l_m \: \in \mathcal{P}^S$ such that $l_1, \ldots, l_m \in S$.
  It follows that there exists a clause $l \leftarrow l_1, \ldots, l_m, not ~ l_{m+1}, \ldots, not ~ l_{m+n} \in \mathcal{P}$ such that $l_1, \ldots, l_m \in S$ and 
  $l_{m+1}, \ldots, l_{m+n} \notin S$. Then, by Definition~\ref{def:AS_NAF}, $ not ~ l_{m+1}, \ldots, not ~ l_{m+n} \in \Delta_S$.
  Thus, $\mathcal{P} \; \cup \; \Delta_S \vdash_{MP} l$.\\
  For the other direction, if $\mathcal{P} \; \cup \; \Delta_S \vdash_{MP} l$ then (1) $l \in \Delta_S$ or (2) there exists a clause
  $l \leftarrow l_1, \ldots, l_m, not ~ l_{m+1}, \ldots, not ~ l_{m+n} \in \mathcal{P}$ such that $\forall l_i (1 \leq i \leq m):
  \mathcal{P} \; \cup \; \Delta_S \vdash_{MP} l_i$ and $\forall not~ l_j (m+1 \leq j \leq m+n): \mathcal{P} \; \cup \; \Delta_S \vdash_{MP} not ~ l_j$.
  In the first case, $l$ is a NAF literal which should not be part of $S$.
  This is satisfied since $l \notin Lit_{\mathcal{P}}$ and therefore $l \notin S = \{ l \in Lit_{\mathcal{P}}\; |\; \mathcal{P} \; \cup \; \Delta_S \vdash_{MP} l\}$.
  In the second case, since $\mathcal{P}$ contains no clause with a NAF literal in its head it follows that $not~ l_j \in \Delta_S$, i.e.
  $\forall l_j: l_j \notin S$. Then, by definition of reduct, $l \leftarrow l_1, \ldots, l_m \in \mathcal{P}^S$.
  Since $\mathcal{P} \; \cup \; \Delta_S \vdash_{MP} l_i$, $l_i \in S$, thereby satisfying the condition of an answer set for $l$ to be in $S$.
  \item If $S_{NAF}$ is an answer set with NAF literals then by Definition~\ref{def:AS_NAF}, $S_{NAF} = S\; \cup\; \Delta_S$.
  Then, by the first item $S_{NAF} = \{ l \in Lit_{\mathcal{P}}\; |\; \mathcal{P} \; \cup \; \Delta_S \vdash_{MP} l\} \; \cup \; \Delta_S$.
  By Notation~\ref{not:mp}, $\forall not~ l_i \in \Delta_S:  \Delta_S \vdash_{MP} not~l_i$ and therefore 
  $\mathcal{P} \; \cup \; \Delta_S \vdash_{MP} not~ l_i$ for any logic program $\mathcal{P}$. Thus, not restricting the conclusions of modus ponens to $Lit_{\mathcal{P}}$
  yields $S_{NAF} = \{ k \; | \; \mathcal{P} \; \cup \; \Delta_S \vdash_{MP} k\}$.\\
  For the other direction, if $\mathcal{P} \; \cup \; \Delta_S \vdash_{MP} k$ then by the proof of the first item $k \in \Delta_S$ or 
  $k \in S$ where $S$ is an answer set. Thus, $S_{NAF}$ is equivalent to $S\; \cup \; \Delta_S$, satisfying Definition~\ref{def:AS_NAF}.
 \end{itemize}
 \hfill
\end{proof}

%%%%%%%%%%%%%%%%%%%%%%%%%%%%%%%%%%%%%%%%%%%%%%%%%%%%%%%%%%%%%%%%%%%%%%%%%%%%%%%%%%%%%%%%%%%%%%%%%%%%%%%%%%%%%%%%%%%%%%%%%%%%%%%%%%%%%%%%%%%%%%%%%%%%%%%%%%%%%%%%%%
\subsection{An intuitive example of ASP}
\label{sec:background_example}

 Let Dr. Smith be an ophtalmologist (an eye doctor) and let one of his patients be Peter, who is diagnosed by Dr. Smith as being shortsighted.
 Based on this diagnosis, Dr. Smith has to decide on the most suitable treatment for Peter, taking into account
 the additional information he has about his patient, namely that Peter is afraid to touch his own eyes, that he is a student, and that he likes to do sports.
 Based on this information and his specialist knowledge, Dr. Smith decides that the most appropriate treatment for Peter's shortsightedness is laser surgery.
 Dr. Smith now checks whether this decision is in line with the recommendation of his decision support system, which is implemented in ASP.
 
 \begin{example}
 \label{ex:doctor}
 The following logic program $\mathcal{P}_{doctor}$ represents the decision support system used by Dr. Smith.
 It encodes some general world knowledge as well as 
 an ophtalmologist's specialist knowledge about the possible treatments of shortsightedness.
 $\mathcal{P}_{doctor}$ also captures the additional information that Dr. Smith has about his shortsighted patient Peter.
 \begin{align*}
  tightOnMoney &\leftarrow student, not~ richParents\\
  caresAboutPracticality &\leftarrow likesSports\\
  correctiveLens & \leftarrow shortSighted, not~ laserSurgery\\
  laserSurgery &\leftarrow shortSighted,  not~ tightOnMoney, not~ correctiveLens\\
  glasses &\leftarrow correctiveLens, not~caresAboutPracticality,\\ &\phantom{\leftarrow \; \; } not~contactLens\\
  contactLens &\leftarrow correctiveLens, not~\textit{afraidToTouchEyes},\\ &\phantom{\leftarrow \; \; } not~longSighted, not~glasses\\
  intraocularLens &\leftarrow correctiveLens, not~glasses, not~contactLens\\
  shortSighted &\leftarrow\\
  \textit{afraidToTouchEyes} &\leftarrow\\
  student &\leftarrow\\
  likesSports &\leftarrow
 \end{align*}
 $\mathcal{P}_{doctor}$ has only one answer set $S_{doctor} = \{shortSighted, \textit{afraidToTouchEyes},\\ student, likesSports,tightOnMoney, correctiveLens, caresAboutPracticality,\\ intraocularLens\}$.
 \end{example}
 
To Dr. Smith's surprise, the answer set computed by the decision support system contains the literal $intraocularLens$ but not $laserSurgery$,
suggesting that Peter should get intraocular lenses instead of having laser surgery.
 Dr. Smith now finds himself in the difficult situation to determine whether to trust his own treatment decision 
 or whether to take up the system's suggestion even without understanding it.
 Providing Dr. Smith with an explanation of the system's treatment suggestion or with an explanation as to why his own intended decision
 might be wrong would make it considerably easier for Dr. Smith to decide whether to trust himself or the decision support system.

We will use this example of Dr. Smith and his patient Peter to demonstrate our explanation approaches and to show how they can be applied to explain the solutions of
 a decision support system which is based on ASP.

%%%%%%%%%%%%%%%%%%%%%%%%%%%%%%%%%%%%%%%%%%%%%%%%%%%%%%%%%%%%%%%%%%%%%%%%%%%%%%%%%%%%%%%%%%%%%%%%%%%%%%%%%%%%%%%%%%%%%%%%%%%%%%%%%%%%%%%%%%%%%%%%%%%%%%%%%%%%%%%%%%
\subsection{ABA frameworks}
Much of the literature on argumentation in Artificial Intelligence focuses on two kinds of argumentation frameworks.
Abstract Argumentation \cite{Dung} assumes that a set of abstract entities (the arguments) are given along with an attack relation between them.
In contrast, structured argumentation frameworks such as \cite{Prakken,Garcia,Governatori} provide mechanisms for the construction of arguments from given knowledge,
mostly in the form of rules,
and for identifying the attack relation between arguments based on the structure of arguments.
We will here focus on the structured argumentation framework of \cite{aba_lp,assumption_based} called Assumption-Based Argumentation (ABA).

An \emph{ABA framework}  \cite{assumption_based} is a tuple $\langle \mathcal{L}, \mathcal{R}, \mathcal{A},\, \bar{\;}\,\rangle$,
where
\begin{itemize}
 \item $(\mathcal{L}, \mathcal{R})$ is a deductive system with\\
 $\mathcal{L}$ a formal language and\\
 $\mathcal{R}$ a set of inference rules of the form $\alpha_0 \leftarrow \alpha_1,\ldots, \alpha_m$ such that $m \geq 0$ and
 all $\alpha_i$ are sentences in $\mathcal{L}$;
 \item $\mathcal{A} \subseteq \mathcal{L}$ is a non-empty set of \emph{assumptions};
 \item $\bar{\;}$ is a total mapping from $\mathcal{A}$ into $\mathcal{L}$
defining the \emph{contrary} of each assumption, where $\overline{\alpha}$ denotes the contrary of $\alpha \in \mathcal{A}$.
\end{itemize}

Note that in this paper we use the same notation $\leftarrow$ for inference rules in ABA and for clauses in a logic program.
This will facilitate the presentation of our methods later.
We also adopt the logic programming terminology of ``head'', ``body'', ``fact'', and $\vdash_{MP}$ (see Notation~\ref{not:mp}) for ABA frameworks.
The following definitions are restricted to \emph{flat} ABA frameworks, where assumptions do not occur as the head of inference rules,
as we only need this kind of framework for our purposes.

In this paper we use a notion of ABA argument which is slightly different from the definitions in the ABA literature, in that
an ABA argument as defined here comprises not only the set of assumptions supporting this argument as in standard ABA,
but also the set of facts used in the construction
of this argument. 

\begin{definition}[ABA Argument]
\label{def:argument}
 Let $\langle \mathcal{L}, \mathcal{R}, \mathcal{A},\, \bar{\;}\,\rangle$ be an ABA framework.
 An argument for (the \emph{conclusion}) $\alpha \in \mathcal{L}$ supported by a set of \emph{assumption-premises} $AP \subseteq \mathcal{A}$  and a set of
 \emph{fact-premises} $FP \subseteq \{\beta \; | \;\beta \leftarrow \; \in \mathcal{R}\}$
 is a finite tree, where every node holds a sentence in $\mathcal{L}$, such that
 \begin{itemize}
  \item the root node holds $\alpha$;
  \item for every node $N$
  \begin{itemize}
   \item if $N$ is a leaf then $N$ holds either an assumption or a fact;
   \item if $N$ is not a leaf and $N$ holds the sentence $\gamma_0$, then there is an inference rule $\gamma_0 \leftarrow \gamma_1,\ldots,\gamma_m$ ($m > 0$) and
   $N$ has $m$ children, holding $\gamma_1,\ldots,\gamma_m$ respectively;
  \end{itemize}
  \item $AP$ is the set of all assumptions held by leaves;
  \item $FP$ is the set of all facts held by leaves.
 \end{itemize}
\end{definition}

We now define some further terminology for special kinds of arguments and for naming arguments in general.
\begin{notation}
  An argument for $\alpha$ supported by $AP$ and $FP$ is denoted $(AP, FP) \vdash \alpha$.
  We often use a unique name to denote an argument, e.g. $A: (AP,FP) \vdash \alpha$ is an argument with name $A$.
  With an abuse of notation, the name of an argument sometimes stands for the whole argument, for example $A$ denotes the argument $A: (AP,FP) \vdash \alpha$.
  An argument of the form $(\{\alpha\},\emptyset) \vdash \alpha$ is called \emph{assumption-argument},
  and similarly an argument of the form $(\emptyset,\{\alpha\}) \vdash \alpha$ is called \emph{fact-argument}.
  Given some argument $A:(AP, FP) \vdash \alpha$ with $\beta \in AP$ and $\gamma \in FP$, we say that $(\{\beta\},\emptyset) \vdash \beta$
  is the \emph{assumption-argument of the assumption-premise} $\beta$ of argument $A$
  and that $(\emptyset,\{\gamma\}) \vdash \gamma$ is the \emph{fact-argument of the fact-premise} $\gamma$ of $A$.
 \end{notation}
 
 Definition~\ref{def:argument} generates the notion of argument in \cite{assumption_based}:
 If $(AP, FP) \vdash \alpha$ is an argument according to Definition~\ref{def:argument}, then $AP \vdash \alpha$ is an argument in \cite{assumption_based}.
 Conversely, if $AP \vdash \alpha$ is an argument in \cite{assumption_based}, then there exists some
 $FP \subseteq \{\beta \; | \;\beta \leftarrow \; \in \mathcal{R}\}$ such that 
 $(AP, FP) \vdash \alpha$ is an argument according to Definition~\ref{def:argument}.

 ABA arguments can be naturally formulated in terms of $\vdash_{MP}$, as follows:
 
 \begin{lemma}
  \label{lem:arg_mp}
Let $\langle \mathcal{L}, \mathcal{R}, \mathcal{A},\, \bar{\;}\,\rangle$ be an ABA framework.
$(AP, FP) \vdash \alpha$ is an argument in $\langle \mathcal{L}, \mathcal{R}, \mathcal{A},\, \bar{\;}\,\rangle$ if and only if
$\mathcal{R} \; \cup \; AP \vdash_{MP} \alpha$ and $AP \subseteq \mathcal{A}$.
 \end{lemma}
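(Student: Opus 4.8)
The plan is to prove the two directions separately, each by an induction that follows the recursive structure of the relevant object: the argument tree in the ``only if'' direction, and the modus ponens derivation in the ``if'' direction. A preliminary observation useful for both directions is that any subtree of an argument tree is again an argument tree, for its own root, with assumption- and fact-premise sets read off from its own leaves. I also record that $\vdash_{MP}$ is monotone, i.e.\ adding rules or facts never destroys a derivation.

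For ``only if'', assume $(AP,FP)\vdash\alpha$ with underlying tree $T$ and induct on the height of $T$. If $T$ is a single node holding $\alpha$, then by Definition~\ref{def:argument} $\alpha$ is either an assumption, so $AP=\{\alpha\}\subseteq\mathcal{A}$ and $\alpha\leftarrow\;$ lies in the program $\mathcal{R}\cup AP$, giving $\mathcal{R}\cup AP\vdash_{MP}\alpha$; or a fact, so $\alpha\leftarrow\;\in\mathcal{R}$, which $\vdash_{MP}$ reads as $\alpha\leftarrow true$, and since $\mathcal{R}\cup AP\vdash_{MP}true$ always we again get $\mathcal{R}\cup AP\vdash_{MP}\alpha$ (here $AP=\emptyset\subseteq\mathcal{A}$). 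Otherwise the root holds $\gamma_0=\alpha$ with children heading subtrees that, by the preliminary observation, are arguments $(AP_i,FP_i)\vdash\gamma_i$ for some rule $\gamma_0\leftarrow\gamma_1,\dots,\gamma_m\in\mathcal{R}$. Since the leaves of $T$ are exactly the union of the leaves of those subtrees, $AP=\bigcup_i AP_i$ and $FP=\bigcup_i FP_i$; the induction hypothesis gives $\mathcal{R}\cup AP_i\vdash_{MP}\gamma_i$, hence $\mathcal{R}\cup AP\vdash_{MP}\gamma_i$ by monotonicity, and one modus ponens step with the rule yields $\mathcal{R}\cup AP\vdash_{MP}\alpha$, with $AP=\bigcup_i AP_i\subseteq\mathcal{A}$.

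For ``if'', assume $\mathcal{R}\cup AP\vdash_{MP}\alpha$ with $AP\subseteq\mathcal{A}$ and induct on a fixed derivation of $\alpha$. According to its last step, either $\alpha\leftarrow\;$ is a fact of $\mathcal{R}$ --- take the fact-argument $(\emptyset,\{\alpha\})\vdash\alpha$; or $\alpha\in AP\subseteq\mathcal{A}$ --- take the assumption-argument $(\{\alpha\},\emptyset)\vdash\alpha$; or there is a rule $\gamma_0\leftarrow\gamma_1,\dots,\gamma_m\in\mathcal{R}$ with $\gamma_0=\alpha$ and strictly shorter derivations of each $\gamma_i$ from $\mathcal{R}\cup AP$. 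In the last case the induction hypothesis supplies arguments $(AP_i,FP_i)\vdash\gamma_i$; hanging their trees below a new root holding $\alpha$, justified by that rule, yields a finite tree which one checks against Definition~\ref{def:argument} --- every leaf still holds an assumption or a fact, each non-leaf still matches a rule --- so it is an argument for $\alpha$ with assumption-premises $\bigcup_i AP_i$ and fact-premises $\bigcup_i FP_i$.

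The derivation-level inductions themselves are routine; the work is in the bookkeeping of the premise sets. In the ``only if'' direction one must argue cleanly that the leaf-assumptions and leaf-facts of $T$ decompose as the union over the immediate subtrees, so that the induction hypothesis applies and recombines correctly. In the ``if'' direction one must verify that the tree assembled from the sub-arguments again satisfies every clause of Definition~\ref{def:argument}, in particular that the recombined assumption set is still contained in $\mathcal{A}$, and one should keep track of which assumptions the chosen derivation actually uses so that the premise sets of the constructed argument are the intended ones. I expect this set-level reconciliation, rather than the inductions, to be the step that requires the most care.
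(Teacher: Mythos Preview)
Your proof is considerably more detailed than the paper's, which reads in full: ``This follows directly from the definition of arguments.'' So there is no meaningful comparison of approaches; you have supplied the argument the paper omits, via the natural structural inductions on the argument tree and on the modus-ponens derivation.

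You are right to flag the premise-set bookkeeping as the delicate point, and in fact this exposes an imprecision in the lemma as stated rather than a flaw in your argument. In the ``if'' direction the set $FP$ appears on the left of the biconditional but not on the right, and your construction produces an argument whose assumption-premises are $\bigcup_i AP_i\subseteq AP$, not necessarily equal to $AP$: if $AP$ contains an assumption that the chosen derivation never uses, no argument tree can have \emph{exactly} $AP$ as its leaf-assumption set, since Definition~\ref{def:argument} takes $AP$ to be the set of assumptions actually occurring at leaves. What your construction establishes --- and what the paper's applications (e.g.\ the proofs of Theorems~\ref{lem:corr_ext} and~\ref{lem:corr_stable}) actually require --- is the existential reading: $\mathcal{R}\cup AP\vdash_{MP}\alpha$ with $AP\subseteq\mathcal{A}$ iff there exist $AP'\subseteq AP$ and some $FP'$ with $(AP',FP')\vdash\alpha$ an argument. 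Your inductions prove exactly this, so the substance is correct; just be explicit that this is the form you obtain, rather than suggesting the mismatch can be repaired by more careful tracking.
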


 \begin{proof}
  This follows directly from the definition of arguments.\hfill
 \end{proof}

 The attack relation between arguments defined here is a slight variation of the notion in \cite{assumption_based}, as it considers arguments with both assumption- and fact-premises.

\begin{definition}[Attacks]
\label{def:attacks}
 An argument $(AP_1,FP_1) \vdash \alpha_1$ \emph{attacks} an argument $(AP_2,FP_2) \vdash \alpha_2$ \emph{on the assumption-premise} $\alpha_3$
 if and only if $\alpha_3 \in AP_2$ and $\overline{\alpha_3} = \alpha_1$.
 Equivalently, we say that $(AP_2,FP_2) \vdash \alpha_2$ \emph{is attacked by} $(AP_1,FP_1) \vdash \alpha_1$ or that $(AP_1,FP_1) \vdash \alpha_1$ is an \emph{attacker of} $(AP_2,FP_2) \vdash \alpha_2$.\\
 A set of arguments $X$ \emph{attacks} an argument $B$ if and only if there is an argument $A \in X$ which attacks $B$.
 A set of arguments $X_1$ attacks a set of arguments $X_2$ if and only if $X_1$ attacks some argument $B \in X_2$.
\end{definition}

This definition of attack is purely based on the notion of contrary of assumptions, i.e. 
fact-premises only occur as part of the argument but do not directly influence the attack relation.
Since arguments as introduced here and in \cite{assumption_based} correspond,
the attack relation in Definition~\ref{def:attacks} directly correspond to attacks in \cite{assumption_based}:
If an argument $(AP_1,FP_1) \vdash \alpha_1$ attacks an argument $(AP_2,FP_2) \vdash \alpha_2$ according to Definition~\ref{def:attacks}, then 
$AP_1 \vdash \alpha_1$ attacks $AP_2 \vdash \alpha_2$ as defined in \cite{assumption_based}.
Conversely, if $AP_1 \vdash \alpha_1$ attacks $AP_2 \vdash \alpha_2$ as defined in \cite{assumption_based},
then there exist $FP_1, FP_2 \subseteq \{\beta \; | \;\beta \leftarrow \; \in \mathcal{R}\}$
such that $(AP_1,FP_1) \vdash \alpha_1$ attacks $(AP_2,FP_2) \vdash \alpha_2$ according to Definition~\ref{def:attacks}.

%%%%%%%%%%%%%%%%%%%%%%%%%%%%%%%%%%%%%%%%%%%%%%%%%%%%%%%%%%%%%%%%%%%%%%%%%%%%%%%%%%%%%%%%%%%%%%%%%%%%%%%%%%%%%%%%%%%%%%%%%%%%%%%%%%%%%%%%%%%%%%%%%%%%%%%%%%%%%%%%%%
\subsection{ABA semantics}

The semantics of argumentation frameworks are given in terms of extensions, i.e. sets of arguments deemed to be ``winning''.
For our purposes we focus on the admissible and on the stable extension semantics introduced in \cite{Dung} for Abstract Argumentation and in \cite{aba_lp} for ABA.
Let $\langle \mathcal{L}, \mathcal{R}, \mathcal{A},\, \bar{\;}\,\rangle$ be an ABA framework and let $X$ be a set of arguments in $\langle \mathcal{L}, \mathcal{R}, \mathcal{A},\, \bar{\;}\,\rangle$.
\begin{itemize}
\item $X$ \emph{defends} an argument $A$ if and only if $X$ attacks all attackers of $A$.
\item $X$ is an \emph{admissible extension} of $\langle \mathcal{L}, \mathcal{R}, \mathcal{A},\, \bar{\;}\,\rangle$
if and only if $X$ does not attack itself and $X$ defends all arguments in $X$.
\item $X$ is a \emph{stable extension} of $\langle \mathcal{L}, \mathcal{R}, \mathcal{A},\, \bar{\;}\,\rangle$ if and only if $X$ does not attack itself
and $X$ attacks each argument in $\langle \mathcal{L}, \mathcal{R}, \mathcal{A},\, \bar{\;}\,\rangle$ which does not belong to $X$,
or, equivalently, if and only if $X = \{A \textit{ in } \langle \mathcal{L}, \mathcal{R}, \mathcal{A},\, \bar{\;}\,\rangle\; | \; X \textit{ does not attack } A \}$.
\end{itemize}

Admissible extensions can also be defined using trees of attacking arguments.

An \emph{abstract dispute tree} \cite{abaDialectical} for an ABA argument $A$ is a (possibly infinite) tree such that:
\begin{enumerate}
 \item Every node in the tree is labelled by an argument and is assigned the status of \emph{proponent} or \emph{opponent} node, but not both.
 \item The root is a proponent node labelled by $A$.
 \item For every proponent node $N$ labelled by an argument $B$ and for every argument $C$ attacking $B$, there exists a child of $N$
 which is an opponent node labelled by $C$.
 \item For every opponent node $N$ labelled by an argument $B$, there exists exactly one child of $N$
 which is a proponent node labelled by an argument which attacks $B$.
 \item There are no other nodes in the tree except those given by 1-4 above.
\end{enumerate}

An abstract dispute tree is \emph{admissible} \cite{assumption_based} if and only if no argument labels both a proponent and an opponent node.
It has been shown that the set of all arguments labelling proponent nodes in an admissible dispute tree is an admissible extension \cite{aba_abstract}.
We will use this result to characterize our justification approaches.

We now look at some properties of the stable extension semantics which will be used throughout the paper.
Lemma~\ref{lem:stable_ext} characterizes a stable extension in terms of
the assumption-premises of arguments contained in this stable extension as all arguments not attacked by this stable extension.
\begin{lemma}
 \label{lem:stable_ext}
 Let $\langle \mathcal{L}, \mathcal{R}, \mathcal{A},\, \bar{\;}\,\rangle$ be an ABA framework and let $X$ be a set of arguments in $\langle \mathcal{L}, \mathcal{R}, \mathcal{A},\, \bar{\;}\,\rangle$.
 $X$ is a stable extension of $\langle \mathcal{L}, \mathcal{R}, \mathcal{A},\, \bar{\;}\,\rangle$ if and only if $X = \{(AP,FP) \vdash \alpha\; | \; AP \subseteq \Lambda_X\}$ where
 $\Lambda_X = \{ \beta \in \mathcal{A} \; | \; \nexists (AP,FP) \vdash \overline{\beta} \in X\}$.
\end{lemma}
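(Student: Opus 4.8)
The plan is to reduce the statement to the ``equivalent'' characterization of stable extensions already recorded just above the lemma, namely that $X$ is a stable extension of $\langle \mathcal{L}, \mathcal{R}, \mathcal{A},\, \bar{\;}\,\rangle$ if and only if $X = \{A \textit{ in } \langle \mathcal{L}, \mathcal{R}, \mathcal{A},\, \bar{\;}\,\rangle \mid X \textit{ does not attack } A\}$. Given this, it suffices to prove the set identity
\[
\{A \textit{ in } \langle \mathcal{L}, \mathcal{R}, \mathcal{A},\, \bar{\;}\,\rangle \mid X \textit{ does not attack } A\} = \{(AP,FP) \vdash \alpha \mid AP \subseteq \Lambda_X\},
\]
after which the lemma follows immediately by substitution.

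To prove the set identity, I would fix an arbitrary argument $A \colon (AP,FP) \vdash \alpha$ in the framework and unfold what it means for $X$ not to attack $A$. By Definition~\ref{def:attacks}, $X$ attacks $A$ precisely when some argument $C \in X$ attacks $A$, and $C$ attacks $A$ precisely when there is an assumption $\beta \in AP$ whose contrary $\overline{\beta}$ is the conclusion of $C$; the fact-premises $FP$ of $A$ play no role, since attacks are defined solely via contraries of assumption-premises. Hence $X$ attacks $A$ if and only if there exist $\beta \in AP$ and an argument $(AP',FP') \vdash \overline{\beta} \in X$. Negating, $X$ does not attack $A$ if and only if for every $\beta \in AP$ there is no argument $(AP',FP') \vdash \overline{\beta}$ in $X$, which by the definition of $\Lambda_X = \{\beta \in \mathcal{A} \mid \nexists (AP,FP) \vdash \overline{\beta} \in X\}$ is exactly the condition $AP \subseteq \Lambda_X$. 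This establishes the set identity, and combining it with the equivalent characterization of stable extensions quoted above gives that $X$ is a stable extension if and only if $X = \{(AP,FP) \vdash \alpha \mid AP \subseteq \Lambda_X\}$, as required.

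There is no genuine obstacle here; the argument is a careful unwinding of Definition~\ref{def:attacks} together with the definition of $\Lambda_X$. The only points that deserve attention are the direction of the quantifiers --- ``$X$ does not attack $A$'' asserts the non-existence of an attacker \emph{for every} assumption-premise of $A$, which matches ``$AP \subseteq \Lambda_X$'' rather than the weaker ``$AP \cap \Lambda_X \neq \emptyset$'' --- and the observation that fact-premises never enter the attack relation, so they impose no constraint on membership in a stable extension beyond that imposed by the assumption-premises. (If one prefers not to invoke the ``equivalent'' reformulation, the same computation shows directly that $X$ not attacking itself amounts to $AP \subseteq \Lambda_X$ for all $(AP,FP) \vdash \alpha \in X$, and that $X$ attacking every argument outside $X$ amounts to every $(AP,FP) \vdash \alpha$ with $AP \subseteq \Lambda_X$ lying in $X$.)
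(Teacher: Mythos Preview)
Your proposal is correct and follows essentially the same route as the paper's proof: both start from the equivalent characterization $X = \{A \mid X \text{ does not attack } A\}$ and then unfold Definition~\ref{def:attacks} to rewrite ``$X$ does not attack $A$'' as $AP \subseteq \Lambda_X$. Your presentation is somewhat more explicit about the quantifier manipulation and the irrelevance of fact-premises, but the argument is the same.
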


\begin{proof}
Similar to the proof of Theorem 3.10 in \cite{aba_lp}:
By the definition of stable extension, $X$ is a stable extension if and only if
$X  = \{A \textit{ in } \langle \mathcal{L}, \mathcal{R}, \mathcal{A},\, \bar{\;}\,\rangle\; |\\ \; X \textit{ does not attack } A \}$.
Then, $X = \{ (AP_1,FP_1) \vdash \alpha_1 \; | \; \nexists (AP_2,FP_2) \vdash \alpha_2 \in X\\ \textit{attacking } (AP_1,FP_1) \vdash \alpha_1\}$
by Definitions \ref{def:argument} and \ref{def:attacks}, and
$X = \{ (AP_1,FP_1) \vdash \alpha_1 \; | \; \nexists (AP_2,FP_2) \vdash \alpha_2 \in X \textit{ s.t. }  \beta \in AP_1,\overline{\beta} = \alpha_2\}$
by Definition~\ref{def:attacks}.
This can be split into $X = \{ (AP_1,FP_1) \vdash \alpha_1 \; | \; \forall \beta \in AP_1: \beta \in \Lambda_X\}$
where $\Lambda_X = \{ \beta \in \mathcal{A} \; | \; \nexists (AP_2,FP_2) \vdash  \alpha_2 \in X \textit{ s.t. } \alpha_2 = \overline{\beta}\}$.
\hfill
\end{proof}

After defining a stable extension in terms of the properties of its arguments,
we now take a closer look at conditions for an argument to be or not to be contained in a stable extension.
The following lemma characterizes the arguments contained in a stable extension:
An argument is part of a stable extension if and only if the assumption-arguments of all its assumption-premises
and the fact-arguments of all its fact-premises are in this stable extension.

\begin{lemma}
 \label{lem:stable_ext_support}
 Let $\langle \mathcal{L}, \mathcal{R}, \mathcal{A},\, \bar{\;}\,\rangle$ be an ABA framework and let $X$ be a stable extension of $\langle \mathcal{L}, \mathcal{R}, \mathcal{A},\, \bar{\;}\,\rangle$.
 $(AP, FP) \vdash \alpha \in X$ if and only if $\forall \beta \in AP$ it holds that $(\{\beta\}, \emptyset) \vdash \beta \in X$ and
 $\forall \gamma \in FP$ it holds that $(\emptyset, \{\gamma\}) \vdash \gamma \in X$.
\end{lemma}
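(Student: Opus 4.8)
The plan is to reduce everything to Lemma~\ref{lem:stable_ext}, which already characterizes membership in a stable extension $X$ purely in terms of the assumption-premises: an argument belongs to $X$ if and only if all of its assumption-premises lie in the set $\Lambda_X = \{\beta \in \mathcal{A} \mid \nexists (AP,FP) \vdash \overline{\beta} \in X\}$. Note that $\Lambda_X$ does not mention fact-premises at all, which is the key observation making the proof short; it also means the fact-premise conjunct on the right-hand side of the stated equivalence is essentially free (every fact-argument $(\emptyset,\{\gamma\}) \vdash \gamma$ has empty assumption-premise set, and $\emptyset \subseteq \Lambda_X$ trivially, so every fact-argument is always in $X$).

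For the forward direction, I would assume $(AP, FP) \vdash \alpha \in X$. By Lemma~\ref{lem:stable_ext}, $AP \subseteq \Lambda_X$. Then for each $\beta \in AP$, the assumption-argument $(\{\beta\}, \emptyset) \vdash \beta$ is a genuine argument (since $\beta \in \mathcal{A}$) whose set of assumption-premises is $\{\beta\} \subseteq \Lambda_X$; applying Lemma~\ref{lem:stable_ext} again gives $(\{\beta\}, \emptyset) \vdash \beta \in X$. Similarly, for each $\gamma \in FP$, we have $\gamma \leftarrow\, \in \mathcal{R}$, so $(\emptyset, \{\gamma\}) \vdash \gamma$ is a well-defined fact-argument whose assumption-premise set $\emptyset$ is (vacuously) contained in $\Lambda_X$, whence $(\emptyset, \{\gamma\}) \vdash \gamma \in X$ by Lemma~\ref{lem:stable_ext}.

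For the backward direction, I would assume that $(\{\beta\}, \emptyset) \vdash \beta \in X$ for every $\beta \in AP$ (the fact-premise hypothesis will not even be needed). Fix $\beta \in AP$. Since $(\{\beta\}, \emptyset) \vdash \beta \in X$, Lemma~\ref{lem:stable_ext} applied to this argument yields $\{\beta\} \subseteq \Lambda_X$, i.e.\ $\beta \in \Lambda_X$. As $\beta$ was arbitrary, $AP \subseteq \Lambda_X$, and then one more application of Lemma~\ref{lem:stable_ext} gives $(AP, FP) \vdash \alpha \in X$.

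I do not expect a genuine obstacle here; the only points requiring a little care are (i) checking that the assumption-arguments and fact-arguments invoked are indeed arguments of the framework (immediate from Definition~\ref{def:argument} and the accompanying notation, using $AP \subseteq \mathcal{A}$ and $FP \subseteq \{\beta \mid \beta \leftarrow\, \in \mathcal{R}\}$), and (ii) noticing that $\Lambda_X$ constrains only assumption-premises, so the fact-premise part of the equivalence contributes nothing beyond symmetry of presentation. Everything else is a direct double application of Lemma~\ref{lem:stable_ext}.
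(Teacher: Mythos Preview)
Your proposal is correct and follows essentially the same approach as the paper: both arguments hinge on Lemma~\ref{lem:stable_ext} and the observation that fact-arguments are trivially in every stable extension. The only stylistic difference is that you apply Lemma~\ref{lem:stable_ext} uniformly in both directions (membership in $X$ $\Leftrightarrow$ assumption-premises contained in $\Lambda_X$), whereas the paper invokes Lemma~\ref{lem:stable_ext} once and then reverts to the raw definition of stable extension via ``not attacked by $X$''; your version is slightly more streamlined but not substantively different.
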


\begin{proof}
 Note that fact-arguments are always part of a stable extension as they cannot be attacked, so we only focus on assumption-arguments.
 \begin{itemize}
  \item From left to right: If $(AP, FP) \vdash \alpha \in X$ then by Lemma~\ref{lem:stable_ext} $\forall \beta \in AP$, $(AP_1,FP_1) \vdash \overline{\beta} \notin X$.
  Consequently, $(\{\beta\}, \emptyset) \vdash \beta$ is not attacked by $X$,
  so by definition of stable extension $(\{\beta\}, \emptyset) \vdash \beta \in X$.
  \item From right to left: If $\forall \beta \in AP$ it holds that $(\{\beta\}, \emptyset) \vdash \beta \in X$
  then by definition of stable extension no $(\{\beta\}, \emptyset) \vdash \beta$ is attacked by $X$,
  so for none of the $\beta \in AP$ there exists an $(AP_1, FP_1) \vdash \overline{\beta} \in X$.
  Thus, $(AP, FP) \vdash \alpha$ is not attacked by $X$, so $(AP, FP) \vdash \alpha \in X$.  
 \end{itemize}\hfill
\end{proof}

The following lemma characterizes conditions for an argument not to be in a given stable extension:
An argument is not part of a stable extension if and only if the assumption-argument of one of its assumption-premises is not in this stable extension:

\begin{lemma}
\label{lem:stable_ext_suppport_notin}
 Let $\langle \mathcal{L}, \mathcal{R}, \mathcal{A},\, \bar{\;}\,\rangle$ be an ABA framework and let $X$ be a stable extension of $\langle \mathcal{L}, \mathcal{R}, \mathcal{A},\, \bar{\;}\,\rangle$.
 $(AP, FP) \vdash \alpha \notin X$ if and only if $\exists \beta \in AP$ such that $(\{\beta\}, \emptyset) \vdash \beta \notin X$.
\end{lemma}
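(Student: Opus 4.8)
The plan is to derive Lemma~\ref{lem:stable_ext_suppport_notin} almost immediately from Lemma~\ref{lem:stable_ext_support} by straightforward logical negation, since the two statements are contrapositives of each other once one observes that fact-arguments never cause problems. First I would note the preliminary observation, already used in the proof of Lemma~\ref{lem:stable_ext_support}, that a fact-argument $(\emptyset,\{\gamma\}) \vdash \gamma$ has no assumption-premises and hence cannot be attacked, so it belongs to every stable extension $X$. Consequently the clause ``$\forall \gamma \in FP$ it holds that $(\emptyset,\{\gamma\}) \vdash \gamma \in X$'' in Lemma~\ref{lem:stable_ext_support} is vacuously true for any stable extension, and the biconditional there simplifies to: $(AP,FP) \vdash \alpha \in X$ if and only if $\forall \beta \in AP$, $(\{\beta\},\emptyset) \vdash \beta \in X$.

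Next I would negate both sides of this simplified biconditional. The left-hand side becomes $(AP,FP) \vdash \alpha \notin X$; the right-hand side, negating a universally quantified statement, becomes $\exists \beta \in AP$ such that $(\{\beta\},\emptyset) \vdash \beta \notin X$. This is exactly the statement of Lemma~\ref{lem:stable_ext_suppport_notin}, so the proof is complete. I would present this as two short paragraphs: one recalling the fact-argument observation and invoking Lemma~\ref{lem:stable_ext_support}, and one performing the negation.

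I do not expect any real obstacle here — the lemma is a corollary of the previous one. The only small point to be careful about is making the vacuous-truth step explicit rather than leaving it implicit, so that the reader sees why the $FP$ clause drops out of the negation (otherwise a careless reader might expect a disjunct of the form ``or $\exists \gamma \in FP$ with $(\emptyset,\{\gamma\})\vdash\gamma \notin X$'' to appear, which would be wrong precisely because that disjunct is always false). Alternatively, if one prefers not to rely on the simplification, one could give a direct argument: for the forward direction, if all assumption-arguments of assumption-premises of $(AP,FP)\vdash\alpha$ were in $X$, then by Lemma~\ref{lem:stable_ext_support} (together with the fact-argument observation) $(AP,FP)\vdash\alpha$ would be in $X$, a contradiction; for the backward direction, if some $(\{\beta\},\emptyset)\vdash\beta \notin X$ with $\beta \in AP$, then by Lemma~\ref{lem:stable_ext} $X$ attacks $(\{\beta\},\emptyset)\vdash\beta$, i.e.\ there is $(AP_1,FP_1)\vdash\overline{\beta}\in X$, and since $\beta \in AP$ this same argument attacks $(AP,FP)\vdash\alpha$, so by the definition of stable extension $(AP,FP)\vdash\alpha \notin X$. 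I would likely include this direct version as well for self-containedness, but the contrapositive-of-Lemma~\ref{lem:stable_ext_support} route is the cleanest.
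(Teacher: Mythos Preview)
Your proposal is correct. Your main route, however, differs from the paper's: the paper does not invoke Lemma~\ref{lem:stable_ext_support} at all but argues both directions directly from the definition of stable extension. For the forward direction, the paper observes that if $(AP,FP)\vdash\alpha \notin X$ then $X$ attacks it on some $\beta \in AP$, so the same attacker witnesses that $(\{\beta\},\emptyset)\vdash\beta$ is attacked by $X$ and hence not in $X$; for the backward direction, the paper's argument is exactly your alternative backward argument via an attacker $(AP_1,FP_1)\vdash\overline{\beta}\in X$. Your contrapositive-of-Lemma~\ref{lem:stable_ext_support} approach is cleaner and avoids repeating attack reasoning already carried out in the proof of that lemma, at the modest cost of having to spell out why the $FP$-clause drops out. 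The paper's direct argument is self-contained and makes the underlying attack mechanism visible again, which may be pedagogically useful but is logically redundant given Lemma~\ref{lem:stable_ext_support}.
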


\begin{proof}
 \begin{itemize}
  \item From left to right: If $(AP, FP) \vdash \alpha \notin X$ then $(AP, FP) \vdash \alpha$ is attacked by $X$ on some $\beta \in AP$.
   Consequently, $(\{\beta\}, \emptyset) \vdash \beta$ is attacked by $X$, so  $(\{\beta\}, \emptyset) \vdash \beta \notin X$.
  \item From right to left: If $\exists \beta \in AP$ such that $(\{\beta\}, \emptyset) \vdash \beta \notin X$ then
  $(\{\beta\}, \emptyset) \vdash \beta$ is attacked by $X$, meaning that there is some $(AP_1, FP_1) \vdash \overline{\beta} \in X$. 
  Thus, $(AP, FP) \vdash \alpha$ is attacked by $X$ on $\beta$, so $(AP, FP) \vdash \alpha \notin X$. 
 \end{itemize}\hfill
\end{proof}

%%%%%%%%%%%%%%%%%%%%%%%%%%%%%%%%%%%%%%%%%%%%%%%%%%%%%%%%%%%%%%%%%%%%%%%%%%%%%%%%%%%%%%%%%%%%%%%%%%%%%%%%%%%%%%%%%%%%%%%%%%%%%%%%%%%%%%%%%%%%%%%%%%%%%%%%%%%%%%%%%%
%%%%%%%%%%%%%%%%%%%%%%%%%%%%%%%%%%%%%%%%%%%%%%%%%%%%%%%%%%%%%%%%%%%%%%%%%%%%%%%%%%%%%%%%%%%%%%%%%%%%%%%%%%%%%%%%%%%%%%%%%%%%%%%%%%%%%%%%%%%%%%%%%%%%%%%%%%%%%%%%%%
\section{Translating a logic program into an ABA framework}
\label{sec:transatlion}

In order to use ABA for the justification of literals with respect to an answer set of a \emph{consistent} logic program,
the logic program has to be expressed as an ABA framework first.

%%%%%%%%%%%%%%%%%%%%%%%%%%%%%%%%%%%%%%%%%%%%%%%%%%%%%%%%%%%%%%%%%%%%%%%%%%%%%%%%%%%%%%%%%%%%%%%%%%%%%%%%%%%%%%%%%%%%%%%%%%%%%%%%%%%%%%%%%%%%%%%%%%%%%%%%%%%%%%%%%%
\subsection{The translation}
\label{sec:translation_theory}

We use the approach of \cite{aba_lp} for translating a logic program into an ABA framework,
where the clauses of a logic program form the set of ABA rules and NAF literals are used as assumptions in ABA.

\begin{definition}[Translated ABA framework]
\label{def:translated_ABA}
 Let $\mathcal{P}$ be a logic program.
 $ABA_{\mathcal{P}} = \langle \mathcal{L}_{\mathcal{P}}, \mathcal{R}_{\mathcal{P}}, \mathcal{A}_{\mathcal{P}},\: \bar{\; }\:\rangle$ is the \emph{translated ABA framework} of $\mathcal{P}$ where:
 \begin{itemize}
  \item $\mathcal{R}_{\mathcal{P}} = \mathcal{P}$
  \item $\mathcal{A}_{\mathcal{P}} = \textit{NAF}_{\mathcal{P}}$
  \item for every $not ~ l \in \mathcal{A}_{\mathcal{P}}$: $\overline{not ~ l} = l$
  \item $\mathcal{L}_{\mathcal{P}} = Lit_{\mathcal{P}} \; \cup \; \textit{NAF}_{\mathcal{P}}$
 \end{itemize}
 Note that the clauses of a logic program can be directly used as rules in the translated ABA framework as we utilize the same notation for both of them.
 Note also that translated ABA frameworks are always flat since NAF literals do not occur in the head of clauses of a logic program.
\end{definition}

\begin{example}
\label{ex:p1}
 The following logic program $\mathcal{P}_1$ will serve as a running example throughout the paper,
 where $Lit_{\mathcal{P}_1} = \{ a, \neg a, c, \neg c, d, \neg d, e, \neg e \}$:
 \begin{align*}
  a &\leftarrow not~ \neg a\\
  a &\leftarrow \neg a, not~ c, not~ e\\
  \neg a &\leftarrow not~ c, not~ d\\
  c &\leftarrow not~ e\\
  d &\leftarrow not~ \neg a\\
  e &\leftarrow
 \end{align*}
The translated ABA framework of $\mathcal{P}_1$ is $ABA_{\mathcal{P}_1} = \langle \mathcal{L}_{\mathcal{P}_1}, \mathcal{R}_{\mathcal{P}_1}, \mathcal{A}_{\mathcal{P}_1}, \: \bar{\; }\: \rangle$ with:
\begin{itemize}
 \item $\mathcal{R}_{\mathcal{P}_1} = \mathcal{P}_1$
 \item $\mathcal{A}_{\mathcal{P}_1} = NAF_{\mathcal{P}_1} = \{not~ a, not~ \neg a, not~ c, not~ \neg c, not~ d, not~ \neg d, not~ e, not~ \neg e\}$
 \item $\overline{not~ a} = a;\; \overline{not~ \neg a} = \neg a;\; \overline{not~c} = c;\; \overline{not~ \neg c} = \neg c;\;
 \overline{not~ d} = d;\; \overline{not~ \neg d} = \neg d;\\ \overline{not~e} = e;\; \overline{not~ \neg e} = \neg e$
 \item $\mathcal{L}_{\mathcal{P}_1} = Lit_{\mathcal{P}_1}\; \cup \; NAF_{\mathcal{P}_1}$
\end{itemize}
The following fourteen arguments can be constructed in $ABA_{\mathcal{P}_1}$, including eight assumption-arguments ($A_1$ - $A_8$) and one fact-argument ($A_{14}$):\\
$A_1: (\{not~a\}, \emptyset) \vdash not~ a$\\
$A_2: (\{not~ \neg a\}, \emptyset) \vdash not~ \neg a$\\
$A_3: (\{not~c\}, \emptyset) \vdash not~ c$\\
$A_4: (\{not~ \neg c\}, \emptyset) \vdash not~ \neg c$\\
$A_5: (\{not~d\}, \emptyset) \vdash not~ d$\\
$A_6: (\{not~\neg d\}, \emptyset) \vdash not~ \neg d$\\
$A_7: (\{not~e\}, \emptyset) \vdash not~ e$\\
$A_8: (\{not~\neg e\}, \emptyset) \vdash not~ \neg e$\\
$A_9: (\{not~ \neg a\}, \emptyset) \vdash a$\\
$A_{10}: (\{not~c, not~d, not~ e\}, \emptyset) \vdash a$\\
$A_{11}: (\{not~c, not~d\}, \emptyset) \vdash \neg a$\\
$A_{12}: (\{not~ e\}, \emptyset) \vdash c$\\
$A_{13}: (\{not~ \neg a\}, \emptyset) \vdash d$\\
$A_{14}: (\emptyset, \{e\}) \vdash e$\\
The attacks between these arguments are given as a graph in Figure~\ref{fig:attacks_p1}.
An arrow from a node $N_1$ to a node $N_2$ in the graph represents that the argument held by $N_1$ attacks the argument held by $N_2$.
\end{example}

\begin{figure}[ht]
 \centering
 \includegraphics[width=\textwidth]{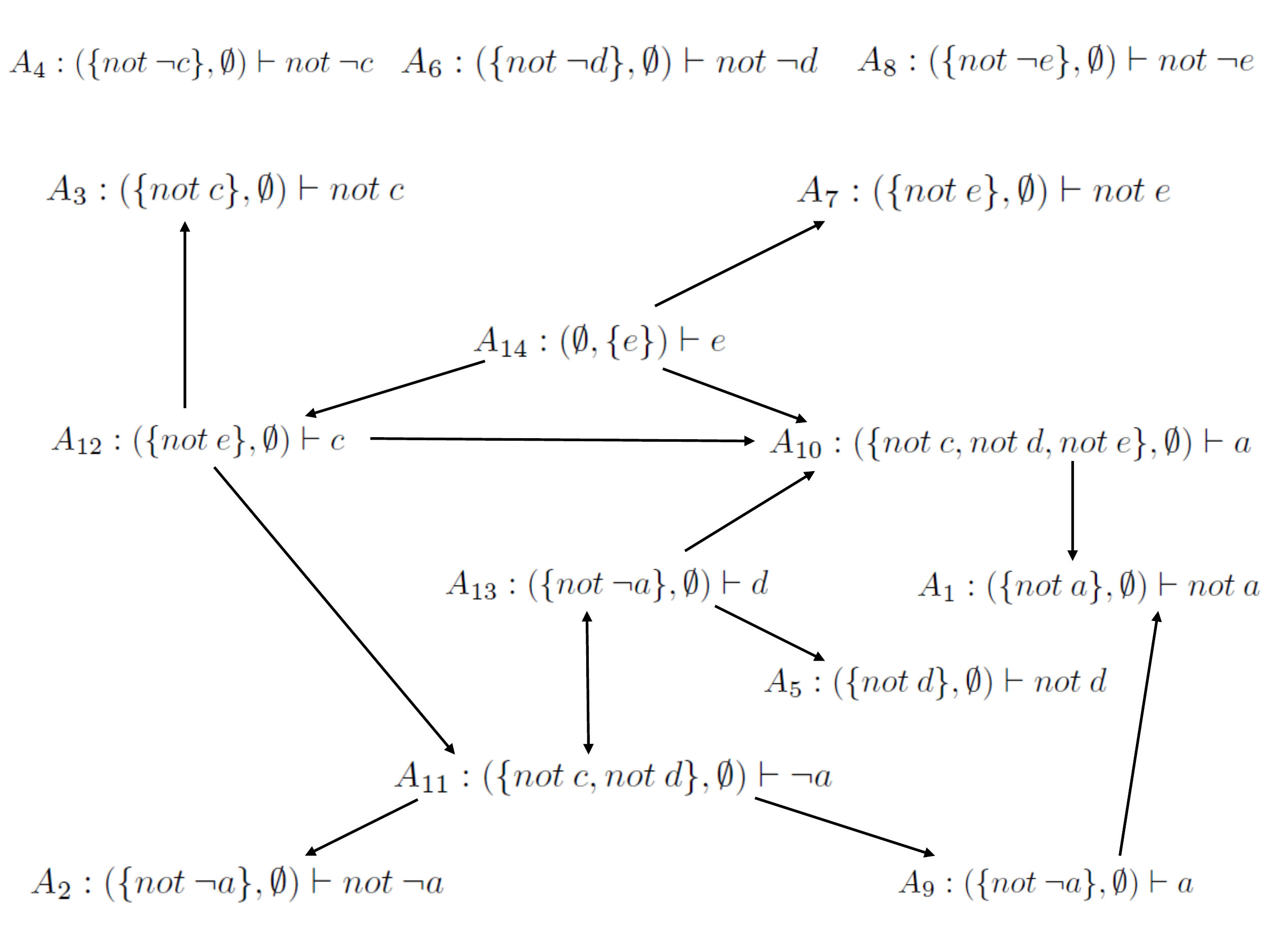}
 \caption{Attacks between arguments in the translated ABA framework of $\mathcal{P}_1$ (see Example~\ref{ex:p1}) represented by arrows between the arguments.
 The three arguments at the top ($A_4$, $A_6$, $A_8$) are neither attacked nor do they attack other arguments,
 so there are no arrows connecting them with other arguments.}
 \label{fig:attacks_p1}
\end{figure}

%%%%%%%%%%%%%%%%%%%%%%%%%%%%%%%%%%%%%%%%%%%%%%%%%%%%%%%%%%%%%%%%%%%%%%%%%%%%%%%%%%%%%%%%%%%%%%%%%%%%%%%%%%%%%%%%%%%%%%%%%%%%%%%%%%%%%%%%%%%%%%%%%%%%%%%%%%%%%%%%%%
\subsection{Correspondence between Answer Sets and Stable Extensions}
\label{sec:translation_lemmas}

In this section, we describe the relationship between answer sets of a logic program and stable extensions of the translated ABA framework.
This connection will be used for our justification approaches.
Theorem~\ref{lem:corr_ext} states that an answer set with NAF literals consists of the conclusions of all arguments in the ``corresponding'' stable extension.
Conversely, Theorem~\ref{lem:corr_stable} expresses that a stable extension consists of all arguments supported by NAF literals which are satisfied with respect to the ``corresponding'' answer set.
Note that part of this correspondence has been stated without a formal proof in \cite{aba_lp}.

\begin{theorem}
\label{lem:corr_ext}
Let $\mathcal{P}$ be a logic program and let 
$ABA_{\mathcal{P}} = \langle \mathcal{L}_{\mathcal{P}}, \mathcal{R}_{\mathcal{P}}, \mathcal{A}_{\mathcal{P}},\: \bar{\; }\: \rangle$.
Let $X$ be a set of arguments in $ABA_{\mathcal{P}}$ and let $T = \{k \;|\; \exists (AP, FP) \vdash k \in X\}$ be the set of all conclusions of arguments in $X$.\\
$X$ is a stable extension of $ABA_{\mathcal{P}}$ if and only if
$T$ is an answer set with NAF literals of $\mathcal{P}$.
\end{theorem}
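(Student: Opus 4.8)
The plan is to prove both directions by routing through the modus-ponens reformulations: Lemma~\ref{lem:AS_MP} for answer sets and Lemma~\ref{lem:arg_mp} for ABA arguments, describing a stable extension via the set $\Lambda_X = \{\beta \in \mathcal{A}_{\mathcal{P}} \mid \nexists (AP,FP)\vdash\overline{\beta}\in X\}$ of Lemma~\ref{lem:stable_ext}. Throughout I would write $S = T \cap Lit_{\mathcal{P}}$ and $\Delta_S = \{not~l \in \textit{NAF}_{\mathcal{P}} \mid l \notin S\}$. The first step, which needs no assumption on $X$, is the bridging identity $\Lambda_X = \Delta_S$: since $\overline{not~l} = l$ in $ABA_{\mathcal{P}}$, we have $not~l \in \Lambda_X$ iff no argument in $X$ concludes $l$, iff $l \notin T$, iff $l \notin S$ (because $l$ is classical), iff $not~l \in \Delta_S$.

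For the forward direction, assume $X$ is a stable extension. By Lemma~\ref{lem:stable_ext} together with the bridge, $X = \{(AP,FP)\vdash\alpha \mid AP \subseteq \Delta_S\}$; in particular every assumption-argument $(\{not~l\},\emptyset)\vdash not~l$ with $not~l\in\Delta_S$ lies in $X$, and since these are the only arguments with a NAF conclusion, the NAF part of $T$ is exactly $\Delta_S$, so $T = S \cup \Delta_S$. Next, $k$ is a conclusion of some argument in $X$ iff there is an argument $(AP,FP)\vdash k$ with $AP \subseteq \Delta_S$, which by Lemma~\ref{lem:arg_mp} (recall $\mathcal{R}_{\mathcal{P}} = \mathcal{P}$) means $\mathcal{P}\cup AP \vdash_{MP} k$ for some finite $AP \subseteq \Delta_S$; by monotonicity of $\vdash_{MP}$ and finiteness of derivations this is equivalent to $\mathcal{P}\cup\Delta_S \vdash_{MP} k$. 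Hence $T = \{k \mid \mathcal{P}\cup\Delta_S\vdash_{MP}k\}$, and since also $T = S\cup\Delta_S$, Lemma~\ref{lem:AS_MP} yields that $T$ is an answer set with NAF literals of $\mathcal{P}$.

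For the converse, assume $T$ is an answer set with NAF literals; by Definition~\ref{def:AS_NAF} it equals $S'\cup\Delta_{S'}$ for an answer set $S'$, and necessarily $S' = T\cap Lit_{\mathcal{P}} = S$, so by Lemma~\ref{lem:AS_MP}, $T = \{k \mid \mathcal{P}\cup\Delta_S\vdash_{MP}k\}$. By the bridge and Lemma~\ref{lem:stable_ext}, showing that $X$ is a stable extension amounts to showing $X = \{(AP,FP)\vdash\alpha \mid AP \subseteq \Delta_S\}$. For the inclusion $\subseteq$, if $(AP,FP)\vdash\alpha \in X$ had some $not~l \in AP$ with $l \in S\subseteq T$, then $X$ would contain an argument concluding $l$, which attacks $(AP,FP)\vdash\alpha$ on $not~l$. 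For $\supseteq$, given $(AP,FP)\vdash\alpha$ with $AP \subseteq \Delta_S$, Lemma~\ref{lem:arg_mp} and monotonicity give $\mathcal{P}\cup\Delta_S\vdash_{MP}\alpha$, so $\alpha \in T$ and $X$ contains some argument for $\alpha$, which one has to upgrade to the particular argument $(AP,FP)\vdash\alpha$.

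I expect the genuine difficulty to lie in this converse direction, specifically in identifying $X$ with the canonical stable extension $\{(AP,FP)\vdash\alpha \mid AP \subseteq \Delta_S\}$ rather than merely with some set of arguments having the right conclusions: ruling out ``spurious'' arguments built on assumptions outside $\Delta_S$ and recovering the arguments that $X$ might be missing is what forces one to invoke Lemma~\ref{lem:stable_ext_support}, Lemma~\ref{lem:stable_ext_suppport_notin}, and the consistency of $\mathcal{P}$. The forward direction and the $\vdash_{MP}$ bookkeeping are comparatively routine.
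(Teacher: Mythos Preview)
Your forward direction is essentially the paper's: both route through Lemma~\ref{lem:stable_ext}, the bridge $\Lambda_X = \Delta_S$ (the paper writes it as $\Delta_T$, which is the same thing since $T$ and $S$ agree on classical literals), Lemma~\ref{lem:arg_mp}, and Lemma~\ref{lem:AS_MP}. The paper merely packages this as a single chain of ``iff'' steps rather than splitting into two directions.

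Your unease about the converse is justified, and the paper's proof has exactly the gap you anticipate. The penultimate step in the paper's chain passes from
\[
X \;=\; \{(AP,FP)\vdash k \mid AP \subseteq \Delta_T,\ \mathcal{P}\cup\Delta_T\vdash_{MP}k\}
\]
to $T = \{k \mid \mathcal{P}\cup\Delta_T\vdash_{MP}k\}$ by ``substituting $X$ in $T$''. This is fine left-to-right, but it is not an equivalence: $T$ is a function of $X$, while $X$ is not determined by $T$, so one cannot recover the displayed description of $X$ from a property of $T$ alone. Your proposed patches do not help either. Lemmas~\ref{lem:stable_ext_support} and~\ref{lem:stable_ext_suppport_notin} presuppose that $X$ is a stable extension, which is what you are trying to establish; and your argument for the inclusion $X \subseteq \{(AP,FP)\vdash\alpha \mid AP\subseteq\Delta_S\}$ only shows that a bad argument in $X$ would be attacked by $X$, which is no contradiction without already knowing that $X$ is conflict-free.

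In fact the converse, read literally for an arbitrary set $X$ of arguments, fails. Take $\mathcal{P}=\{a\leftarrow,\ b\leftarrow not~c,\ b\leftarrow not~d\}$ with unique answer set $S=\{a,b\}$. The corresponding stable extension $\mathcal{E}$ contains both $A_{b1}:(\{not~c\},\emptyset)\vdash b$ and $A_{b2}:(\{not~d\},\emptyset)\vdash b$. Set $X=\mathcal{E}\setminus\{A_{b2}\}$. Then the conclusion set of $X$ is still $S_{\textit{NAF}}$, so $T$ is an answer set with NAF literals; but $X$ is not a stable extension, since $A_{b2}\notin X$ and $X\subseteq\mathcal{E}$ does not attack $A_{b2}$ (there is no argument for $d$). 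So the difficulty you flag is not a matter of finding the right lemma; the statement needs an additional hypothesis on $X$ (e.g.\ that $X$ is closed under arguments with assumption-premises in $\Lambda_X$) for the converse to hold.
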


\begin{proof}
\begin{itemize}
 \item $X$ is a stable extension of $ABA_{\mathcal{P}}$
 \item iff $X = \{(AP_1,FP_1) \vdash k \textit{ is an argument in }ABA_{\mathcal{P}}\; | \; AP_1 \subseteq \Lambda_{X}\}$ with\\
 $\Lambda_{X} = \{ not~ l \in \mathcal{A}_{\mathcal{P}} \; | \; \nexists (AP_2,FP_2) \vdash l \in X\}$ (by Lemma~\ref{lem:stable_ext})
 \item iff $X = \{(AP_1,FP_1) \vdash k \; | \; AP_1 \subseteq \Lambda_{X}, \mathcal{R}_{\mathcal{P}} \cup \Lambda_{X} \vdash_{MP} k\}$ with\\
 $\Lambda_{X} = \{ not~ l \in \mathcal{A}_{\mathcal{P}} \; | \; \nexists (AP_2,FP_2) \vdash l \in X\}$ (by Lemma~\ref{lem:arg_mp})
 \item iff $X = \{(AP_1,FP_1) \vdash k \; | \; AP_1 \subseteq \Lambda_{X}, \mathcal{P}\; \cup\; \Lambda_{X} \vdash_{MP} k\}$ with\\
 $\Lambda_{X} = \{ not~ l \in \textit{NAF}_{\mathcal{P}} \; | \; \nexists (AP_2,FP_2) \vdash l \in X\}$ (by Definition~\ref{def:translated_ABA})
 \item iff $X = \{(AP_1,FP_1) \vdash k \; | \; AP_1 \subseteq \Lambda_{X}, \mathcal{P}\; \cup\; \Lambda_{X} \vdash_{MP} k\}$ with\\
 $\Lambda_{X} = \{ not~ l \in \textit{NAF}_{\mathcal{P}} \; | \; l \notin T\}$ and $T = \{k \;|\; \exists (AP, FP) \vdash k \in X\}$\\ (by construction of $T$, see above)
 \item iff $X = \{(AP_1,FP_1) \vdash k \; | \; AP_1 \subseteq \Delta_T, \mathcal{P}\; \cup\; \Delta_T \vdash_{MP} k\}$ with\\
 $\Delta_T = \{ not~ l \in \textit{NAF}_{\mathcal{P}} \; | \; l \notin T\}$ and $T = \{k \;|\; \exists (AP, FP) \vdash k \in X\}$ (by Definition~\ref{def:AS_NAF})
 \item iff $\Delta_T = \{ not~ l \in \textit{NAF}_{\mathcal{P}} \; | \; l \notin T\}$ and $T = \{k \;|\; \mathcal{P}\; \cup\; \Delta_T \vdash_{MP} k\}$\\
 (substituting $X$ in $T$)
 \item iff $T$ is an answer set with NAF literals of $\mathcal{P}$ (by Lemma~\ref{lem:AS_MP}) 
\end{itemize} \hfill
\end{proof}

\begin{theorem}
\label{lem:corr_stable}
Let $\mathcal{P}$ be a logic program and let 
$ABA_{\mathcal{P}} = \langle \mathcal{L}_{\mathcal{P}}, \mathcal{R}_{\mathcal{P}}, \mathcal{A}_{\mathcal{P}},\: \bar{\; }\: \rangle$.
Let $T \subseteq Lit_{\mathcal{P}}$ be a set of classical literals and let $X = \{ (AP,FP) \vdash k \; | \; AP \subseteq \Delta_T\}$ be the set of arguments in $ABA_{\mathcal{P}}$
whose assumption-premises are in $\Delta_T$.\\
$T$ is an answer set of $\mathcal{P}$ if and only if $X$ is a stable extension of $ABA_{\mathcal{P}}$.
\end{theorem}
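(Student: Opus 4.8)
The plan is to reduce this statement to Theorem~\ref{lem:corr_ext} together with the characterisation of stable extensions in Lemma~\ref{lem:stable_ext}, using as the central bridging fact that, for the particular $X$ defined from $T$, there is an argument in $X$ with conclusion $l$ if and only if $\mathcal{P}\cup\Delta_T\vdash_{MP} l$. This equivalence follows immediately from Lemma~\ref{lem:arg_mp}: any $(AP,FP)\vdash l\in X$ satisfies $\mathcal{R}_{\mathcal{P}}\cup AP\vdash_{MP} l$ with $AP\subseteq\Delta_T$, hence $\mathcal{P}\cup\Delta_T\vdash_{MP} l$; conversely a modus-ponens derivation of $l$ from $\mathcal{P}\cup\Delta_T$ uses only finitely many NAF-facts from $\Delta_T$, which assemble into an argument $(AP,FP)\vdash l$ with $AP\subseteq\Delta_T$, i.e.\ an argument in $X$. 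I will also record the dual observation for NAF conclusions: since NAF literals never occur in rule heads of $\mathcal{P}$, the only arguments in $X$ with a NAF conclusion $not~l$ are the assumption-arguments $(\{not~l\},\emptyset)\vdash not~l$ with $not~l\in\Delta_T$, so the conclusions of $X$ that are NAF literals are exactly $\Delta_T$.

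For the ``only if'' direction, assume $T$ is an answer set of $\mathcal{P}$. By Lemma~\ref{lem:stable_ext} it suffices to show $X=\{(AP,FP)\vdash\alpha\;|\;AP\subseteq\Lambda_X\}$ where $\Lambda_X=\{not~l\in\mathcal{A}_{\mathcal{P}}\;|\;\nexists(AP,FP)\vdash l\in X\}$; by the definition of $X$ this amounts to proving $\Lambda_X=\Delta_T$. Using the bridging fact, $\Lambda_X=\{not~l\in\textit{NAF}_{\mathcal{P}}\;|\;\mathcal{P}\cup\Delta_T\not\vdash_{MP}l\}$, and since $T$ is an answer set, Lemma~\ref{lem:AS_MP} gives $\{l\in Lit_{\mathcal{P}}\;|\;\mathcal{P}\cup\Delta_T\vdash_{MP}l\}=T$, so $\Lambda_X=\{not~l\in\textit{NAF}_{\mathcal{P}}\;|\;l\notin T\}=\Delta_T$, as required.

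For the ``if'' direction, assume $X$ is a stable extension, and let $T'=\{k\;|\;\exists(AP,FP)\vdash k\in X\}$ be its set of conclusions. By Theorem~\ref{lem:corr_ext}, $T'$ is an answer set with NAF literals of $\mathcal{P}$, so by Definition~\ref{def:AS_NAF} $T'=S\cup\Delta_S$ for an answer set $S$ of $\mathcal{P}$; since $\Delta_S$ contains no classical literals, $S=T'\cap Lit_{\mathcal{P}}$ and $\Delta_S=T'\cap\textit{NAF}_{\mathcal{P}}$. On the other hand the dual observation above gives $T'\cap\textit{NAF}_{\mathcal{P}}=\Delta_T$. Hence $\Delta_S=\Delta_T$, i.e.\ $Lit_{\mathcal{P}}\setminus S=Lit_{\mathcal{P}}\setminus T$, which forces $S=T$; therefore $T$ is an answer set of $\mathcal{P}$.

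I expect the main obstacle to be the careful NAF-versus-classical bookkeeping in the ``if'' direction---making precise that the conclusions of $X$ split cleanly into the classical literals derivable from $\mathcal{P}\cup\Delta_T$ and exactly the NAF literals of $\Delta_T$, and that $\Delta_S=\Delta_T$ then pins down $S=T$. An alternative, and perhaps cleaner, route is to mimic the chain-of-equivalences style of the proof of Theorem~\ref{lem:corr_ext} directly, threading $\Delta_T$ through Lemmas~\ref{lem:arg_mp}, \ref{lem:stable_ext} and \ref{lem:AS_MP}, which avoids ever introducing the auxiliary answer set $S$.
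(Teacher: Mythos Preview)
Your proof is correct. The ``only if'' direction is essentially the paper's argument: you unwind Lemma~\ref{lem:stable_ext}, reduce to $\Lambda_X=\Delta_T$, and verify this via Lemma~\ref{lem:arg_mp} and Lemma~\ref{lem:AS_MP}. The ``if'' direction, however, differs from the paper. The paper runs a single chain of bi-implications (exactly the alternative you sketch in your final paragraph), threading $T$ and $\Delta_T$ through Lemmas~\ref{lem:AS_MP}, \ref{lem:arg_mp}, \ref{lem:stable_ext} and Definition~\ref{def:translated_ABA} without ever invoking Theorem~\ref{lem:corr_ext}. You instead reuse Theorem~\ref{lem:corr_ext} to obtain an auxiliary answer set $S$ from the conclusions of $X$, and then pin down $S=T$ via the observation $\Delta_S=T'\cap\textit{NAF}_{\mathcal{P}}=\Delta_T$. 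Your route is more modular---it treats Theorem~\ref{lem:corr_ext} as a black box and avoids repeating its internal steps---while the paper's chain is self-contained and symmetric in the two directions, at the cost of some redundancy with the proof of Theorem~\ref{lem:corr_ext}. Both are perfectly valid; your closing remark already identifies the paper's method precisely.
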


\begin{proof}
 \begin{itemize}
  \item $T$ is an answer set of $\mathcal{P}$
  \item iff $T = \{ l_1 \in Lit_{\mathcal{P}}\; |\; \mathcal{P} \; \cup \; \Delta_T \vdash_{MP} l_1\}$ with\\
  $\Delta_T = \{ not ~ l \in \textit{NAF}_{\mathcal{P}}\; | \; l \notin T \}$ (by Lemma~\ref{lem:AS_MP} and Definition~\ref{def:AS_NAF})
  \item  iff $T = \{ k  \in \mathcal{L}_{\mathcal{P}} \backslash \mathcal{A}_{\mathcal{P}} \; |\; \mathcal{R}_{\mathcal{P}} \; \cup \; \Delta_T \vdash_{MP} k\}$ with\\
  $\Delta_T = \{ not ~ l \in \mathcal{A}_{\mathcal{P}}\; | \; l \notin T \}$ (by Definition~\ref{def:translated_ABA})
  \item iff $T = \{ k  \in \mathcal{L}_{\mathcal{P}} \backslash \mathcal{A}_{\mathcal{P}} \; |\; \exists (AP,FP) \vdash k, AP \subseteq \Delta_T\}$ with\\
  $\Delta_T = \{ not ~ l \in \mathcal{A}_{\mathcal{P}}\; | \; l \notin T \}$ (by Lemma~\ref{lem:arg_mp})
  \item iff $T = \{ k  \in \mathcal{L}_{\mathcal{P}} \backslash \mathcal{A}_{\mathcal{P}} \; |\; \exists (AP,FP) \vdash k, AP \subseteq \Delta_T\}$ with\\
  $\Delta_T = \{ not ~ l \in \mathcal{A}_{\mathcal{P}}\; | \; \mathcal{P} \; \cup \; \Delta_T \nvdash_{MP} l \}$ (by Lemma~\ref{lem:AS_MP})
  \item iff $T = \{ k  \in \mathcal{L}_{\mathcal{P}} \backslash \mathcal{A}_{\mathcal{P}} \; |\; \exists (AP,FP) \vdash k, AP \subseteq \Delta_T\}$ with\\
  $\Delta_T = \{ not ~ l \in \mathcal{A}_{\mathcal{P}}\; | \; \nexists (AP,FP) \vdash l, AP \subseteq \Delta_T\}$ (by Lemma~\ref{lem:arg_mp})
  \item iff $T = \{ k  \in \mathcal{L}_{\mathcal{P}} \backslash \mathcal{A}_{\mathcal{P}} \; |\; \exists (AP,FP) \vdash k, AP \subseteq \Delta_T\}$ with\\
  $\Delta_T = \{ not ~ l \in \mathcal{A}_{\mathcal{P}}\; | \; \nexists (AP,FP) \vdash l \in X\}$ and $X = \{ (AP,FP) \vdash k \; | \; AP \subseteq \Delta_T\}$ (by construction of $X$, see above)
  \item iff $T = \{ k  \in \mathcal{L}_{\mathcal{P}} \backslash \mathcal{A}_{\mathcal{P}} \; |\; \exists (AP,FP) \vdash k, AP \subseteq \Lambda_{X}\}$ with\\
  $\Lambda_{X} = \{ not ~ l \in \mathcal{A}_{\mathcal{P}}\; | \; \nexists (AP,FP) \vdash l \in X\}$ and $X = \{ (AP,FP) \vdash k \; | \; AP \subseteq \Lambda_{X}\}$  (by Lemma~\ref{lem:stable_ext})
  \item iff $\Lambda_{X} = \{ not ~ l \in \mathcal{A}_{\mathcal{P}}\; | \; \nexists (AP,FP) \vdash l \in X\}$ and $X = \{ (AP,FP) \vdash k \; | \; AP \subseteq \Lambda_{X}\}$ 
  \item iff $X$ is a stable extension of $ABA_{\mathcal{P}}$ (by Lemma~\ref{lem:stable_ext}) 
 \end{itemize}\hfill
\end{proof}

\begin{example}
 \label{ex:p1_as_stable}
 The logic program $\mathcal{P}_1$ from Example~\ref{ex:p1} has two answer sets: $S_1 = \{e,d,a\}$ and $S_2 = \{e,\neg a\}$.
 The respective sets of satisfied NAF literals are $\Delta_{S_1} = \{not ~ \neg a, not~ c,\\ not~ \neg c, not~ \neg d, not~ \neg e\}$ and
 $\Delta_{S_2} = \{not~ a, not~ c, not~ \neg c, not~ d, not ~ \neg d, not \neg e\}$.
 Considering the attacks between arguments in the translated ABA framework $ABA_{\mathcal{P}_1}$ (see Figure~\ref{fig:attacks_p1}), two stable extensions can be determined for $ABA_{\mathcal{P}_1}$:
 $A_4$, $A_6$, $A_8$, and $A_{14}$ have to be part of all stable extensions as they are not attacked.
 Then, $A_7$, $A_{10}$, and $A_{12}$ cannot be in any stable extension as they are attacked by $A_{14}$.
 Consequently, $A_3$ is part of all stable extensions since it is only attacked by $A_{12}$, which is definitely not contained in any stable extension.
 As $A_{11}$ and $A_{13}$ attack each other and are not furthered attacked by other arguments, there are two stable extensions,
 one containing $A_{13}$ and the other one containing $A_{11}$.
 The first stable extension also comprises $A_2$ and $A_9$ as $A_{13}$ attacks all their attackers,
 whereas the second one additionally comprises $A_1$ and $A_5$ since $A_{11}$ attacks all their attackers.
 Thus, the two stable extensions of $ABA_{\mathcal{P}_1}$ are $\mathcal{E}_1 = \{A_2,A_3,A_4,A_6,A_8, A_9,A_{13},A_{14}\}$ and
 $\mathcal{E}_2 = \{A_1,A_3,A_4,A_5,A_6,A_8,A_{11},A_{14}\}$.
 As expected, the conclusions of arguments in the stable extensions,
 $\{not ~ \neg a, not~ c, not~ \neg c, not~ \neg d, not~ \neg e, a, d, e\}$ for  $\mathcal{E}_1$ and
 $\{not~ a, not~ c, not~ \neg c, not~d, not~ \neg d, not~ \neg e, \neg a, e\}$ for $\mathcal{E}_2$,
 coincide with $S_{1_{NAF}}$ and $S_{2_{NAF}}$, as stated in Theorem~\ref{lem:corr_ext}.
 Conversely, the two sets of arguments whose assumption-premises are subsets of $\Delta_{S_1}$ and $\Delta_{S_2}$, respectively,
 coincide with the two stable extensions $\mathcal{E}_1$ and $\mathcal{E}_2$, respectively,
 as stated in Theorem~\ref{lem:corr_stable}.
\end{example}

The following notation introduces some terminology to refer to the stable extension which corresponds to a given answer set.

\begin{notation}
\label{not:corr_arg_ext}
Given an answer set $S$ of $\mathcal{P}$ and a stable extension $\mathcal{E}$ of $ABA_{\mathcal{P}}$ such that
$S_{\textit{NAF}} = \{k \;|\;\exists (AP, FP) \vdash k \in \mathcal{E}\}$,
$\mathcal{E}$ is called the \emph{corresponding stable extension} of $S$.
Given a literal $k \in S_{\textit{NAF}}$ and the corresponding stable extension $\mathcal{E}$ of $S$, an argument $A \in \mathcal{E}$ with conclusion $k$ is called a \emph{corresponding argument} of $k$.
\end{notation}

It is easy to show that for every literal $k$ in an answer set with NAF literals there is at least one corresponding argument in the corresponding stable extension.
Conversely, if a literal $k$ is not contained in an answer set with NAF literals, then no argument with conclusion $k$ is part of the corresponding stable extension.
\begin{theorem}
\label{the:AsStable}
Let $\mathcal{P}$ be a logic program,
$S$ an answer set of $\mathcal{P}$, and $\mathcal{E}$ the corresponding stable extension of $S$ in $ABA_{\mathcal{P}}$.
Let $k \in Lit_{\mathcal{P}} \; \cup \; \textit{NAF}_{\mathcal{P}}$.
\begin{enumerate}
 \item If $k \in S_{\textit{NAF}}$, then there exists an argument $A \in \mathcal{E}$ such that $A: (AP, FP) \vdash k$ with
  $AP \subseteq \Delta_S$ and $FP \subseteq S$.
 \item If $k \notin S_{\textit{NAF}}$, then there exists no $A: (AP, FP) \vdash k$ in $ABA_{\mathcal{P}} $ such that
 $A \in \mathcal{E}$.
\end{enumerate}
\end{theorem}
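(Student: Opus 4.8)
The plan is to reduce both items to the correspondence results of Section~\ref{sec:translation_lemmas}, specifically to an explicit description of the corresponding stable extension $\mathcal{E}$ of $S$ as exactly the set of those arguments whose assumption-premises lie in $\Delta_S$.

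First I would pin down the shape of $\mathcal{E}$. Since $S$ is an answer set of $\mathcal{P}$, Theorem~\ref{lem:corr_stable} tells us that $X = \{(AP,FP) \vdash k \; | \; AP \subseteq \Delta_S\}$ is a stable extension of $ABA_{\mathcal{P}}$, and Theorem~\ref{lem:corr_ext} (together with Lemmas~\ref{lem:arg_mp} and~\ref{lem:AS_MP}) shows that the set of conclusions of $X$ is precisely $S_{\textit{NAF}}$: every $not~l \in \Delta_S$ is the conclusion of its own assumption-argument, which lies in $X$; every $l \in S$ satisfies $\mathcal{P} \; \cup \; \Delta_S \vdash_{MP} l$, hence has an argument with assumption-premises in $\Delta_S$; and conversely every conclusion of an argument in $X$ is derivable from $\mathcal{P} \; \cup \; \Delta_S$, so lies in $S_{\textit{NAF}}$ by Lemma~\ref{lem:AS_MP}. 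Hence $X$ is a corresponding stable extension of $S$ in the sense of Notation~\ref{not:corr_arg_ext}. Moreover, by Lemma~\ref{lem:stable_ext} any stable extension is determined by the set of assumptions it leaves unattacked, and for a stable extension whose conclusion-set is $S_{\textit{NAF}}$ this set is forced to be $\Lambda_{\mathcal{E}} = \{ not~l \in \mathcal{A}_{\mathcal{P}} \; | \; l \notin S_{\textit{NAF}} \} = \Delta_S$ (using $S_{\textit{NAF}} \cap Lit_{\mathcal{P}} = S$); so $\mathcal{E} = X = \{(AP,FP) \vdash k \; | \; AP \subseteq \Delta_S\}$.

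Given this, item~1 is immediate: if $k \in S_{\textit{NAF}}$ then, since $S_{\textit{NAF}}$ is the conclusion-set of $\mathcal{E}$, there is some $A\colon (AP,FP) \vdash k \in \mathcal{E}$, and by the description of $\mathcal{E}$ we have $AP \subseteq \Delta_S$. For $FP \subseteq S$, note that by Definition~\ref{def:argument} every $\gamma \in FP$ is the head of a fact $\gamma \leftarrow\; \in \mathcal{R}_{\mathcal{P}} = \mathcal{P}$, hence $\gamma \in Lit_{\mathcal{P}}$ and trivially $\mathcal{P} \; \cup \; \Delta_S \vdash_{MP} \gamma$, so $\gamma \in S$ by Lemma~\ref{lem:AS_MP}. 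Item~2 is the contrapositive of the defining property of a corresponding stable extension: if some $A\colon (AP,FP) \vdash k$ were in $\mathcal{E}$, then $k$ would be the conclusion of an argument in $\mathcal{E}$, i.e.\ $k \in S_{\textit{NAF}}$, contradicting $k \notin S_{\textit{NAF}}$.

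I do not expect a genuine obstacle here; the only point that needs care is the bookkeeping establishing that the conclusion-set of $\{(AP,FP) \vdash k \; | \; AP \subseteq \Delta_S\}$ really is $S_{\textit{NAF}}$ and that this forces $\mathcal{E}$ to have exactly that shape — essentially a repackaging of steps already carried out in the proofs of Theorems~\ref{lem:corr_ext} and~\ref{lem:corr_stable}. If one prefers to avoid even that, item~1 can be obtained directly by applying Lemma~\ref{lem:stable_ext} to $\mathcal{E}$ itself and computing $\Lambda_{\mathcal{E}} = \Delta_S$ as above.
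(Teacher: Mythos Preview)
Your proposal is correct and follows essentially the same approach as the paper: both use Theorem~\ref{lem:corr_ext} to identify $S_{\textit{NAF}}$ with the conclusion-set of $\mathcal{E}$ (giving item~2 by contradiction and the existence part of item~1), use Theorem~\ref{lem:corr_stable} to obtain $AP \subseteq \Delta_S$, and observe that facts of a consistent program lie in every answer set to get $FP \subseteq S$. The only cosmetic difference is that you spell out explicitly why $\mathcal{E}$ must equal $\{(AP,FP)\vdash k \mid AP \subseteq \Delta_S\}$, whereas the paper simply invokes Theorem~\ref{lem:corr_stable} for this step.
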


\begin{proof}
\begin{enumerate}
 \item By Theorem~\ref{lem:corr_ext}, $S_{NAF} = \{k_1 \;|\; \exists (AP, FP) \vdash k_1 \in \mathcal{E}\}$,
 so if $k \in S_{NAF}$ then there exists at least one argument $A: (AP, FP) \vdash k \: \in \mathcal{E}$.
  By Theorem~\ref{lem:corr_stable}, $\mathcal{E} = \{ (AP_1,FP_1) \vdash k_1 \; | \; AP_1 \subseteq \Delta_S\}$,
  so it follows that for argument A, $AP \subseteq \Delta_S$.
  Furthermore, $FP \subseteq S$ because $FP \subseteq \{\beta\; | \; \beta \leftarrow \: \in \mathcal{P}\}$ and for consistent logic programs
  it trivially holds that $\{\beta\; | \; \beta \leftarrow \: \in \mathcal{P}\} \subseteq S$.
  \item Assume that there exists $A: (AP, FP) \vdash k$ in $ABA_{\mathcal{P}}$ such that $A \in \mathcal{E}$.
  Then according to Theorem~\ref{lem:corr_ext}, $k \in S_{\textit{NAF}}$.
  Contradiction.   
\end{enumerate}\hfill
\end{proof}

\begin{example}
 \label{p1_correspondingArg}
 As demonstrated in Example \ref{ex:p1_as_stable}, the answer sets of $\mathcal{P}_1$ correspond to the stable extensions of $ABA_{\mathcal{P}_1}$,
 where $S_1$ corresponds to $\mathcal{E}_1$ and $S_2$ corresponds to $\mathcal{E}_2$.
 When taking a closer look at $S_{1_{NAF}}$, we can verify that every literal has a corresponding argument in $\mathcal{E}_1$:
 $e$ has $A_{14}$, $d$ has $A_{13}$, $a$ has $A_9$, $not~ \neg a$ has $A_2$, $not~ c$ has $A_3$, and so on.
 Furthermore, for all literals not contained in $S_{1_{NAF}}$, there is no argument with this conclusion in the stable extension $\mathcal{E}_1$,
 e.g. $\neg a \notin S_{1_{NAF}}$ and $A_{11} \notin \mathcal{E}_1$.
 The same relationship holds between $S_2$ and $\mathcal{E}_2$.
\end{example}

 Note that the first part of Theorem~\ref{the:AsStable} only states that for a literal $k$ in the answer set with NAF literals
 there exists a corresponding argument in the corresponding stable extension.
 However, there might be further arguments $(AP, FP) \vdash k$ which are not part of the corresponding stable extension, where $AP \nsubseteq \Delta_S$.
  Note also that the second part of Theorem~\ref{the:AsStable} does not exclude the existence of arguments with conclusion $k$.
  It merely states that no such argument is contained in the corresponding stable extension.
  
  \begin{example}
   Consider the logic program $\mathcal{P}_1$ and its answer set $S_1$.
   $a \in S_1$ has the corresponding argument $A_9$ in $\mathcal{E}_1$, but there is another argument with conclusion $a$ in $ABA_{\mathcal{P}_1}$
   which is not in $\mathcal{E}_1$, namely $A_{10}$.
   Furthermore, $c \notin S_1$ but there exists an argument with conclusion $c$ in $ABA_{\mathcal{P}_1}$, namely $A_{12}$.
  As expected, this argument is not contained in the corresponding stable extension $\mathcal{E}_1$.
  \end{example}

  Theorem~\ref{the:AsStable}, part 1, provides the starting point for our justification approaches as it allows us to explain
  why a literal is in an answer set based
  on the reasons for a corresponding argument to be in the corresponding stable extension.
  Similarly, Theorem~\ref{the:AsStable}, part 2, is a starting point for justifying that a literal is not contained in an answer set
  based on arguments for that literal, all of which are not contained in the corresponding stable extension.
  In ABA it is easy to explain why an argument is or is not contained in a stable extension:
  An argument is part of a stable extension if it is not attacked by it.
  Since the stable extension attacks all arguments which are not part of it, this entails that an argument in the stable extension is defended
  by the stable extension, i.e. the stable extension attacks all attackers of this argument.
  Conversely, an argument is not part of a stable extension if it is attacked by this stable extension.
  In the following section, we will make use of these results in order to develop a justification method that provides explanations in terms of arguments and attacks between them.

%%%%%%%%%%%%%%%%%%%%%%%%%%%%%%%%%%%%%%%%%%%%%%%%%%%%%%%%%%%%%%%%%%%%%%%%%%%%%%%%%%%%%%%%%%%%%%%%%%%%%%%%%%%%%%%%%%%%%%%%%%%%%%%%%%%%%%%%%%%%%%%%%%%%%%%%%%%%%%%%%%
%%%%%%%%%%%%%%%%%%%%%%%%%%%%%%%%%%%%%%%%%%%%%%%%%%%%%%%%%%%%%%%%%%%%%%%%%%%%%%%%%%%%%%%%%%%%%%%%%%%%%%%%%%%%%%%%%%%%%%%%%%%%%%%%%%%%%%%%%%%%%%%%%%%%%%%%%%%%%%%%%%
\section{Attack Trees}
\label{sec:attackTrees}

Our first justification approach explains why arguments are or are not contained in a stable extension by constructing
an \emph{Attack Tree} of this argument with respect to the stable extension.
This tree of attacking arguments is later used to construct a justification in terms of literals:
Due to the correspondence between answer sets and stable extensions,
a justification of a literal $k$ with respect to an answer set can be obtained from an Attack Tree of an argument with conclusion $k$
constructed with respect to the corresponding stable extension.
In this section we define the notion of Attack Trees and show their relationship with abstract dispute trees for ABA,
characterizing the explanations they provide as
admissible fragments of the stable extension as well as of the answer set if an Attack Tree is constructed with respect to a corresponding stable extension.

%%%%%%%%%%%%%%%%%%%%%%%%%%%%%%%%%%%%%%%%%%%%%%%%%%%%%%%%%%%%%%%%%%%%%%%%%%%%%%%%%%%%%%%%%%%%%%%%%%%%%%%%%%%%%%%%%%%%%%%%%%%%%%%%%%%%%%%%%%%%%%%%%%%%%%%%%%%%%%%%%%
\subsection{Constructing Attack Trees}
\label{sec:attackTrees_theory}

Nodes in an Attack Tree hold arguments which are labelled either \textquotesingle$+$\textquotesingle\ or \textquotesingle$-$\textquotesingle.
An Attack Tree of an argument $A$ has $A$ itself in the root node, where either one or all attackers of $A$ form the child nodes of this root.
In the same way, every of these child nodes holding some argument $B$ have either all or one of $B$'s attackers as children, and so on.
Whether only one or all attackers of an argument are considered as child nodes depends on the argument's label in the Attack Tree,
which is determined with respect to a given set of arguments
(typically a stable extension of the translated ABA framework).
If an argument is part of given set, it is labelled \textquotesingle$+$\textquotesingle\ and has all its attackers as child nodes.
If the argument is not contained in the set, it is labelled \textquotesingle$-$\textquotesingle\ and has exactly one of its attackers as a child node.
\begin{definition}[Attack Tree]
\label{def:attackTree}
 Let $\mathcal{P}$ be a logic program,
 $X$ a set of arguments in $ABA_{\mathcal{P}}$, and
 $A$ an argument in $ABA_{\mathcal{P}}$.
 An \emph{Attack Tree} of $A$ (constructed) with respect to $X$,
 denoted $attTree_{X}(A)$, is a (possibly infinite) tree such that:
  \begin{enumerate}
  \item Every node in $attTree_{X}(A)$ holds an argument in $ABA_{\mathcal{P}}$,
  labelled \textquotesingle$+$\textquotesingle\ or \textquotesingle$-$\textquotesingle.
  \item The root node is $A^+$ if $A \in X$ or $A^-$ if $A \notin X$.
  \item For every node $A_N^+$ and for every argument $A_i$ attacking $A_N$ in $ABA_{\mathcal{P}}$,
  there exists a child node $A_i^-$ of $A_N^+$.
  \item Every node $A_N^-$
  \begin{enumerate}[(i)]
   \item has no child node if $A_N$ is not attacked in $ABA_{\mathcal{P}}$ or if for all attackers $A_i$ of $A_N$: $A_i \notin X$; or else
   \item has exactly one child node $A_i^+$ for some $A_i \in X$ attacking $A_N$.
  \end{enumerate}
  \item There are no other nodes in $attTree_{X}(A)$ except those given in 1-4.
 \end{enumerate}
 \end{definition}

 If $attTree_{X}(A)$ is an Attack Tree of $A$ with respect to $X$ we also say that $A$ \emph{has the Attack Tree} $attTree_{X}(A)$.
 Note that due to condition 4(ii), where only one of possibly many arguments $A_i$ is chosen, an argument can have more than one Attack Tree.
 Furthermore, note the difference between 3, where $A_i$ is any argument attacking $A_N$, and 4(ii), where $A_i$ has to be an attacking argument contained in $X$.
 
\begin{notation}
\label{not:attTree_posneg}
 If $A \in X$, and thus the root node of $attTree_{X}(A)$ is $A^+$, we denote the Attack Tree as $attTree_{X}^+(A)$ and call it a \emph{positive Attack Tree}.
 If $A \notin X$, and thus the root node of $attTree_{X}(A)$ is $A^-$, we denote the Attack Tree as $attTree_{X}^-(A)$ and call it a \emph{negative Attack Tree}.
\end{notation}

The next example illustrates the notion of Attack Trees with respect to a set of arguments which is a stable extension.

\begin{example}
 \label{ex:p1_attackTrees}
 We consider the logic program $\mathcal{P}_1$ and its translated ABA framework $ABA_{\mathcal{P}_1}$ (see Example~\ref{ex:p1}).
 Figure~\ref{fig:p1_attackTrees_a6} shows the two negative Attack Trees of argument $A_{10}$ with respect to the stable extension $\mathcal{E}_1 = \{A_2,A_3,A_4,A_6,A_8, A_9,A_{13}, A_{14}\}$,
 i.e. $attTree_{\mathcal{E}_1}^-(A_{10})_1$ and $attTree_{\mathcal{E}_1}^-(A_{10})_2$.
 Since $A_{10} \notin \mathcal{E}_1$, the root node of all Attack Trees of $A_{10}$ holds $A_{10}^-$,
 and consequently has exactly one or not attacker of $A_{10}$ as a child node.
 $A_{10}$ is attacked by the three arguments $A_{12}$, $A_{13}$, and $A_{14}$ (see Figure~\ref{fig:attacks_p1}),
 so these are the candidates for being a child node of $A_{10}^-$.
 However, $A_{12}^+$ cannot serve as a child node of $A_{10}^-$ as $A_{12} \notin \mathcal{E}_1$ (see condition 4(ii) in Definition~\ref{def:attackTree}).
 Since both $A_{13}$ and $A_{14}$ are contained in $\mathcal{E}_1$, either of them can be used as a child node of $A_{10}^-$,
 leading to two possible Attack Trees of $A_{10}$.
  The left of Figure~\ref{fig:p1_attackTrees_a6} depicts the negative Attack Tree $attTree_{\mathcal{E}_1}^-(A_{10})_1$
  where $A_{14}^+$ is chosen as the child node of $A_{10}^-$, whereas the right illustrates $attTree_{\mathcal{E}_1}^-(A_{10})_2$ where $A_{13}^+$ is chosen.
 $attTree_{\mathcal{E}_1}^-(A_{10})_1$ ends with $A_{14}^+$ since $A_{14}$ is not attacked in $ABA_{\mathcal{P}_1}$.
 In contrast, choosing $A_{13}^+$ as the child node of $A_{10}^-$ leads to an infinite negative Attack Tree $attTree_{\mathcal{E}_1}^-(A_{10})_2$:
 $A_{13}^+$ has a single child $A_{11}^-$ since $A_{11}$ is the only argument attacking $A_{13}$;
 $A_{11}$ is attacked by both $A_{12}$ and $A_{13}$ in $\mathcal{P}_1$, but only $A_{13}^+$ can serve as a child node of $A_{11}^-$ as $A_{12} \notin \mathcal{E}_1$;
 at this point, the Attack Tree starts to repeat itself, since the only possible child node of $A_{11}^-$ is $A_{13}^+$, whose only child node is $A_{11}^-$, and so on.\\
 With respect to the stable extension $\mathcal{E}_2 = \{A_1,A_3,A_4,A_5,A_6,A_8,A_{11}, A_{14}\}$ (see Example~\ref{ex:p1_as_stable}),
 $A_{10}$ has a unique negative Attack Tree $attTree_{\mathcal{E}_2}^-(A_{10})$,
 which is exactly the same as $attTree_{\mathcal{E}_1}^-(A_{10})_1$.
 The reason is that only $A_{14}^+$ can serve as a child node of $A_{10}^-$ since both $A_{12} \notin \mathcal{E}_2$ and $A_{13} \notin \mathcal{E}_2$.
\end{example}

 \begin{figure}[t]
 \centering
 \includegraphics[width=0.9\textwidth]{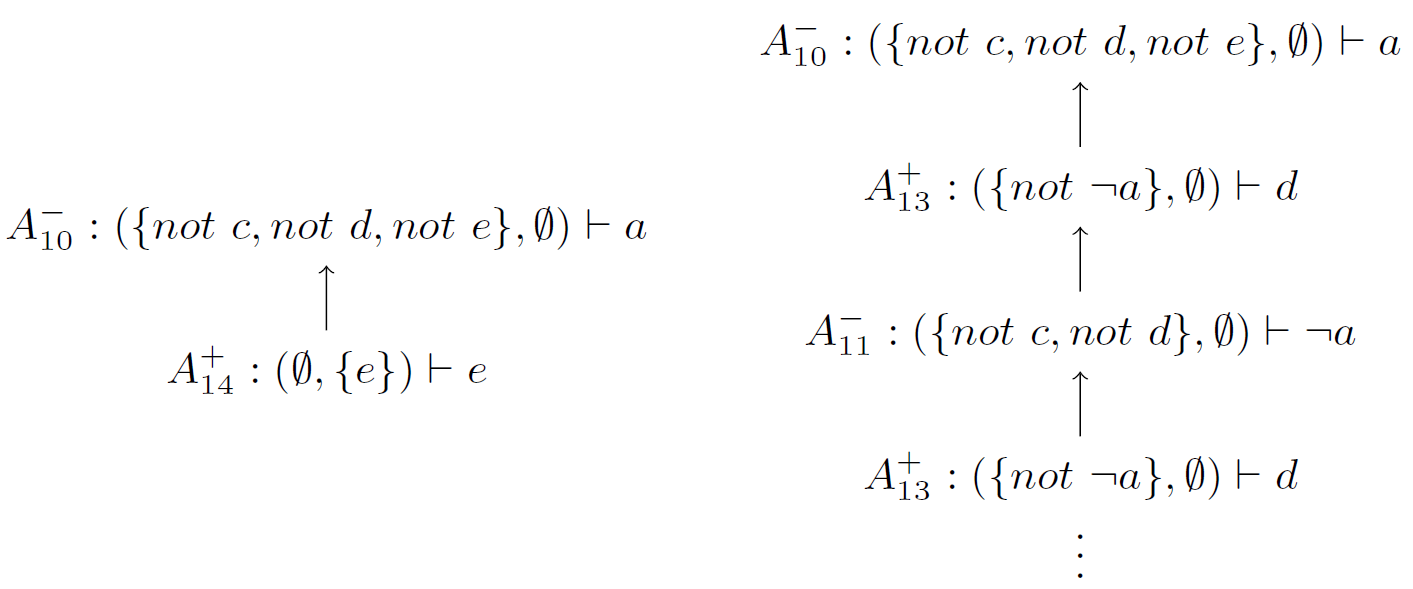}
  \caption{The two negative Attack Trees $attTree_{\mathcal{E}_1}^-(A_{10})_1$ (left) and $attTree_{\mathcal{E}_1}^-(A_{10})_2$ (right) 
  of $A_{10}$ with respect to $\mathcal{E}_1$, as described in Example~\ref{ex:p1_attackTrees}.
  The left Attack Tree is also the unique negative Attack Tree $attTree_{\mathcal{E}_2}^-(A_{10})$ of $A_{10}$ with respect to $\mathcal{E}_2$.}
  \label{fig:p1_attackTrees_a6}
\end{figure}

Figure~\ref{fig:p1_attackTrees_a6} illustrates that an argument might have more than one Attack Tree, as well as that Attack Trees can be infinite.
Figure~\ref{fig:p1_attackTrees_a5} depicts another negative Attack Tree, illustrating the case where a node labelled \textquotesingle$+$\textquotesingle\ has more than one child node.
Note that every argument in an ABA framework has at least one Attack Tree.
 However, an Attack Tree may solely consist of the root, for example the unique positive Attack Tree $attTree_{\mathcal{E}_1}^+(A_{14})$ of $A_{14}$
 with respect to the stable extension $\mathcal{E}_1$ consists of only one node, namely the root node $A_{14}^+$ as this argument has no attackers.
 
\begin{figure}[t]
 \centering
 \includegraphics[width=\textwidth]{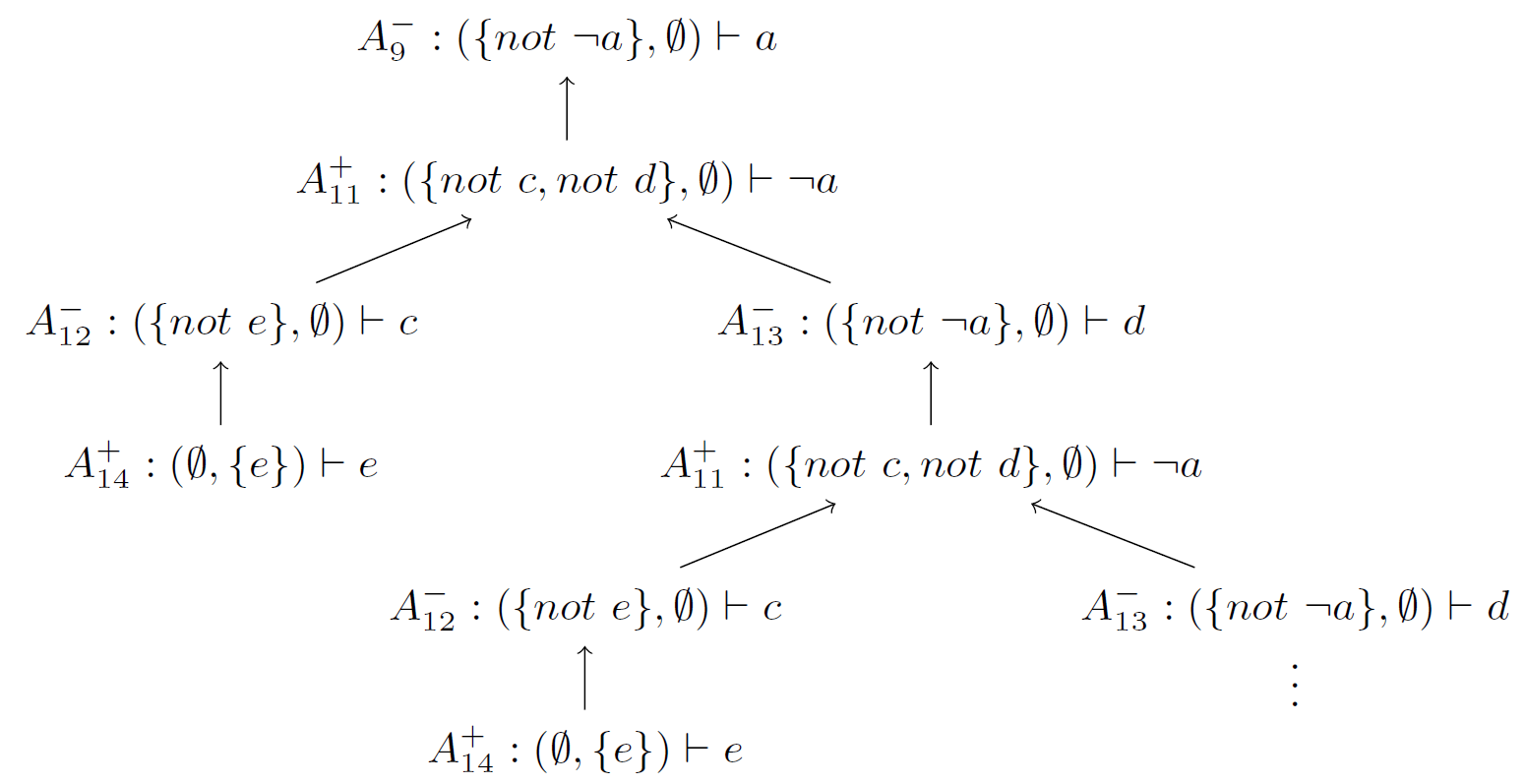}
 \caption{The unique negative Attack Tree $attTree_{\mathcal{E}_2}^-(A_9)$ of $A_9$ with respect to the stable extension
 $\mathcal{E}_2$ in $ABA_{\mathcal{P}_1}$ (see Examples~\ref{ex:p1} and \ref{ex:p1_as_stable}).}
 \label{fig:p1_attackTrees_a5}
\end{figure}

From the definition of Attack Trees it follows that the Attack Trees of an argument are either all positive or all negative.
\begin{lemma}
 \label{lem:pos_vs_neg_attTrees}
 Let $\mathcal{P}$ be a logic program and
 let $X$ be a set of arguments in $ABA_{\mathcal{P}}$.
 \begin{enumerate}
  \item If $A \in X$ then all Attack Trees of $A$ with respect to $X$ are positive Attack Trees $attTree_{X}^+(A)$.
  \item If $A \notin X$ then all Attack Trees of $A$ with respect to $X$ are negative Attack Trees $attTree_{X}^-(A)$.
 \end{enumerate}
\end{lemma}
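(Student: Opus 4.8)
The plan is straightforward: both items are immediate consequences of condition~2 of Definition~\ref{def:attackTree} together with the naming convention of Notation~\ref{not:attTree_posneg}, so the argument amounts to unwinding these definitions and observing that the dichotomy ``$A \in X$'' versus ``$A \notin X$'' is exhaustive and mutually exclusive. I do not expect any genuine obstacle; the only thing to be careful about is to make clear that the root label of an Attack Tree is fixed by $X$ before any choices in conditions 3 and 4(ii) are made.

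Concretely, I would fix a logic program $\mathcal{P}$, a set of arguments $X$ in $ABA_{\mathcal{P}}$, an argument $A$, and let $attTree_{X}(A)$ be an arbitrary Attack Tree of $A$ with respect to $X$. By condition~1 of Definition~\ref{def:attackTree} the root node holds $A$ labelled \textquotesingle$+$\textquotesingle\ or \textquotesingle$-$\textquotesingle, and condition~2 determines this label uniquely from $X$: the root is $A^+$ precisely when $A \in X$, and $A^-$ precisely when $A \notin X$. Since exactly one of $A \in X$, $A \notin X$ holds, the same case applies to \emph{every} Attack Tree of $A$ with respect to $X$ simultaneously, regardless of how the child nodes in conditions 3 and 4(ii) are selected.

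For item~1, if $A \in X$ then the root of $attTree_{X}(A)$ is $A^+$, so by Notation~\ref{not:attTree_posneg} $attTree_{X}(A)$ is a positive Attack Tree $attTree_{X}^+(A)$; as $attTree_{X}(A)$ was arbitrary, all Attack Trees of $A$ with respect to $X$ are positive. Item~2 is symmetric: if $A \notin X$ the root of any Attack Tree of $A$ is $A^-$, which by Notation~\ref{not:attTree_posneg} makes it a negative Attack Tree $attTree_{X}^-(A)$, so all Attack Trees of $A$ with respect to $X$ are negative. This completes the plan; in the write-up the only point worth stating explicitly is that condition~2 of Definition~\ref{def:attackTree} never allows a root label that disagrees with membership of $A$ in $X$.
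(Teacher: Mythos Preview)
Your proposal is correct and takes essentially the same approach as the paper, which simply states that the result follows directly from Definition~\ref{def:attackTree} and Notation~\ref{not:attTree_posneg}. Your write-up is merely a more explicit unpacking of that one-line justification.
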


\begin{proof}
 This follows directly from Definition~\ref{def:attackTree} and Notation~\ref{not:attTree_posneg}. \hfill  
\end{proof}

Intuitively, an Attack Tree of an argument with respect to a set of arguments explains why the argument is or is not in the set by showing
either that the argument is defended by the set, i.e. the set attacks all attackers of the argument, or that the argument is attacked by the sets
and cannot defend itself against it.
The first case explains why the argument is part of the set, whereas the second one justifies that the argument is not part of the set.

%%%%%%%%%%%%%%%%%%%%%%%%%%%%%%%%%%%%%%%%%%%%%%%%%%%%%%%%%%%%%%%%%%%%%%%%%%%%%%%%%%%%%%%%%%%%%%%%%%%%%%%%%%%%%%%%%%%%%%%%%%%%%%%%%%%%%%%%%%%%%%%%%%%%%%%%%%%%%%%%%%
\subsection{Attack Trees with respect to Stable Extensions}
\label{sec:attackTrees_stable}

For justification purposes we construct Attack Trees with respect to stable extensions rather than an arbitrary set of arguments.
This enables us to later extract a justification of a literal with respect to an answer set from an
Attack Tree constructed with respect to the corresponding stable extension.
In this section we show some properties of Attack Trees when constructed with respect to a stable extension, which
hold for both positive and negative Attack Trees.

One of these characteristics is that we can deduce whether or not an argument held by a node in an Attack Tree constructed with respect to a stable extension
is contained in this stable extension:
all arguments labelled \textquotesingle$+$\textquotesingle\ in the Attack Tree
are contained in the stable extension,
whereas all arguments labelled  \textquotesingle$-$\textquotesingle\ are not in the stable extension.

\begin{lemma}
 \label{lem:nodes_in_tree}
 Let $\mathcal{P}$ be a logic program and
 let $\mathcal{E}$ be a stable extension of $ABA_{\mathcal{P}}$.
 Let $\Upsilon = attTree_{\mathcal{E}}(A)$ be an Attack Tree of some argument $A$ in $ABA_{\mathcal{P}}$ with respect to $\mathcal{E}$. Then:
 \begin{enumerate}
  \item For each node $A_i^+$ in $\Upsilon$: $A_i \in \mathcal{E}$.
  \item For each node $A_i^-$ in $\Upsilon$: $A_i \notin \mathcal{E}$.
 \end{enumerate}
\end{lemma}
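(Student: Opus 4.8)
The plan is to prove both items simultaneously by induction on the depth of a node in the Attack Tree $\Upsilon$. The base case is the root node: by condition 2 of Definition~\ref{def:attackTree}, the root is $A^+$ precisely when $A \in \mathcal{E}$ and $A^-$ precisely when $A \notin \mathcal{E}$, so the claim holds at depth $0$. For the inductive step, I would take an arbitrary non-root node $N$ at depth $n+1$ with parent $M$ at depth $n$, and split on the label of $M$.

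First suppose $M$ holds an argument $A_M^+$. By the induction hypothesis $A_M \in \mathcal{E}$, so by condition 3 of Definition~\ref{def:attackTree} the children of $M$ are exactly the nodes $A_i^-$ for every argument $A_i$ attacking $A_M$ in $ABA_{\mathcal{P}}$; thus $N = A_i^-$ for some such $A_i$. Since $\mathcal{E}$ is a stable extension and $A_M \in \mathcal{E}$, the extension does not attack itself, hence $A_i \notin \mathcal{E}$ (otherwise $\mathcal{E}$ would attack its own member $A_M$). This establishes item 2 at $N$. Now suppose instead $M$ holds $A_M^-$. By the induction hypothesis $A_M \notin \mathcal{E}$. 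By condition 4 of Definition~\ref{def:attackTree}, since $M$ has a child it must fall under case 4(ii), so $N = A_i^+$ for some $A_i \in \mathcal{E}$ attacking $A_M$. This directly gives item 1 at $N$. In both cases the label of $N$ matches its membership status in $\mathcal{E}$, completing the induction.

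The only subtlety — and the step I would be most careful about — is the direction of the argument in the $A_M^+$ case: condition 3 does \emph{not} by itself say that the attacker $A_i$ is outside $\mathcal{E}$, so one genuinely needs the conflict-freeness half of the stable extension definition (``$X$ does not attack itself'') together with $A_M \in \mathcal{E}$ to conclude $A_i \notin \mathcal{E}$. Everything else is bookkeeping on the two conditions of Definition~\ref{def:attackTree}. Since Attack Trees may be infinite, I should phrase the induction as being on the depth of a node (every node sits at a finite depth), not as a finite structural induction on the whole tree; with that phrasing the argument goes through for infinite Attack Trees as well.
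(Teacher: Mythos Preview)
Your proof is correct and uses the same core ingredients as the paper: condition 4(ii) of Definition~\ref{def:attackTree} for $+$-labelled children, and conflict-freeness of the stable extension for $-$-labelled children. The paper organizes this slightly more economically: it observes that item~1 needs no induction at all (a $+$-node is either the root, where condition~2 applies, or the unique child of a $-$-node, where condition~4(ii) \emph{directly} requires $A_i \in \mathcal{E}$), and then invokes the already-established item~1 in place of your induction hypothesis when proving item~2. Your simultaneous induction on depth is perfectly valid, just mildly redundant in the $A_M^-$ case where you invoke the hypothesis $A_M \notin \mathcal{E}$ without actually needing it.
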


\begin{proof}
\begin{enumerate}
 \item $A_i^+$ is either the root node, then by definition $A_i \in \mathcal{E}$, or it is the only child node of some $A_N^-$, meaning that by definition $A_i \in \mathcal{E}$.
 \item $A_i^-$ is either the root node, then by definition $A_i \notin \mathcal{E}$, or $A_i^-$ is a child node of some $A_N^+$, and $A_i$ attacks $A_N$.
 From part 1 we know that $A_N \in \mathcal{E}$, hence $A_i \notin \mathcal{E}$ because $\mathcal{E}$ does not attack itself.
\end{enumerate}\hfill
\end{proof}

Another interesting characteristic of an Attack Tree constructed with respect to a stable extension is that
all nodes holding arguments labelled \textquotesingle$-$\textquotesingle\ have exactly one child node, rather than none. 
Furthermore, all leaf nodes hold arguments labelled \textquotesingle$+$\textquotesingle.

 \begin{lemma}
  \label{lem:root_leaves}
  Let $\mathcal{P}$ be a logic program and
  let $\mathcal{E}$ be a stable extension of $ABA_{\mathcal{P}}$.
  Let $\Upsilon = attTree_{\mathcal{E}}(A)$ be an Attack Tree of some argument $A$ in $ABA_{\mathcal{P}}$ with respect to $\mathcal{E}$ . Then:
  \begin{enumerate}
   \item Every node $A_N^-$ in $\Upsilon$ has exactly one child node.
   \item All leaf nodes in $\Upsilon$ hold arguments labelled \textquotesingle$+$\textquotesingle.
  \end{enumerate}
 \end{lemma}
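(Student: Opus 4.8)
The plan is to prove both items by a short structural argument, using the characterisation of node labels from Lemma~\ref{lem:nodes_in_tree} together with the fact that $\mathcal{E}$ is a stable extension. The key observation is that an argument labelled \textquotesingle$-$\textquotesingle\ in an Attack Tree constructed with respect to $\mathcal{E}$ is, by Lemma~\ref{lem:nodes_in_tree}(2), not contained in $\mathcal{E}$, and a stable extension attacks every argument it does not contain. Hence such an argument \emph{is} attacked in $ABA_{\mathcal{P}}$, and moreover it is attacked by an argument \emph{belonging to} $\mathcal{E}$.

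For item~1, consider any node $A_N^-$ in $\Upsilon$. By Lemma~\ref{lem:nodes_in_tree}(2), $A_N \notin \mathcal{E}$. Since $\mathcal{E}$ is a stable extension, it attacks $A_N$, so there exists an attacker $A_i$ of $A_N$ with $A_i \in \mathcal{E}$. Therefore the case 4(i) of Definition~\ref{def:attackTree} cannot apply to $A_N^-$ (that case requires that $A_N$ is not attacked, or that none of its attackers lies in $\mathcal{E}$), so case 4(ii) applies, and $A_N^-$ has exactly one child node. For item~2, let $N$ be a leaf node of $\Upsilon$. By item~1, $N$ cannot hold an argument labelled \textquotesingle$-$\textquotesingle, since every such node has a child; hence $N$ holds an argument labelled \textquotesingle$+$\textquotesingle.

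I do not expect any real obstacle here: the statement is essentially an unravelling of the definitions once Lemma~\ref{lem:nodes_in_tree} is in hand. The only point requiring a little care is making explicit that ``attacked by $\mathcal{E}$'' means attacked \emph{by an argument of $\mathcal{E}$} (Definition~\ref{def:attacks} for attacks by a set), which is precisely what is needed to fall into case 4(ii) rather than 4(i) of the Attack Tree definition; and one should note in passing that a stable extension never attacks itself, so the attacker $A_i \in \mathcal{E}$ is genuinely distinct from the situation in 4(i). Both parts are finished once this is spelled out.
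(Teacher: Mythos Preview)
Your proposal is correct and follows essentially the same approach as the paper: use Lemma~\ref{lem:nodes_in_tree} to conclude $A_N \notin \mathcal{E}$, invoke the definition of stable extension to obtain an attacker of $A_N$ lying in $\mathcal{E}$, and then observe that condition~4(i) of Definition~\ref{def:attackTree} is excluded so condition~4(ii) must apply; part~2 is then immediate from part~1. If anything, your treatment of the two disjuncts in condition~4(i) is slightly more explicit than the paper's, which phrases the argument as a contradiction and only mentions the first disjunct, but the underlying reasoning is identical.
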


 \begin{proof}
 \begin{enumerate}
  \item By condition 4 in Definition~\ref{def:attackTree}, any node $A_N^-$ in an Attack Tree has either no or exactly one child node.
  By  Lemma~\ref{lem:nodes_in_tree} $A_N \notin \mathcal{E}$.
  Assume that $A_N^-$ has no child node. Then $A_N$ is not attacked in  $ABA_{\mathcal{P}}$.
  But by definition of stable extension
  all arguments not contained in a stable extension are attacked by the stable extension. Contradiction.
  \item This follows directly from part 1 as nodes holding an argument labelled \textquotesingle$-$\textquotesingle\ always have a child node and thus cannot be a leaf node.
 \end{enumerate} \hfill
 \end{proof}
Note that infinite branches of Attack Trees do not have leaf nodes, in which case the second part of Lemma~\ref{lem:root_leaves} is trivially satisfied.

Lemma~\ref{lem:root_leaves} highlights how an Attack Tree justifies an argument $A$ with respect to a stable extension.
If the argument $A$ is part of the stable extension, the Attack Tree shows that the reason is that $A$ is defended by the stable extension.
This means that any attackers of $A$ are counter-attacked by an argument in the stable extension, defending $A$ against the attacker,
and even if the defending argument is further attacked, there will be another argument in the stable extension defending this defender,
until eventually the defending arguments from the stable extension are not further attacked, forming the leaf nodes of the Attack Tree.
Thus $A$ is defended by the stable extension and consequently belongs to it.
If an argument $A$ is not part of the stable extension, the leaf nodes of the Attack Tree again hold arguments from the stable extension,
but this time these leaf nodes defend the argument attacking $A$, 
meaning that this attacker is contained in the stable extension.
Thus, $A$ is attacked by the stable extension and consequently $A$ is not part of the stable extension.

Lemma~\ref{lem:root_leaves} also emphasizes the idea that to justify an argument which is not in the stable extension,
it is enough to show that one of its attackers is contained in the stable extension, even if there might be more than one such attacker.
This follows the general proof concept that something can be disproven by giving one counter-example.
So an Attack Tree disproves that the argument held by the root node is in the stable extension by showing one way
in which the argument is attacked by the stable extension.

From these considerations is follows directly that the subtree of any negative Attack Tree obtained by removing the root node is a positive Attack Tree of the argument
attacking the root node.

\begin{lemma}
 \label{lem:subtrees}
 Let $\mathcal{P}$ be a logic program and let $\mathcal{E}$ be a stable extension of $ABA_{\mathcal{P}}$.
 Let $\Upsilon = attTree^-_{\mathcal{E}}(A)$ be an Attack Tree of some argument $A \notin \mathcal{E}$ and let $A_i^+$ be the (only) child node of the root node $A^-$ in $attTree^-_{\mathcal{E}}(A)$.
 Let $\Upsilon'$ be the subtree of $\Upsilon$ with root node $A_i^+$ obtained from $\Upsilon$ by removing its root node $A^-$.
 Then $\Upsilon'$ is a positive Attack Tree of $A_i$.
\end{lemma}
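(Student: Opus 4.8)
The plan is to verify directly that $\Upsilon'$ satisfies all five conditions of Definition~\ref{def:attackTree} as an Attack Tree of $A_i$ with respect to $\mathcal{E}$, and in addition that its root node is $A_i^+$, so that by Notation~\ref{not:attTree_posneg} it is a positive Attack Tree $attTree^+_{\mathcal{E}}(A_i)$. First I would note that the child node $A_i^+$ mentioned in the statement is well defined: since $A \notin \mathcal{E}$, Lemma~\ref{lem:root_leaves}(1) guarantees that the root $A^-$ of $\Upsilon$ has exactly one child node, and by condition 4(ii) of Definition~\ref{def:attackTree} this child is of the form $A_i^+$ for some $A_i \in \mathcal{E}$ attacking $A$ (the membership $A_i \in \mathcal{E}$ also follows from Lemma~\ref{lem:nodes_in_tree}(1)). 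Since the root of $\Upsilon'$ is exactly this node $A_i^+$ and $A_i \in \mathcal{E}$, condition 2 is satisfied and $\Upsilon'$ is positive.

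The key observation for the remaining conditions is that deleting the root $A^-$ from $\Upsilon$ does not change the parent--child structure of any other node: every node of $\Upsilon'$ is a node of $\Upsilon$ other than the removed root, and it has in $\Upsilon'$ exactly the children it had in $\Upsilon$. Condition 1 is then immediate, as it held for $\Upsilon$. For condition 3, any node $A_N^+$ occurring in $\Upsilon'$ is a node $A_N^+$ of $\Upsilon$ (the removed root was labelled \textquotesingle$-$\textquotesingle), so in $\Upsilon$ it already had a child $A_j^-$ for every argument $A_j$ attacking $A_N$, and all these children lie in $\Upsilon'$. Symmetrically, for condition 4, any node $A_N^-$ of $\Upsilon'$ has, both in $\Upsilon$ and in $\Upsilon'$, exactly the child (or no child) prescribed by 4(i)/4(ii). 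Note that $\Upsilon'$ may still be infinite, which is permitted by the definition.

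The only point requiring slightly more care is condition 5, the exactness of the node set. Here I would argue that, by condition 5 applied to $\Upsilon$, the node set of $\Upsilon$ is precisely the root $A^-$ together with the nodes forced by conditions 3 and 4; since the unique child of $A^-$ is $A_i^+$, the node set of $\Upsilon'$ is $\{A_i^+\}$ together with exactly the nodes forced via conditions 3 and 4 to lie below $A_i^+$. Because conditions 3 and 4 are local --- they only relate a node to its immediate children --- this is exactly the node set required by conditions 1--5 for an Attack Tree of $A_i$ rooted at $A_i^+$, so $\Upsilon'$ contains no missing and no superfluous nodes. Putting all of this together, $\Upsilon'$ is an Attack Tree of $A_i$ with respect to $\mathcal{E}$ whose root is $A_i^+$, i.e.\ a positive Attack Tree of $A_i$. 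I do not expect a genuine obstacle here; the only thing to handle carefully is phrasing condition 5 so as to make precise that ``the subtree of $\Upsilon$ below $A_i^+$'' coincides with ``the tree generated from root $A_i^+$ by conditions 3 and 4''.
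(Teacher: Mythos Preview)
Your proposal is correct and takes essentially the same approach as the paper: the paper's proof is simply the one-liner ``This follows directly from Definition~\ref{def:attackTree} and Notation~\ref{not:attTree_posneg}'', and your argument is precisely the spelled-out verification of the five conditions that underlies this claim. Your additional care with condition~5 is not strictly necessary here but does no harm.
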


\begin{proof}
 This follows directly from Definition~\ref{def:attackTree} and Notation~\ref{not:attTree_posneg}.
\end{proof}

This observation will be useful when comparing Attack Trees to abstract dispute trees in the following section.
Example~\ref{ex:doctor_attackTree} demonstrates how an Attack Tree can be used to explain why a literal is or is not contained in an answer set in terms of an argument for this literal.

\begin{example}
 \label{ex:doctor_attackTree}
 Consider Dr. Smith, his patient Peter, and the decision support system introduced in Section~\ref{sec:background_example}.
 In order to explain to Dr. Smith why $laserSurgery$ is not a suggested treatment of the decision support system, an Attack Tree for 
 an argument with conclusion $laserSurgery$ with respect to the corresponding stable extension of the answer set $S_{doctor}$ can be constructed.
 Figure~\ref{fig:doctor_attackTree_surgery} displays such an Attack Tree, which expresses that 
 Peter should not have laser surgery as the decision to use laser surgery is based on the assumption that the patient is not tight on money;
 however there is evidence that Peter is tight on money as he is known to be a student and there is no evidence against the assumption that his parents are not rich.
 Note that this is not the only Attack Tree for $A_1$ and therefore not the only possible explanation why Peter should not have laser surgery:
 a second Attack Tree can be constructed using an argument with conclusion $correctiveLens$ as an attacker of $A_1$.
 
 On the other hand, Dr. Smith might want to know why the treatment recommended by the decision support system is $intraocularLens$.
 The respective Attack Tree is illustrated in Figure~\ref{fig:doctor_attackTree_intraocular}.
 It expresses that Peter should get intraocular lenses because for every possible evidence against intraocular lenses ($A_1$, $A_4$, $A_6$) there is counter-evidence ($A_2$, $A_5$, and $A_7$ respectively):
 for example, receiving intraocular lenses is based on the assumption that it has not been decided that the patient should have glasses.
 Even though there is some evidence that Peter could have glasses, this evidence is based on the assumption that he does not care about the practicality of his treatment.
 However, it is known that Peter cares about practicality since he likes to do sports.
\end{example}

\begin{figure}[t]
\centering
\includegraphics[width=0.9\textwidth]{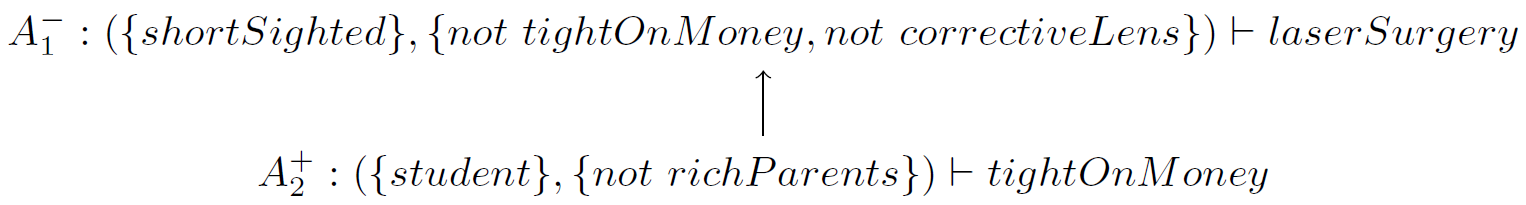}
 \caption{A negative Attack Tree of the argument $A_1$ with conclusion $laserSurgery$ with respect to the corresponding stable extension of the answer set $S_{doctor}$ of the
 logic program $\mathcal{P}_{doctor}$ (see Example~\ref{ex:doctor}), explaining why Peter should not have laser surgery as treatment of his shortsightedness.}
 \label{fig:doctor_attackTree_surgery}
\end{figure}

\begin{figure}[t]
\centering
\includegraphics[width=\textwidth]{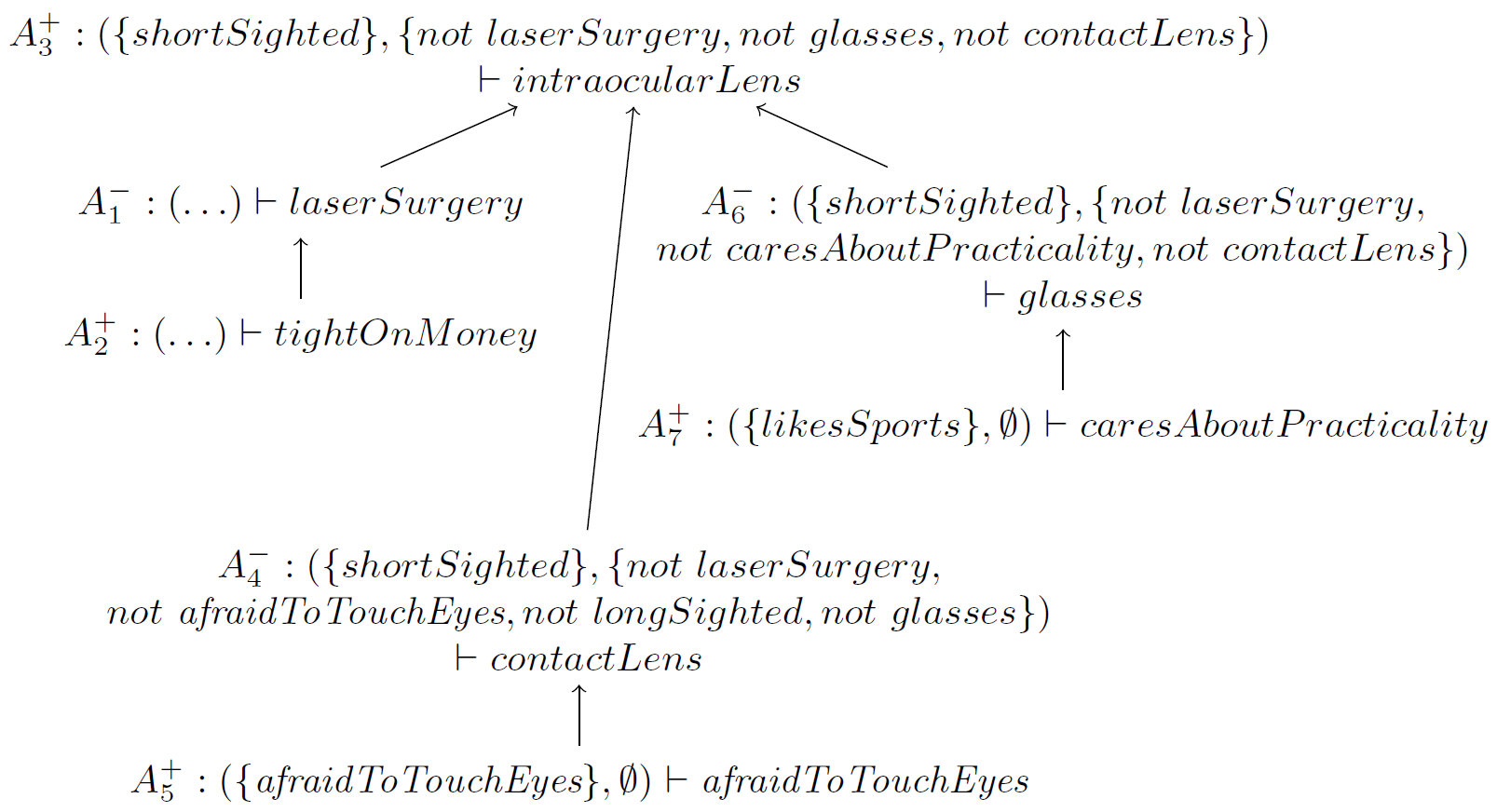}
 \caption{A positive Attack Tree of the argument $A_3$ with conclusion $intraocularLens$ with respect to the corresponding stable extension of the answer set $S_{doctor}$ of the
 logic program $\mathcal{P}_{doctor}$ (see Example~\ref{ex:doctor}), explaining why Peter should get intraocular lenses as treatment of his shortsightedness.
 The nodes holding $A_1^-$ and $A_2^+$ are abbreviated as they are the same as in Figure~\ref{fig:doctor_attackTree_surgery}.}
 \label{fig:doctor_attackTree_intraocular}
\end{figure}

%%%%%%%%%%%%%%%%%%%%%%%%%%%%%%%%%%%%%%%%%%%%%%%%%%%%%%%%%%%%%%%%%%%%%%%%%%%%%%%%%%%%%%%%%%%%%%%%%%%%%%%%%%%%%%%%%%%%%%%%%%%%%%%%%%%%%%%%%%%%%%%%%%%%%%%%%%%%%%%%%%
\subsection{Relationship between Attack Trees and Abstract Dispute Tress}
\label{sec:attackTrees_disputeTrees}

In order to further characterize Attack Trees, we prove that Attack Trees constructed with respect to stable extensions
are special cases of abstract dispute trees \cite{abaDialectical}.
Using this correspondence, we show that Attack Trees provide explanations of an argument in terms of an admissible fragment of the stable extension.
This result is then extended, proving that given a literal $k$ and an answer set,
an Attack Tree of an argument with conclusion $k$ with respect to the corresponding stable extension provides a justification in terms of an admissible
fragment of the answer set.
We first define a translation of the nodes holding arguments labelled \textquotesingle$+$\textquotesingle\ and \textquotesingle$-$\textquotesingle\ in Attack Trees into 
the status of proponent and opponent nodes in abstract dispute trees.

\begin{definition}[Translated Abstract Dispute Tree]
\label{def:translated_dispTree}
 Let $\mathcal{P}$ be a logic program, $X$ a set of arguments in $ABA_{\mathcal{P}}$, and 
 $attTree_{X}(A)$ an Attack Tree of some argument $A$ in $ABA_{\mathcal{P}}$ with respect to $X$.
 The \emph{translated abstract dispute tree} $\mathcal{T}_X(A)$ is obtained form $attTree_{X}(A)$ by assigning the status of proponent to all nodes holding an argument labelled \textquotesingle$+$\textquotesingle,
 the status of opponent to all nodes holding an argument labelled \textquotesingle$-$\textquotesingle, 
 and dropping the labels \textquotesingle$+$\textquotesingle\ and \textquotesingle$-$\textquotesingle\ of all arguments in the tree.
\end{definition}

If Attack Trees are constructed with respect to a stable extension, they correspond to abstract dispute trees in the following way:
\begin{lemma}
\label{lem:attackTree_disputeTree}
 Let $\mathcal{P}$ be a logic program and $\mathcal{E}$ a stable extension of $ABA_{\mathcal{P}}$.
 Let $attTree_{\mathcal{E}}(A)$ be an Attack Tree of some argument $A$ in $ABA_{\mathcal{P}}$ with respect to $\mathcal{E}$ and let $\mathcal{T}_{\mathcal{E}}(A)$ be the translated abstract dispute tree.
 Then:
 \begin{enumerate}
  \item If $A \in \mathcal{E}$, then $\mathcal{T}_{\mathcal{E}}(A)$ is an abstract dispute tree for $A$.
  \item If $A \notin \mathcal{E}$, then the subtree of $\mathcal{T}_{\mathcal{E}}(A)$ with root node $A_i$,
  where $A_i^+$ is the only child of the root $A^-$ in $attTree_{\mathcal{E}}(A)$, is an abstract dispute tree for $A_i$.
 \end{enumerate}
\end{lemma}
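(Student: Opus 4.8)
The plan is to verify the five defining conditions of an abstract dispute tree directly against the structure of $attTree_{\mathcal{E}}(A)$, using the properties of Attack Trees constructed with respect to a stable extension that were established in Lemmas~\ref{lem:nodes_in_tree}--\ref{lem:subtrees}. I would handle part~1 ($A \in \mathcal{E}$) first and then reduce part~2 to part~1 via Lemma~\ref{lem:subtrees}.

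For part~1, observe that since $A \in \mathcal{E}$, by Lemma~\ref{lem:pos_vs_neg_attTrees} the Attack Tree is positive, so its root node is $A^+$, which after translation becomes a proponent node labelled by $A$; this gives conditions~1 and~2 of an abstract dispute tree (every node has exactly one of the two statuses, and the root is a proponent node labelled by $A$). For condition~3, a proponent node of $\mathcal{T}_{\mathcal{E}}(A)$ corresponds to a node $A_N^+$ of the Attack Tree, and condition~3 of Definition~\ref{def:attackTree} guarantees that \emph{every} argument $A_i$ attacking $A_N$ appears as a child node $A_i^-$, i.e. as an opponent child, which is exactly what abstract dispute trees demand. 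For condition~4, an opponent node corresponds to a node $A_N^-$; by Lemma~\ref{lem:root_leaves}(1) such a node has \emph{exactly one} child, and by Definition~\ref{def:attackTree}(4)(ii) that child is of the form $A_i^+$ with $A_i \in \mathcal{E}$ attacking $A_N$ — so after translation it is the unique proponent child labelled by an argument attacking $A_N$, matching condition~4. Condition~5 (no other nodes) follows immediately from condition~5 of Definition~\ref{def:attackTree} together with the fact that the translation only relabels and does not add or remove nodes.

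For part~2, since $A \notin \mathcal{E}$, Lemma~\ref{lem:pos_vs_neg_attTrees} gives that the Attack Tree is negative with root $A^-$, and Lemma~\ref{lem:root_leaves}(1) applied to the root tells us it has exactly one child node, which by Definition~\ref{def:attackTree}(4)(ii) has the form $A_i^+$ with $A_i \in \mathcal{E}$. By Lemma~\ref{lem:subtrees}, the subtree $\Upsilon'$ rooted at $A_i^+$ obtained by deleting the root is a positive Attack Tree of $A_i$ with respect to $\mathcal{E}$. Applying part~1 to $A_i$ and $\Upsilon'$ — and noting that the translated abstract dispute tree of $\Upsilon'$ is precisely the subtree of $\mathcal{T}_{\mathcal{E}}(A)$ rooted at $A_i$, since translation commutes with taking subtrees — yields that this subtree is an abstract dispute tree for $A_i$.

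I do not expect a serious obstacle here: the result is essentially a bookkeeping check that the conditions of Definition~\ref{def:attackTree} specialised to a stable extension line up one-for-one with the conditions defining an abstract dispute tree. The only point requiring a little care is making explicit that, in a stable extension, a node $A_N^-$ can never have zero children (so that abstract-dispute-tree condition~4 holds with its ``exactly one child'' wording rather than merely ``at most one''); this is exactly Lemma~\ref{lem:root_leaves}(1), whose proof in turn relies on the fact that every argument outside a stable extension is attacked by it. A second minor point is to note that the translation in Definition~\ref{def:translated_dispTree} is purely local (relabel $+\mapsto$ proponent, $-\mapsto$ opponent, drop labels), so it preserves the parent–child structure and in particular commutes with passing to subtrees, which is what makes the reduction in part~2 clean.
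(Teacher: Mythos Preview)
Your proposal is correct and follows essentially the same approach as the paper, which dispatches the lemma in a single sentence (``This follows directly from the definition of abstract dispute trees and Lemma~\ref{lem:root_leaves}''); you have simply unpacked that sentence into an explicit condition-by-condition verification and made the reduction of part~2 to part~1 via Lemma~\ref{lem:subtrees} explicit where the paper leaves it implicit.
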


\begin{proof}
 This follows directly from the definition of abstract dispute trees and Lemma~\ref{lem:root_leaves}. \hfill
\end{proof}

Note that the converse of Lemma~\ref{lem:attackTree_disputeTree}.1 does not hold, i.e. it is not the case that
every abstract dispute tree for an argument $A$ corresponds to an Attack Tree $attTree_{\mathcal{E}}(A)$.
Example~\ref{ex:attTree_dispTree} illustrates Lemma~\ref{lem:attackTree_disputeTree} as well as that its converse does not hold.

\begin{example}
 \label{ex:attTree_dispTree}
 Let $\mathcal{P}_2$ be the following logic program:
  \begin{align*}
  a &\leftarrow not~a, not~b\\
  b &\leftarrow not~a, not~c\\
  c &\leftarrow not~ b
 \end{align*}
 Six arguments can be constructed in the translated ABA framework $ABA_{\mathcal{P}_2}$:
 \begin{align*}
 &A_1: (\{not~ a\}, \emptyset) \vdash not~ a &&A_4: (\{not~ a, not~ b\},\emptyset) \vdash a\\
 &A_2: (\{not~ b\}, \emptyset) \vdash not~ b &&A_5: (\{not~ a, not~c\}, \emptyset) \vdash b\\
 &A_3: (\{not~ c\}, \emptyset) \vdash not~ c &&A_6: (\{not~ b\}, \emptyset) \vdash c
 \end{align*}
 The only stable extension of $ABA_{\mathcal{P}_2}$ is $\mathcal{E} = \{A_1, A_3, A_5\}$.
 Figure~\ref{fig:attTree_dispTree1} illustrates the unique negative Attack Tree $attTree_{\mathcal{E}}^-(A_4)$ of $A_4$ with respect to $\mathcal{E}$.
 Constructing the translated abstract dispute tree of $attTree_{\mathcal{E}}^-(A_4)$ 
 results in the tree shown in Figure~\ref{fig:attTree_dispTree2}.
 As stated in Lemma~\ref{lem:attackTree_disputeTree}.2 deleting the opponent root node of the translated abstract dispute tree $\mathcal{T}_{\mathcal{E}}(A_4)$
 yields an abstract dispute tree for $A_5$.
 Figure~\ref{fig:dispTree_no_attTree} gives an example of an abstract dispute tree which does not correspond to an Attack Tree,
 showing that the converse of Lemma~\ref{lem:attackTree_disputeTree} does not hold.
 The abstract dispute tree for $A_6$ starts with a proponent node, which corresponds to the label
 \textquotesingle$+$\textquotesingle\ in an Attack Tree.
 However,any Attack Tree of $A_6$ is negative since $A_6 \notin \mathcal{E}$, 
 so the root node is always $A_6^-$.
 Thus, there is no Attack Tree which corresponds to the abstract dispute tree for $A_6$.
\end{example}

\begin{figure}[t]
 \centering
\includegraphics[width=0.9\textwidth]{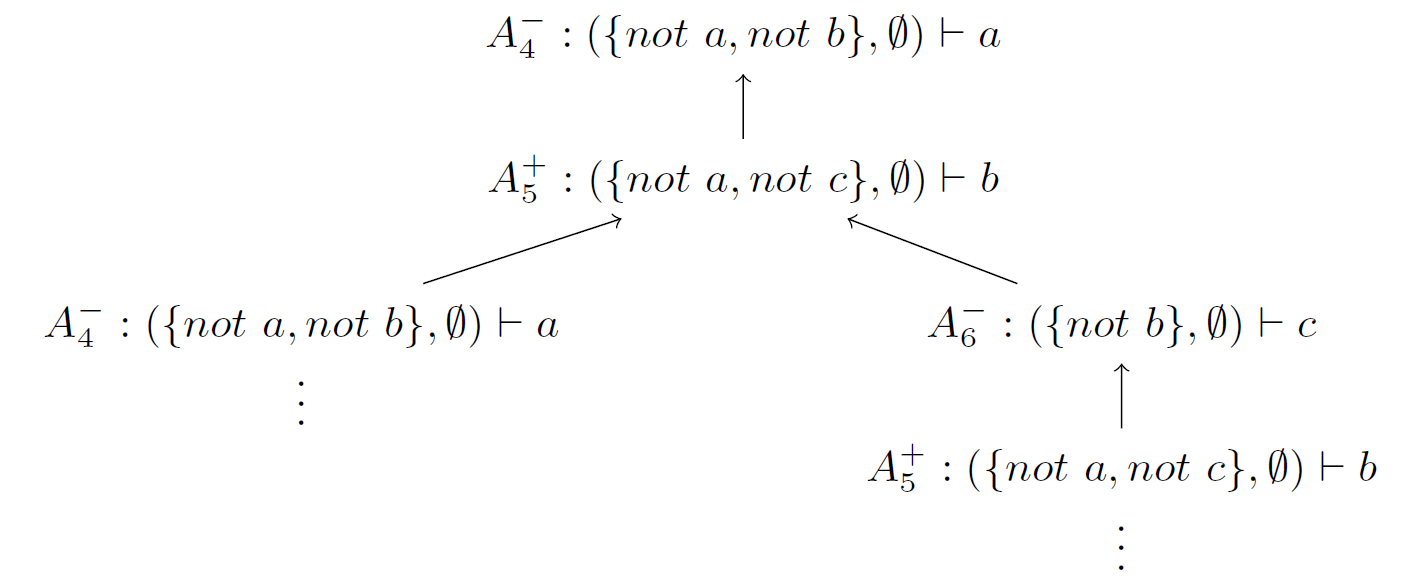}
 \caption{The unique negative Attack Tree $attTree_{\mathcal{E}}^-(A_4)$ of $A_4$ with respect to the stable extension $\mathcal{E}$ of $ABA_{\mathcal{P}_2}$ (see Example~\ref{ex:attTree_dispTree}).}
 \label{fig:attTree_dispTree1}
\end{figure}

\begin{figure}[t]
 \centering
 \includegraphics[width=\textwidth]{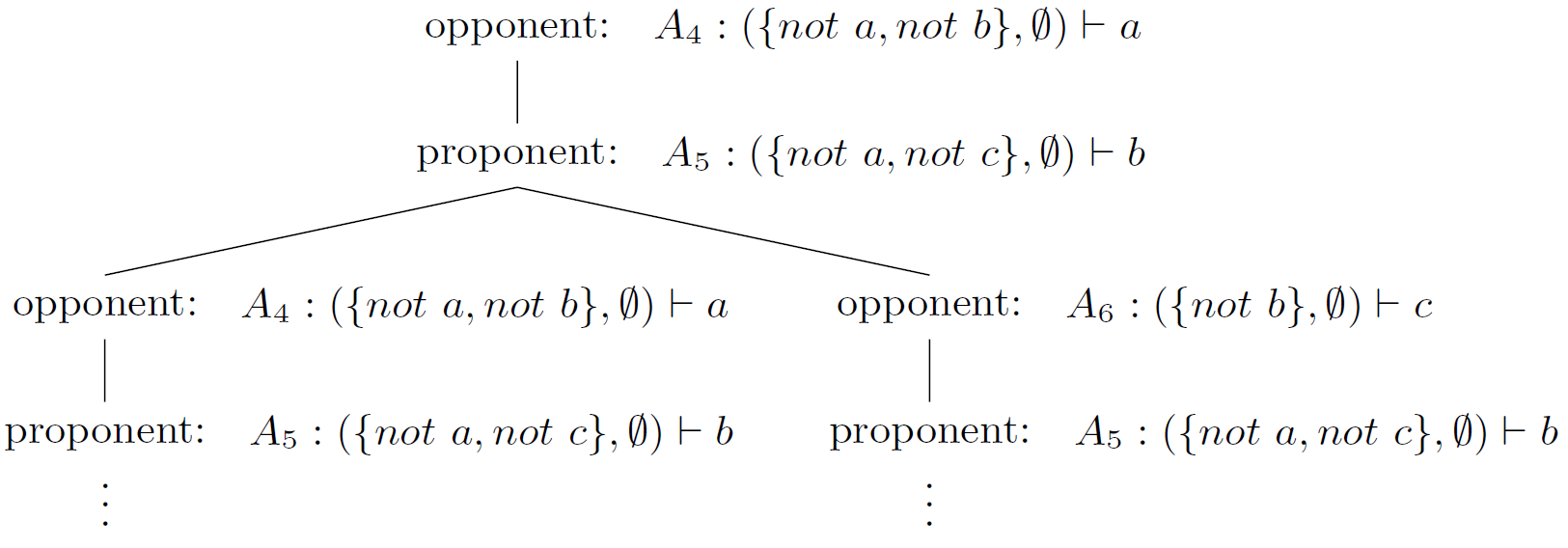}
 \caption{The translated abstract dispute tree $\mathcal{T}_{\mathcal{E}}(A_4)$ of $attTree_{\mathcal{E}}^-(A_4)$ (see Example~\ref{ex:attTree_dispTree} and Figure~\ref{fig:attTree_dispTree1}).
 As the root of $\mathcal{T}_{\mathcal{E}}(A_4)$ is an opponent node,
 it is not an abstract dispute tree. However, the subtree with root node $A_5$ is an abstract dispute tree for the argument $A_5$,
 as stated in Lemma~\ref{lem:attackTree_disputeTree}.}
 \label{fig:attTree_dispTree2}
\end{figure}

\begin{figure}[th]
 \centering
\includegraphics[width=0.5\textwidth]{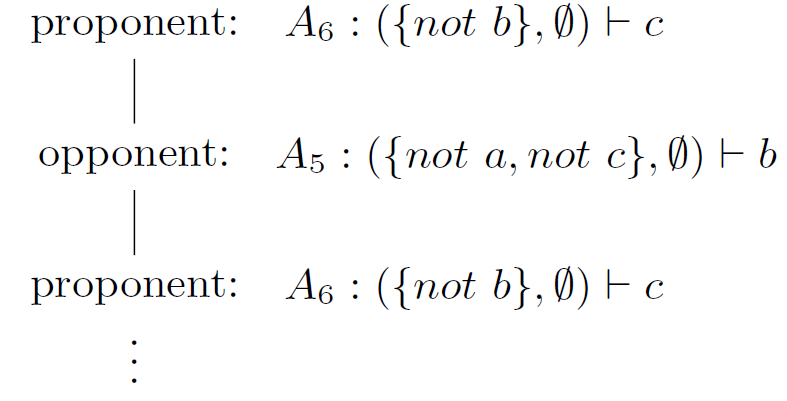}
 \caption{An abstract dispute tree for $A_6$ in $ABA_{\mathcal{P}_2}$ (see Example~\ref{ex:attTree_dispTree}).}
 \label{fig:dispTree_no_attTree}
\end{figure} 

Using the correspondence with abstract dispute trees, we can further characterize Attack Trees constructed with respect to a stable extension
as representing admissible fragments of this stable extension.
Starting with positive Attack Trees, we show that translated abstract dispute trees of positive Attack Trees with respect to a stable extension are admissible.

 \begin{lemma}
\label{lem:admissibleTree}
 Let $\mathcal{P}$ be a logic program, $\mathcal{E}$ a stable extension of $ABA_{\mathcal{P}}$, and $A$ some argument in $\mathcal{E}$.
 For every positive Attack Tree $attTree_{\mathcal{E}}^+(A)$ of $A$ with respect to $\mathcal{E}$, $\mathcal{T}_{\mathcal{E}}(A)$ is an admissible abstract dispute tree.
\end{lemma}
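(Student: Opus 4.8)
The plan is to reduce the statement to two facts already established in the excerpt. First I would invoke Lemma~\ref{lem:attackTree_disputeTree}.1: since $A \in \mathcal{E}$, the translated abstract dispute tree $\mathcal{T}_{\mathcal{E}}(A)$ obtained from the positive Attack Tree $attTree_{\mathcal{E}}^+(A)$ is indeed an abstract dispute tree for $A$. So the only thing left to verify is admissibility, namely that no argument labels both a proponent and an opponent node of $\mathcal{T}_{\mathcal{E}}(A)$.

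The key step is to translate this condition back through Definition~\ref{def:translated_dispTree}: in $\mathcal{T}_{\mathcal{E}}(A)$ the proponent nodes are exactly the nodes that held arguments labelled \textquotesingle$+$\textquotesingle\ in $attTree_{\mathcal{E}}^+(A)$, and the opponent nodes are exactly those that held arguments labelled \textquotesingle$-$\textquotesingle. Now apply Lemma~\ref{lem:nodes_in_tree}: every argument appearing at a \textquotesingle$+$\textquotesingle-node of the Attack Tree belongs to $\mathcal{E}$, and every argument appearing at a \textquotesingle$-$\textquotesingle-node does not belong to $\mathcal{E}$. Hence the set of arguments labelling proponent nodes is contained in $\mathcal{E}$ while the set of arguments labelling opponent nodes is contained in the complement of $\mathcal{E}$; these two sets are disjoint, so no single argument can label both a proponent and an opponent node. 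By the definition of admissible abstract dispute tree, $\mathcal{T}_{\mathcal{E}}(A)$ is admissible.

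I do not expect a serious obstacle here: the argument is essentially a bookkeeping combination of Lemma~\ref{lem:attackTree_disputeTree} and Lemma~\ref{lem:nodes_in_tree}. The only point requiring a little care is making the bijection between \textquotesingle$+$\textquotesingle/\textquotesingle$-$\textquotesingle\ labels and proponent/opponent status explicit, so that the partition of the node set of $\mathcal{T}_{\mathcal{E}}(A)$ into proponent and opponent nodes matches the partition of $attTree_{\mathcal{E}}^+(A)$ into \textquotesingle$+$\textquotesingle- and \textquotesingle$-$\textquotesingle-nodes; once that is stated, the disjointness from Lemma~\ref{lem:nodes_in_tree} closes the proof immediately.
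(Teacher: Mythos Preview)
Your proposal is correct and matches the paper's proof essentially step for step: both invoke Lemma~\ref{lem:attackTree_disputeTree} to get an abstract dispute tree, use Definition~\ref{def:translated_dispTree} to identify proponent/opponent nodes with \textquotesingle$+$\textquotesingle/\textquotesingle$-$\textquotesingle\ nodes, and apply Lemma~\ref{lem:nodes_in_tree} to see that the two classes of arguments lie in $\mathcal{E}$ and its complement respectively, hence are disjoint. The only cosmetic difference is ordering---the paper establishes admissibility first and cites Lemma~\ref{lem:attackTree_disputeTree} last, whereas you do the reverse.
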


\begin{proof}
 According to Lemma~\ref{lem:nodes_in_tree}, for each $A_i^+$ in $attTree_{\mathcal{E}}^+(A)$, $A_i \in \mathcal{E}$, and
 for each $A_j^-$ in $attTree_{\mathcal{E}}^+(A)$, $A_j \notin \mathcal{E}$.
 By definition of stable extension, for all arguments $B$ in $ABA_{\mathcal{P}}$ either
 $B \in \mathcal{E}$ or $B \notin \mathcal{E}$. Thus, $A_i \neq A_j$ for all $i, j$,
and therefore by Definition~\ref{def:translated_dispTree} no argument labels both a proponent and an opponent node in $\mathcal{T}_{\mathcal{E}}(A)$,
satisfying the condition for admissibility.
By Lemma~\ref{lem:attackTree_disputeTree}, $\mathcal{T}_{\mathcal{E}}(A)$ is an abstract dispute tree.
\hfill
\end{proof}

Since a positive Attack Tree constructed with respect to a stable extension corresponds to an admissible abstract dispute tree,
the set of all arguments labelled \textquotesingle$+$\textquotesingle\ in the Attack Tree forms an admissible extension,
in particular one that is a subset of this stable extension.

\begin{theorem}
 \label{lem:admissibleExt}
 Let $\mathcal{P}$ be a logic program, $\mathcal{E}$ a stable extension of $ABA_{\mathcal{P}}$,
 and $attTree_{\mathcal{E}}^+(A)$ a positive Attack Tree of some argument $A$ in $ABA_{\mathcal{P}}$.
 Then the set $Args$ of all arguments labelled \textquotesingle$+$\textquotesingle\ in $attTree_{\mathcal{E}}^+(A)$ is an admissible extension of $ABA_{\mathcal{P}}$
 and $Args \subseteq \mathcal{E}$.
\end{theorem}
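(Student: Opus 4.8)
The plan is to reduce the statement to two facts already available in the excerpt: the admissibility of the translated abstract dispute tree (Lemma~\ref{lem:admissibleTree}) and the known result that the proponent arguments of an admissible abstract dispute tree form an admissible extension (cited as \cite{aba_abstract} in Section~\ref{sec:background}). First I would note that, since $attTree_{\mathcal{E}}^+(A)$ is a \emph{positive} Attack Tree, its root is $A^+$, so by Lemma~\ref{lem:pos_vs_neg_attTrees} (or directly by Notation~\ref{not:attTree_posneg}) we indeed have $A \in \mathcal{E}$, which puts us in the scope of Lemma~\ref{lem:admissibleTree}.

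Next I would form the translated abstract dispute tree $\mathcal{T}_{\mathcal{E}}(A)$ via Definition~\ref{def:translated_dispTree}. By Lemma~\ref{lem:admissibleTree}, $\mathcal{T}_{\mathcal{E}}(A)$ is an admissible abstract dispute tree for $A$ (here Lemma~\ref{lem:attackTree_disputeTree}.1 guarantees it is a genuine abstract dispute tree, and the no-shared-argument condition gives admissibility). The key observation is that Definition~\ref{def:translated_dispTree} maps exactly the nodes labelled \textquotesingle$+$\textquotesingle\ in $attTree_{\mathcal{E}}^+(A)$ to the proponent nodes of $\mathcal{T}_{\mathcal{E}}(A)$, without adding or removing any arguments. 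Hence the set $Args$ of arguments labelled \textquotesingle$+$\textquotesingle\ coincides with the set of arguments labelling proponent nodes of $\mathcal{T}_{\mathcal{E}}(A)$, and by the cited result \cite{aba_abstract} this set is an admissible extension of $ABA_{\mathcal{P}}$.

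For the inclusion $Args \subseteq \mathcal{E}$, I would simply invoke Lemma~\ref{lem:nodes_in_tree}.1: every node $A_i^+$ occurring in an Attack Tree constructed with respect to a stable extension $\mathcal{E}$ satisfies $A_i \in \mathcal{E}$. Since every element of $Args$ is, by definition, the argument held by some node labelled \textquotesingle$+$\textquotesingle\ in $attTree_{\mathcal{E}}^+(A)$, it follows that $Args \subseteq \mathcal{E}$.

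The only real care needed — and hence the main (mild) obstacle — is checking that the translation of Definition~\ref{def:translated_dispTree} does preserve the defining conditions of an abstract dispute tree, in particular that each opponent (formerly \textquotesingle$-$\textquotesingle) node has \emph{exactly one} child rather than possibly none. This is precisely what Lemma~\ref{lem:root_leaves}.1 establishes when the tree is built with respect to a stable extension, and it is already packaged inside Lemma~\ref{lem:attackTree_disputeTree}, so in the write-up this reduces to citing those lemmas rather than re-proving anything. Everything else is a direct chaining of the stated results.
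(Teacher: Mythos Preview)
Your proposal is correct and follows essentially the same route as the paper: translate the positive Attack Tree into an abstract dispute tree, invoke Lemma~\ref{lem:admissibleTree} to obtain admissibility, cite the result from \cite{aba_abstract} (stated there as Theorem~3.2(i)) that proponent arguments of an admissible dispute tree form an admissible extension, and use Lemma~\ref{lem:nodes_in_tree} for the inclusion $Args \subseteq \mathcal{E}$. The extra remarks you make about $A \in \mathcal{E}$ and the role of Lemma~\ref{lem:root_leaves} are accurate but already absorbed into the lemmas you cite, so the final write-up can be as brief as the paper's.
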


\begin{proof}
Let $Args$ denote the set of all arguments labelled \textquotesingle$+$\textquotesingle\ in $attTree_{\mathcal{E}}^+(A)$.
Then $Args$ is the set of arguments held by proponent nodes in the translated abstract dispute tree $\mathcal{T}_{\mathcal{E}}(A)$ of $attTree_{\mathcal{E}}^+(A)$.
By Lemma~\ref{lem:admissibleTree}, $\mathcal{T}_{\mathcal{E}}(A)$ is an admissible abstract dispute tree.
By Theorem 3.2(i) in \cite{aba_abstract}, $Args$ is an admissible extension, and
by Lemma~\ref{lem:nodes_in_tree}, $Args \subseteq \mathcal{E}$.\hfill
\end{proof}

This result characterizes Attack Trees as a way of justifying an argument by means of an admissible fragment of the stable extension.
In other words, the Attack Tree does not use whole stable extension to explain that an argument is in the stable extension,
but only provides an admissible subset sufficient to show that it defends the argument in question.
Furthermore, we can express this result in logic programming terms:
Given a literal and an answer set, an Attack Tree of an argument for this literal constructed with respect to the corresponding stable extension
justifies the argument using an admissible fragment of the answer set.

\begin{theorem}
 \label{lem:admissibleScenario}
 Let $\mathcal{P}$ be a logic program, $S$ an answer set of $\mathcal{P}$, $k \in S_{NAF}$,
 and $\mathcal{E}$ the corresponding stable extension of $S$ in $ABA_{\mathcal{P}}$.
 Let $A \in \mathcal{E}$ be a corresponding argument of $k$, $attTree_{\mathcal{E}}^+(A)$ an Attack Tree of $A$,
 and $Asms = \{\alpha \; | \; \alpha \in AP, A_1^+: (AP,FP) \vdash k_1 \textit{ in } attTree_{\mathcal{E}}^+(A)\}$.
 Then
 \begin{enumerate}
  \item $\mathcal{P}\; \cup\; Asms$ is an admissible scenario of $\mathcal{P}$ in the sense of \cite{admissible_scenario};
  \item $\{k_1 \; | \; A_1^+: (AP,FP) \vdash k_1 \textit{ in } attTree_{\mathcal{E}}^+(A)\} \subseteq S_{NAF}$.
 \end{enumerate}
\end{theorem}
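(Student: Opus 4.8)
\emph{Plan.} I would treat the two claims separately and dispose of part~2 first, since it is immediate. The plan for part~2 is: by Lemma~\ref{lem:nodes_in_tree} every argument $A_1$ labelling a node $A_1^+$ in $attTree_{\mathcal{E}}^+(A)$ belongs to $\mathcal{E}$, and since $\mathcal{E}$ is the corresponding stable extension of $S$ we have $S_{\textit{NAF}} = \{k \mid \exists (AP,FP)\vdash k \in \mathcal{E}\}$ by Theorem~\ref{lem:corr_ext} (equivalently Notation~\ref{not:corr_arg_ext}); hence the conclusion of every such $A_1$ lies in $S_{\textit{NAF}}$, which is exactly the asserted inclusion.

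For part~1 the plan is to unwind the definition of \emph{admissible scenario} from \cite{admissible_scenario} through the translation of Definition~\ref{def:translated_ABA} --- so that a set of NAF literals $\Delta$ attacking $Asms$ means $\mathcal{P}\cup\Delta\vdash_{MP} l$ for some $not~l\in Asms$ (recall $\overline{not~l}=l$) --- and then verify the two relevant conditions directly, each resting on one structural fact. First, self-consistency: every $A_1$ in the set $Args$ of $+$-labelled arguments lies in $\mathcal{E}$ (Lemma~\ref{lem:nodes_in_tree}), so by Lemma~\ref{lem:stable_ext} all assumption-premises of such $A_1$ lie in $\Lambda_{\mathcal{E}}$; thus $Asms\subseteq\Lambda_{\mathcal{E}}$, and if $\mathcal{P}\cup Asms\vdash_{MP} l$ with $not~l\in Asms$ then by Lemma~\ref{lem:arg_mp} some argument $(AP',FP')\vdash l$ with $AP'\subseteq\Lambda_{\mathcal{E}}$ would sit in $\mathcal{E}$, contradicting $not~l\in\Lambda_{\mathcal{E}}$. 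Second, counter-attack: if $C\colon(AP',FP')\vdash l$ with $AP'\subseteq\Delta$ and $not~l\in Asms$, then $not~l$ is an assumption-premise of some $A_1\in Args$, so $C$ attacks $A_1$ (Definition~\ref{def:attacks}); since $A_1^+$ is positive, Definition~\ref{def:attackTree}(3) forces a node $C^-$ below $A_1^+$, and Lemma~\ref{lem:root_leaves}(1) gives $C^-$ a unique child $D^+$ with $D$ attacking $C$ on some $\gamma\in AP'\subseteq\Delta$; as $D\in Args$ its premises are in $Asms$, so by Lemma~\ref{lem:arg_mp} $\mathcal{P}\cup Asms\vdash_{MP}\overline{\gamma}$, i.e.\ $Asms$ counter-attacks $\Delta$.

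Alternatively, and perhaps more cleanly, I would route part~1 through Theorem~\ref{lem:admissibleExt}: the set $Args(Asms)=\{(AP,FP)\vdash\alpha \mid AP\subseteq Asms\}$ of \emph{all} arguments supported by subsets of $Asms$ is itself an admissible extension (the self-non-attack and defence arguments above apply verbatim, noting $Args(Asms)\subseteq\mathcal{E}$ by the inclusion $Asms\subseteq\Lambda_{\mathcal{E}}$ and Lemma~\ref{lem:stable_ext}), and $Asms$ is precisely the set of assumptions underlying $Args(Asms)$; part~1 then follows from the standard correspondence between admissible extensions of the flat framework $ABA_{\mathcal{P}}$ and admissible scenarios of $\mathcal{P}$.

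\emph{Expected main obstacle.} The genuinely delicate point is the closure/assumption-completeness mismatch: being an admissible scenario is a property of the \emph{set} $Asms$, quantifying over every argument it supports, whereas the Attack Tree only exhibits finitely many specific $+$-arguments; the way I expect to defuse this is the inclusion $Asms\subseteq\Lambda_{\mathcal{E}}$, which keeps all the ``extra'' arguments supported by $Asms$ inside the conflict-free set $\mathcal{E}$. A secondary chore is faithfully matching the attack relation of \cite{admissible_scenario} (hypotheses attacking by deriving a classical literal) with Definition~\ref{def:attacks} via $\overline{not~l}=l$, which is routine under the translation but needs to be stated explicitly.
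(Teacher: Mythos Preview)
Your proposal is correct. Part~2 matches the paper's argument exactly (the paper cites Theorem~\ref{lem:admissibleExt} for the inclusion $Args\subseteq\mathcal{E}$ rather than Lemma~\ref{lem:nodes_in_tree}, but that is the same fact).

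For part~1 the paper takes a considerably shorter route than either of your options: it invokes Theorem~\ref{lem:admissibleExt} to conclude that the set $Args$ of $+$-labelled arguments is an admissible extension, then cites Theorem~2.2(ii) of \cite{aba_abstract} to pass from an admissible extension of arguments to an admissible \emph{set of assumptions} $Asms$, and finally Theorem~4.5 of \cite{aba_lp} to pass from an admissible set of assumptions to an admissible scenario of $\mathcal{P}$. In effect, the two external results package up exactly the ``closure/assumption-completeness mismatch'' you flag as the main obstacle and the translation of attacks that you call a secondary chore. Your primary approach unwinds both of these by hand; your alternative is closer in spirit to the paper but still detours through the larger set $Args(Asms)$ rather than using $Args$ itself, for which admissibility is already available from Theorem~\ref{lem:admissibleExt}. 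What your approach buys is self-containment --- no reliance on results outside the paper --- at the cost of reproving standard bridge lemmas; the paper's approach is terser but leans on the cited literature.
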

 
 \begin{proof}
  \begin{enumerate}
   \item By Theorem~\ref{lem:admissibleExt} and Theorem~2.2(ii) in \cite{aba_abstract}, $Asms$ is an admissible set of assumptions.
   Then by Theorem~4.5 in \cite{aba_lp}, $\mathcal{P}\; \cup\; Asms$ is an admissible scenario of $\mathcal{P}$ in the sense of \cite{admissible_scenario}.\footnote{Theorem~4.5 refers to
   \cite{admissible_scenario_journal} where admissible scenarios are defined for logic programs without classical negation.
   This result can be easily extended to the definition of admissible scenarios of logic programs with both classical negation and NAF as we are only concerned with consistent logic programs.\label{fn:repeat}}
   \item By Theorem~\ref{lem:admissibleExt} and Theorem~\ref{lem:corr_ext}.
  \end{enumerate}\hfill
 \end{proof}

This result enables us to construct a justification of a literal in an answer set from an Attack Tree (in Section~\ref{sec:ASjust})
using an admissible fragment of the answer set.
The following example illustrates the characteristics of positive Attack Trees and how they can be used for justifying an argument for a literal in
an answer set.

\begin{example}
 \label{ex:p1_posAttTree_admissible}
 Consider the logic program $\mathcal{P}_1$ and its answer set $S_1 = \{e,d,a\}$ with the corresponding stable extension
 $\mathcal{E}_1 = \{A_2,A_3,A_4,A_6,A_8, A_9,A_{13},A_{14}\}$ (see Examples \ref{ex:p1} and \ref{ex:p1_as_stable}).
 To justify that $not~ c \in S_{1_{NAF}}$, we can construct an Attack Tree of an argument for $not~ c$, i.e. of $A_3$, with respect to $\mathcal{E}_1$.
 The resulting positive Attack Tree $attTree_{\mathcal{E}_1}^+(A_3)$ is depicted on the left of Figure~\ref{fig:p1_attackTree_a3}.
 Translating this Attack Tree into an abstract dispute tree as described in Definition~\ref{def:translated_dispTree}, yields
 the translated abstract dispute tree $\mathcal{T}_{\mathcal{E}_1}(A_3)$ illustrated on the right of Figure~\ref{fig:p1_attackTree_a3}.
 This abstract dispute tree is admissible as stated in Lemma~\ref{lem:admissibleTree}.
 The set arguments labelled \textquotesingle$+$\textquotesingle\ in $attTree_{\mathcal{E}_1}^+(A_3)$ is $\{A_3, A_{14}\} \subseteq \mathcal{E}_1$ 
 which is an admissible extension of $ABA_{\mathcal{P}_1}$ and
 the set of conclusions of these arguments is $\{not~ c, e\} \subseteq S_{1_{NAF}}$ as stated by Theorems \ref{lem:admissibleExt} and \ref{lem:admissibleScenario}.
 The Attack Tree $attTree_{\mathcal{E}_1}^+(A_3)$ explains that the literal $not~c$ is in the answer set $S_1$ because
 it is supported and defended by an admissible subset of $S_1$, namely by $\{not~ c, e\}$.
 In terms of literal the Attack Tree expresses that $not~c$ is ``attacked'' by the literal $c$, which is ``counter-attacked'' by $e$,
 thereby defending $not~c$.
\end{example}

\begin{figure}[th]
 \centering
\includegraphics[width=0.9\textwidth]{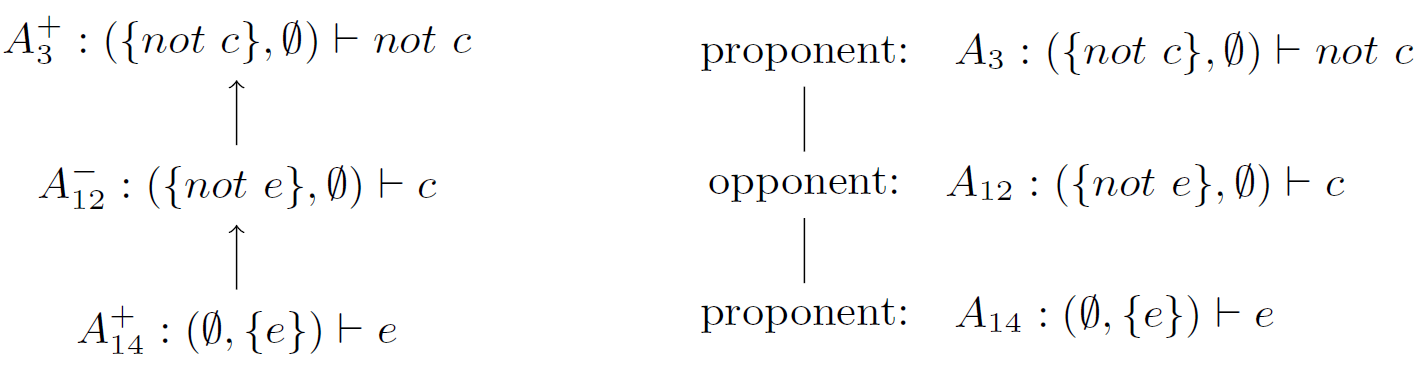}
 \caption{The positive Attack Tree $attTree_{\mathcal{E}_1}^+(A_3)$ of $A_3$ with respect to the corresponding stable extension
 $\mathcal{E}_1$ of $S_1$ (left) and the translated abstract dispute tree $\mathcal{T}_{\mathcal{E}_1}(A_3)$ 
 of $attTree_{\mathcal{E}_1}^+(A_3)$ (right) (see Example~\ref{ex:p1_posAttTree_admissible}).}
 \label{fig:p1_attackTree_a3}
\end{figure}

Similarly to positive Attack Trees, we can characterize the explanations given by negative Attack Trees
using the correspondence between the subtree of a negative Attack Tree and an abstract dispute tree:
Negative Attack Trees justify that an argument is not in a stable extension because it is attacked by an admissible fragment of this stable extension.
We first prove that when deleting the opponent root node of the translated abstract dispute tree of a negative Attack Tree constructed with respect
to a stable extension, the resulting abstract dispute tree is admissible.

\begin{lemma}
\label{lem:admissibleSubTree}
 Let $\mathcal{P}$ be a logic program, $\mathcal{E}$ a stable extension of $ABA_{\mathcal{P}}$, and $A$ some argument not contained in $\mathcal{E}$.
 For every negative Attack Tree $attTree_{\mathcal{E}}^-(A)$ of $A$ with respect to $\mathcal{E}$,
 the subtree of $\mathcal{T}_{\mathcal{E}}(A)$ with root node $A_i$,
 where $A_i^+$ is the only child of the root $A^-$ in $attTree_{\mathcal{E}}^-(A)$, is an admissible abstract dispute tree.
\end{lemma}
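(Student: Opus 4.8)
The plan is to reduce the claim to the positive case already established in Lemma~\ref{lem:admissibleTree}. First I would note, via Lemma~\ref{lem:root_leaves}.1, that the root $A^-$ of $\Upsilon = attTree_{\mathcal{E}}^-(A)$ has exactly one child node $A_i^+$, so the subtree referred to in the statement is well defined. Then, by Lemma~\ref{lem:subtrees}, the subtree $\Upsilon'$ of $\Upsilon$ obtained by deleting the root $A^-$ is a positive Attack Tree $attTree_{\mathcal{E}}^+(A_i)$ of $A_i$.

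Next I would observe that the translation of Definition~\ref{def:translated_dispTree} acts node by node: it only inspects the $+/-$ label attached to each node and reassigns it a proponent/opponent status, leaving the tree structure untouched. Hence the translated abstract dispute tree $\mathcal{T}_{\mathcal{E}}(A_i)$ of $\Upsilon'$ is exactly the subtree of $\mathcal{T}_{\mathcal{E}}(A)$ with root node $A_i$. Applying Lemma~\ref{lem:admissibleTree} to the positive Attack Tree $\Upsilon'$ then yields that $\mathcal{T}_{\mathcal{E}}(A_i)$ is an admissible abstract dispute tree, which is precisely the assertion.

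Alternatively, and perhaps more directly, I could prove admissibility without passing through Lemma~\ref{lem:admissibleTree}: by Lemma~\ref{lem:attackTree_disputeTree}.2 the subtree of $\mathcal{T}_{\mathcal{E}}(A)$ rooted at $A_i$ is an abstract dispute tree for $A_i$; and by Lemma~\ref{lem:nodes_in_tree}, every node of $\Upsilon$ (hence of any of its subtrees) labelled $+$ holds an argument in $\mathcal{E}$, while every node labelled $-$ holds an argument not in $\mathcal{E}$. Since each argument of $ABA_{\mathcal{P}}$ is either in $\mathcal{E}$ or not, no argument can label both a proponent node (corresponding to $+$) and an opponent node (corresponding to $-$) of the subtree, which is exactly the admissibility condition.

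I do not anticipate a genuine obstacle here: the proof is essentially a chain of appeals to Lemmas~\ref{lem:subtrees}, \ref{lem:attackTree_disputeTree}, \ref{lem:nodes_in_tree}, and \ref{lem:admissibleTree}. The only point that deserves a line of care is the bookkeeping identification of ``the subtree of the translated dispute tree'' with ``the translation of the corresponding subtree of the Attack Tree'', which holds because the $\mathcal{T}_{\mathcal{E}}(\cdot)$-translation is purely local to each node's label.
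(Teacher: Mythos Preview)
Your primary argument is correct and matches the paper's proof exactly: invoke Lemma~\ref{lem:subtrees} to identify the subtree as a positive Attack Tree of $A_i$, then apply Lemma~\ref{lem:admissibleTree}, with the node-local nature of the translation in Definition~\ref{def:translated_dispTree} handling the bookkeeping identification (which the paper dismisses as ``trivial''). Your alternative route via Lemmas~\ref{lem:attackTree_disputeTree}.2 and~\ref{lem:nodes_in_tree} is also sound and simply unpacks what Lemma~\ref{lem:admissibleTree} already proves, so it adds nothing new but is a valid self-contained variant.
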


\begin{proof}
 By Lemma~\ref{lem:subtrees}, the subtree of $\Upsilon'$ of $attTree_{\mathcal{E}}^-(A)$ with root node $A_i$ is a positive Attack Tree of $A_i$.
 By Lemma~\ref{lem:admissibleTree}, $\Upsilon'$ is an admissible abstract dispute tree.
 Trivially, the subtree of $\mathcal{T}_{\mathcal{E}}(A)$ with root node $A_i$ coincides with the translated abstract dispute tree of $\Upsilon'$.
\hfill
\end{proof}

Knowing that the argument held by the root of a negative Attack Tree constructed with respect to a stable extension is attacked
by an admissible abstract dispute tree, 
we show that this Attack Tree justifies the root by showing that it is
attacked by an admissible extension of $ABA_{\mathcal{P}}$,
and in particular by an admissible extension which is a subset of the stable extension.

\begin{theorem}
 \label{lem:admissibleExtAttacks}
 Let $\mathcal{P}$ be a logic program, $\mathcal{E}$ a stable extension of $ABA_{\mathcal{P}}$, and $attTree_{\mathcal{E}}^-(A)$
 a negative Attack Tree of some argument $A$ in $ABA_{\mathcal{P}}$.
 Then the set $Args$ of all arguments labelled \textquotesingle$+$\textquotesingle\ in $attTree_{\mathcal{E}}^-(A)$ is an admissible extension of $ABA_{\mathcal{P}}$
 and $Args \subseteq \mathcal{E}$.
\end{theorem}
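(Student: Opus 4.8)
The plan is to reduce the claim to the positive case, Theorem~\ref{lem:admissibleExt}, by stripping off the root of the negative Attack Tree. First I would note that since $attTree_{\mathcal{E}}^-(A)$ is a negative Attack Tree its root node is $A^-$, so by Lemma~\ref{lem:nodes_in_tree} we have $A \notin \mathcal{E}$; hence $A$ is attacked by $\mathcal{E}$, and Lemma~\ref{lem:root_leaves}.1 guarantees that the root $A^-$ has exactly one child node $A_i^+$, with $A_i \in \mathcal{E}$ again by Lemma~\ref{lem:nodes_in_tree}.

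Next I would invoke Lemma~\ref{lem:subtrees}: the subtree $\Upsilon'$ of $attTree_{\mathcal{E}}^-(A)$ with root node $A_i^+$, obtained by deleting the root $A^-$, is a positive Attack Tree $attTree_{\mathcal{E}}^+(A_i)$ of $A_i$ with respect to $\mathcal{E}$. The only node removed in passing from $attTree_{\mathcal{E}}^-(A)$ to $\Upsilon'$ is the root, which carries the label \textquotesingle$-$\textquotesingle, so the set $Args$ of all arguments labelled \textquotesingle$+$\textquotesingle\ in $attTree_{\mathcal{E}}^-(A)$ coincides with the set of all arguments labelled \textquotesingle$+$\textquotesingle\ in $\Upsilon' = attTree_{\mathcal{E}}^+(A_i)$.

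Finally, I would apply Theorem~\ref{lem:admissibleExt} to the positive Attack Tree $attTree_{\mathcal{E}}^+(A_i)$, which immediately gives that $Args$ is an admissible extension of $ABA_{\mathcal{P}}$ and that $Args \subseteq \mathcal{E}$, as required. An alternative route bypassing Theorem~\ref{lem:admissibleExt} would combine Lemma~\ref{lem:admissibleSubTree} with Theorem 3.2(i) in \cite{aba_abstract} and Lemma~\ref{lem:nodes_in_tree}, but routing through Theorem~\ref{lem:admissibleExt} is the shortest. I do not expect a real obstacle here; the only point needing care is that the root-deletion step is well defined, i.e. that a negative Attack Tree built with respect to a stable extension always has a child of its root, which is exactly the content of Lemma~\ref{lem:root_leaves}.
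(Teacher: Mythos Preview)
Your proposal is correct and essentially the same as the paper's argument: strip off the root $A^-$, identify the remaining subtree as a positive Attack Tree, and conclude admissibility and containment in $\mathcal{E}$. The paper in fact takes the alternative route you mention at the end (Lemma~\ref{lem:admissibleSubTree} together with Theorem~3.2(i) in \cite{aba_abstract} and Lemma~\ref{lem:nodes_in_tree}) rather than going through Theorem~\ref{lem:admissibleExt}, but the two packagings unwind to the same ingredients.
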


\begin{proof}
Let $Args$ denote the set of all arguments labelled \textquotesingle$+$\textquotesingle\ in $attTree_{\mathcal{E}}^-(A)$.
Then $Args$ is the set of arguments held by proponent nodes in the translated abstract dispute tree $\mathcal{T}_{\mathcal{E}}(A)$ of $attTree_{\mathcal{E}}^-(A)$.
By Lemma~\ref{lem:admissibleSubTree}, the subtree of $\mathcal{T}_{\mathcal{E}}(A)$ with root node $A_i$,
 where $A_i^+$ is the only child of the root $A^-$ in $attTree_{\mathcal{E}}^-(A)$, is an admissible abstract dispute tree.
By Theorem 3.2(i) in \cite{aba_abstract}, $Args$ is an admissible extension.
By Lemma~\ref{lem:nodes_in_tree}, $Args \subseteq \mathcal{E}$. \hfill
\end{proof}

It follows, that a negative Attack Tree justifies an argument for a literal which is not in the answer set in question
in terms of an admissible fragment of the answer set ``attacking'' the literal.

\begin{theorem}
 \label{lem:admissibleScenarioAttacks}
 Let $\mathcal{P}$ be a logic program, $S$ an answer set of $\mathcal{P}$, $k \notin S_{NAF}$,
 and $\mathcal{E}$ the corresponding stable extension of $S$ in $ABA_{\mathcal{P}}$.
 Let $A$ be some argument for $k$, $attTree_{\mathcal{E}}^-(A)$ an Attack Tree of $A$, and $Asms = \{\alpha \; | \; \alpha \in AP, A_1^+: (AP,FP) \vdash k_1 \textit{ in } attTree_{\mathcal{E}}^-(A)\}$.
 Then
 \begin{enumerate}
  \item $\mathcal{P}\; \cup\; Asms$ is an admissible scenario of $\mathcal{P}$ in the sense of \cite{admissible_scenario};
  \item $\{k_1 \; | \; A_1^+: (AP,FP) \vdash k_1 \textit{ in } attTree_{\mathcal{E}}^-(A)\} \subseteq S_{NAF}$.
 \end{enumerate}
\end{theorem}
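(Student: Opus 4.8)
The plan is to prove the statement in exact parallel with Theorem~\ref{lem:admissibleScenario}, replacing every appeal to results about positive Attack Trees by their negative-tree counterparts. The key point is that although the root argument $A$ is itself \emph{not} among the arguments labelled \textquotesingle$+$\textquotesingle\ in $attTree_{\mathcal{E}}^-(A)$ --- its root node is $A^-$, since $k \notin S_{NAF}$ forces $A \notin \mathcal{E}$ by Theorem~\ref{the:AsStable} --- the set $Args$ of \textquotesingle$+$\textquotesingle-labelled arguments has already been pinned down by Theorem~\ref{lem:admissibleExtAttacks}: it is an admissible extension of $ABA_{\mathcal{P}}$ and $Args \subseteq \mathcal{E}$. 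Hence most of the work is already done, and both items follow by the same chain of cited results used for Theorem~\ref{lem:admissibleScenario}.

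For item~1, I would first invoke Theorem~\ref{lem:admissibleExtAttacks} to conclude that $Args$ is an admissible extension of $ABA_{\mathcal{P}}$. Since $Asms$ is precisely the union of the assumption-premises of the arguments in $Args$, Theorem~2.2(ii) in \cite{aba_abstract} yields that $Asms$ is an admissible set of assumptions of $ABA_{\mathcal{P}}$. Then Theorem~4.5 in \cite{aba_lp} --- with the extension to logic programs featuring classical negation noted in footnote~\ref{fn:repeat} --- gives that $\mathcal{P}\; \cup\; Asms$ is an admissible scenario of $\mathcal{P}$ in the sense of \cite{admissible_scenario}.

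For item~2, I would use the second conclusion of Theorem~\ref{lem:admissibleExtAttacks}, namely $Args \subseteq \mathcal{E}$. Every $k_1$ in the set on the left-hand side is the conclusion of some $A_1 \in Args$, hence the conclusion of an argument in the stable extension $\mathcal{E}$; by Theorem~\ref{lem:corr_ext} the set of all conclusions of arguments in $\mathcal{E}$ equals $S_{NAF}$ (which is exactly what makes $\mathcal{E}$ the corresponding stable extension of $S$, cf.\ Notation~\ref{not:corr_arg_ext}), so $k_1 \in S_{NAF}$ and the inclusion follows.

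I do not anticipate a genuine obstacle here, since Theorem~\ref{lem:admissibleExtAttacks} and the earlier lemmas on negative Attack Trees (in particular Lemmas~\ref{lem:nodes_in_tree}, \ref{lem:root_leaves}, \ref{lem:subtrees} and~\ref{lem:admissibleSubTree}) carry the structural burden. The only points needing care are (i) reading $Asms$ off the \textquotesingle$+$\textquotesingle-nodes only, so that the negative root $A^-$ --- whose conclusion $k$ is deliberately \emph{not} claimed to lie in $S_{NAF}$ --- does not enter $Asms$, and (ii) citing the same mild extension of Theorem~4.5 in \cite{aba_lp} to logic programs with both classical negation and NAF that was already used for Theorem~\ref{lem:admissibleScenario}. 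Both are settled by the groundwork, so the final proof amounts to a short assembly of Theorems~\ref{lem:admissibleExtAttacks} and~\ref{lem:corr_ext} with the two external results.
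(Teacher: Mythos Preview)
Your proposal is correct and follows essentially the same approach as the paper: for item~1 you chain Theorem~\ref{lem:admissibleExtAttacks}, Theorem~2.2(ii) in \cite{aba_abstract}, and Theorem~4.5 in \cite{aba_lp} (with the footnote~\ref{fn:repeat} caveat), and for item~2 you combine Theorem~\ref{lem:admissibleExtAttacks} with Theorem~\ref{lem:corr_ext}. The paper's own proof is simply a terser statement of exactly these steps.
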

 
 \begin{proof}
  \begin{enumerate}
   \item By Theorem~\ref{lem:admissibleExtAttacks} and Theorem~2.2(ii) in \cite{aba_abstract}, $Asms$ is an admissible set of assumptions.
   Then by Theorem~4.5 in \cite{aba_lp}, $\mathcal{P}\; \cup\; Asms$ is an admissible scenario of $\mathcal{P}$ in the sense of \cite{admissible_scenario}.\footref{fn:repeat}
   \item By Theorem~\ref{lem:admissibleExtAttacks} and Theorem~\ref{lem:corr_ext}. 
  \end{enumerate}\hfill
 \end{proof}

 This result provides the basis for the construction of a justification of a literal not contained in an answer set from an
 Attack Tree which provides a meaningful explanation in terms of an admissible subset of the answer set.
 
 \begin{example}
  \label{ex:p2_negativeAttTree_admissble}
  Consider the logic program $\mathcal{P}_2$ and its only answer set $S = \{b\}$ with the corresponding stable extension
  $\mathcal{E} = \{A_1, A_3, A_5\}$ (see Example~\ref{ex:attTree_dispTree}).
  To justify why $a \notin S$ we can construct an Attack Tree of an argument with conclusion $a$, i.e. of $A_4$,
  with respect to $\mathcal{E}$.
  The resulting negative Attack Tree $attTree_{\mathcal{E}}^-(A_4)$ is depicted in Figure~\ref{fig:attTree_dispTree1}
  and the translated abstract dispute tree $\mathcal{T}_{\mathcal{E}}(A_4)$ in Figure~\ref{fig:attTree_dispTree2}.
  When deleting the root opponent node $A_4$ of $\mathcal{T}_{\mathcal{E}}(A_4)$,
  the resulting abstract dispute tree is admissible as observed in Lemma~\ref{lem:admissibleSubTree}.
  Furthermore, the set of arguments labelled \textquotesingle$+$\textquotesingle\ in $attTree_{\mathcal{E}}^-(A_4)$
  is $\{A_5\}$, which is a subset of the corresponding stable extension $\mathcal{E}$ and an admissible extension of $ABA_{\mathcal{P}_2}$ (by Theorem~\ref{lem:admissibleExtAttacks}).
  Moreover, the set of conclusions of arguments in this admissible extension is $\{b\} \subseteq S$, which is an admissible scenario of $\mathcal{P}$ as
  stated in Theorem~\ref{lem:admissibleScenarioAttacks}.
  Therefore, the negative Attack Tree $attTree_{\mathcal{E}}^-(A_4)$ explains that the argument $A_4$ is not in the corresponding stable extension because it is 
  attacked by an admissible fragment of this stable extension, namely by $\{A_5\}$.
  Even though $A_4$ together with $A_6$ counter-attacks this attack, $A_5$ defends itself against this counter-attack.
  This explanation can also be interpreted in terms of literals: $a$ is not in the answer set $S$ because
  its derivation is ``attacked'' by a derivation of $b$, which is an admissible fragment of $S$.
  Even though the derivation of $a$ and the derivation of $c$ both ``counter-attack'' the derivation of $b$, attempting to defend $a$,
  the derivation of $b$ can attack both counter-attacks and thus the derivation of $b$ defends itself.
  Consequently, the attack of the derivation of $b$ on the derivation of $a$ ``succeeds'', which is the reason that $a$ is not part of the answer set.
 \end{example}

 In conclusion, Attack Trees provide a justification of an argument with respect to a stable extension
 in terms of an admissible subset of this stable extension.
 Due to the correspondence between answer sets and stable extensions, Attack Trees can also be used to justify a literal
 with respect to an answer set by constructing an Attack Tree of an argument for this literal with respect to the corresponding stable extension.
 The resulting Attack Tree justifies the argument for the literal in question using an admissible fragment of the answer set.
 If the literal in question is contained in the given answer set, the admissible fragment supports and defends a derivation of this literal.
 If the literal in question is not contained in the given answer set, the admissible fragment of the answer set ``attacks'' a derivation of this literal,
 in fact by Theorem~\ref{the:AsStable} the admissible fragment attacks all derivations of this literal.
 
 The only shortcoming of justifying literals with respect to an answer set in terms of Attack Trees is that they use argumentation-theoretic concepts for the explanation.
Thus, we now define a second type of justification which provides explanations in terms of literals and relations between them,
rather than in terms of arguments as used in Attack Trees.
 The new type of justification is constructed from Attack Trees by flattening the structure of arguments occurring in an Attack Tree
 as well as of the attack relation between these arguments.
 In addition to better fitting logic programming concepts,
 another advantage of the new justifications is that they are finite even if constructed from infinite Attack Trees.
 We first introduce a basic version of this new justification to illustrate the idea of flattening Attack Trees.
 Then, we define a more elaborate version using the same flattening approach but simultaneously labelling
 literals and their relations, yielding a more informative explanation.

%%%%%%%%%%%%%%%%%%%%%%%%%%%%%%%%%%%%%%%%%%%%%%%%%%%%%%%%%%%%%%%%%%%%%%%%%%%%%%%%%%%%%%%%%%%%%%%%%%%%%%%%%%%%%%%%%%%%%%%%%%%%%%%%%%%%%%%%%%%%%%%%%%%%%%%%%%%%%%%%%%
%%%%%%%%%%%%%%%%%%%%%%%%%%%%%%%%%%%%%%%%%%%%%%%%%%%%%%%%%%%%%%%%%%%%%%%%%%%%%%%%%%%%%%%%%%%%%%%%%%%%%%%%%%%%%%%%%%%%%%%%%%%%%%%%%%%%%%%%%%%%%%%%%%%%%%%%%%%%%%%%%%
\section{Basic ABA-Based Answer Set Justifications}
\label{sec:ASjust}

In this section we define the basic concepts for constructing justifications of a literal $k$ in terms of literals and their relations, based on
Attack Trees of arguments with conclusion $k$.
The idea is to extract the assumptions- and fact-premises of each argument in the Attack Tree to express a support-relation between each
of the premise-literals and the literal forming the conclusion of an argument.
Furthermore, the attacks between arguments in an Attack Tree are translated into attack-relations between the literals forming the conclusions of these arguments.
We first introduce some terminology to refer to the structure of an Attack Tree.

\begin{notation}
\label{not:attTree}
Let $\Upsilon$ be an Attack Tree and let $N$ be a node in $\Upsilon$.
$arg(N)$ denotes the argument held by node $N$.
If $arg(N)$ is $A: (AP,FP) \vdash k$, then $name(N) = A$, $conc(N) = k$, $AP(N) = AP$, $FP(N) = FP$,
and $label(N)$ is either \textquotesingle$+$\textquotesingle\ or \textquotesingle$-$\textquotesingle,
depending on the label of $A$ in $\Upsilon$. The set of all child nodes of $N$ in $\Upsilon$ is denoted $children(N)$.
\end{notation}

%%%%%%%%%%%%%%%%%%%%%%%%%%%%%%%%%%%%%%%%%%%%%%%%%%%%%%%%%%%%%%%%%%%%%%%%%%%%%%%%%%%%%%%%%%%%%%%%%%%%%%%%%%%%%%%%%%%%%%%%%%%%%%%%%%%%%%%%%%%%%%%%%%%%%%%%%%%%%%%%%%
\subsection{Basic Justifications}
\label{sec:ASjust_basicJust}

We now define how to express the structure of an Attack Tree as a set of relations between literals.
\begin{definition}[Basic Justification]
 Let $\mathcal{P}$ be a logic program and
 let $X$ be a set of arguments in $ABA_{\mathcal{P}}$.
 Let $A$ be an argument in $ABA_{\mathcal{P}}$ and $\Upsilon = attTree_{X}(A)$ an Attack Tree of $A$ with respect to $X$.
 The \emph{Basic Justification} of $A$ with respect to $\Upsilon$, denoted $justB_{\Upsilon}(A)$, is obtained as follows:\\
 $justB_{\Upsilon}(A) =
 \bigcup_{N \textit{ in } \Upsilon}\\
 \{ supp\mathunderscore rel(k, conc(N)) \; | \; k \in AP(N) \; \cup \; FP(N) \backslash \{conc(N)\}\} \; \cup \;\\
 \{ att\mathunderscore rel(conc(M), k) \; | \; M \in children(N), conc(M) = \overline{k}\}$
 \end{definition}
 
 \begin{example}
  \label{ex:p1_basic_justification}
  Consider the logic program $\mathcal{P}_1$ from Example~\ref{ex:p1} and the Attack Trees discussed in Example~\ref{ex:p1_attackTrees}.
   Since $\Upsilon_1 = attTree^+_{\mathcal{E}_1}(A_{14})$ comprises only the node $A_{14}^+$,
  the Basic Justification of $A_{14}$ with respect to $\Upsilon_1$ is $justB_{\Upsilon_1}(A_{14}) = \emptyset$.
  
  Now consider the negative Attack Tree $\Upsilon_2= attTree_{\mathcal{E}_2}^-(A_{10})$ of $A_{10}$ with respect to $\mathcal{E}_2$ depicted on the left of Figure~\ref{fig:p1_attackTrees_a6}.
  The Basic Justification of $A_{10}$ with respect to $\Upsilon_2$ is:
 \begin{align*}
 justB_{\Upsilon_2}(A_{10}) &= \{supp\mathunderscore rel(not~ c, a),\: supp\mathunderscore rel(not~ d, a),\: supp\mathunderscore rel(not~ e, a)\}\: \cup \\
 &\qquad \{att\mathunderscore rel(e, not~ e)\}\\
 &= \{supp\mathunderscore rel(not~ c, a),\: supp\mathunderscore rel(not~ d, a),\:  supp\mathunderscore rel(not~ e, a),\\ &\qquad att\mathunderscore rel(e, not~ e)\}
 \end{align*}
 
 The following Basic Justification is obtained from the negative Attack Tree $\Upsilon_3 = attTree_{\mathcal{E}_2}^-(A_9)$ of $A_9$
 with respect to the stable extension $\mathcal{E}_2$ (see Figure~\ref{fig:p1_attackTrees_a5}):
 \begin{align*}
  justB_{\Upsilon_3}(A_9) &= \{supp\mathunderscore rel(not~ \neg a, a),\:  att \mathunderscore rel(\neg a, not~ \neg a), \: 
  supp\mathunderscore rel(not~ c, \neg a),\\
  &\qquad supp\mathunderscore rel(not~d, \neg a),\: 
  att \mathunderscore rel(c, not~ c), \: att \mathunderscore rel(d, not~ d),\\
  &\qquad supp \mathunderscore rel(not~ e, c), \: att\mathunderscore rel(e, not~ e),\: 
  supp \mathunderscore rel(not~ \neg a, d)\}
 \end{align*}
 Note that even though $\Upsilon_3$ is an infinite Attack Tree, the Basic Justification of $A_9$ with respect to $\Upsilon_3$ is finite.
 In particular, when $A_{11}$ reoccurs in the Attack Tree as an attacker of $A_{13}$, 
 no new $att \mathunderscore rel$ or $supp \mathunderscore rel$ pairs are added to the Basic Justification:
 even though $A_{11}$ attacks $A_9$ with conclusion $a$ at its first occurrence and $A_{13}$ with conclusion $d$ at its second occurrence,
 no new $att \mathunderscore rel$ pair is added since the attacked assumption is in both cases $not~ \neg a$.
 \end{example}
 
 In Basic Justifications attacks between arguments are translated into ``attacks'' between literals, and supports of arguments into ``supports'' of literals.
 In other words, a Basic Justification is the flattened version of an Attack Tree.
 Even though it provides an explanation in terms of literals rather than arguments, it is not sufficient to justify
 a literal with respect to an answer set for two reasons, as explained below.

 Firstly, a Basic Justification does not contain the literal being justified, which is for example a problem when justifying a fact.
 When justifying a fact $k$, we construct an Attack Tree of the fact-argument for $k$, which
 consists of only the root node $A^+: (\emptyset, \{k\})\vdash k$, leading to an empty Basic Justification.
 An empty set is not meaningful, so it would be useful if the literal in question was contained in the justification. 
 Furthermore, a problem arises when trying to justify a literal for which no argument exists in the translated ABA framework,
 i.e. a literal which cannot be derived in any way from the logic program.
 For such a literal, which is trivially not part of any answer set, it is not possible to construct an Attack Tree
 as no argument for this literal exists in the translated ABA framework.
 Since a Basic Justification is constructed from an Attack Tree, there is no Basic Justification for such a literal.
 This is unsatisfying, so we would like to have some kind of justification, rather than failing.

 The second problem or shortcoming of a Basic Justification is that it only provides one reason why a literal is not in an answer set
 as it is constructed from a single negative Attack Tree, which provides one explanation how the root argument is attacked by the set of arguments in question.
 However, it is more meaningful to capture all different explanations of how a literal ``failed'' to be in the answer set in question.
 Thus, we want the justification of a literal not in the answer set to consist of all possible Basic Justifications of this literal.
 
 In order to overcome these two deficiencies, we introduce BABAS Justifications, which add the literal being justified to the Basic Justification set
 and provide a collection of all Basic Justifications for a literal which is not contained in an answer set.

 %%%%%%%%%%%%%%%%%%%%%%%%%%%%%%%%%%%%%%%%%%%%%%%%%%%%%%%%%%%%%%%%%%%%%%%%%%%%%%%%%%%%%%%%%%%%%%%%%%%%%%%%%%%%%%%%%%%%%%%%%%%%%%%%%%%%%%%%%%%%%%%%%%%%%%%%%%%%%%%%%%
\subsection{BABAS Justifications}
\label{sec:ASjust_babas}

 We now define the \emph{Basic ABA-Based Answer Set (BABAS) Justification} of a literal with respect to an answer set,
 which is based on the Basic Justifications of an argument with respect to an Attack Tree.
 If a literal $k$ is contained in an answer set, its BABAS Justification is constructed from one Basic Justification of one of the corresponding arguments of $k$.
 This is inspired by the result in Theorem~\ref{the:AsStable} that a literal $k$ is part of an answer set if and only if there exists some argument
 with conclusion $k$ in the corresponding stable extension.
 Conversely, if $k$ is not contained in an answer set, its BABAS Justification is constructed from all Basic Justifications of all arguments with conclusion $k$,
 expressing all reasons why $k$ is not part of this answer set.
 Again, the choice to consider all arguments with conclusion $k$ is based on Theorem~\ref{the:AsStable}, stating that a literal $k$ is not part of an answer set
 if and only if all arguments with conclusion $k$ are not contained in the corresponding stable extension.
 \begin{definition}[Basic ABA-Based Answer Set Justification]
 \label{def:babas}
 Let $\mathcal{P}$ be a logic program and
 $S$ an answer set of $\mathcal{P}$.
 Let $\mathcal{E}$ be the corresponding stable extension of $S$ in $ABA_{\mathcal{P}}$.
 \begin{enumerate}
  \item Let $k \in S_{\textit{NAF}}$, $A \in \mathcal{E}$ a corresponding argument of $k$, and $\Upsilon = attTree_{\mathcal{E}}^+(A)$ some positive Attack Tree of $A$ with respect to $\mathcal{E}$.
 A \emph{Positive BABAS Justification} of $k$ with respect to $S$ is:\\ $justB_{S}^+(k) = \{k\} \; \cup \; justB_{\Upsilon}(A)$.
 \item Let $k \notin S_{\textit{NAF}}$, $A_1,\ldots, A_n$ ($n \geq 0$) all arguments with conclusion $k$ in $ABA_{\mathcal{P}}$,
 and $\Upsilon_{11},\ldots, \Upsilon_{1m_1}, \ldots, \Upsilon_{n1}, \ldots, \Upsilon_{nm_n}$ ($m_1,\ldots,m_n \geq 0$) all negative Attack Trees of $A_1,\ldots, A_n$ with respect to $\mathcal{E}$.
 \begin{enumerate}[(a)]
    \item If $n = 0$, then the \emph{Negative BABAS Justification} of $k$ with respect to $S$ is:\\
    $justB_{S}^-(k) = \emptyset$
    \item If $n > 0$, then the \emph{Negative BABAS Justification} of $k$ with respect to $S$ is:\\
 $justB_{S}^-(k) = \{\{k\} \; \cup \; justB_{\Upsilon_{11}}(A_1),\ldots,\;\;\{k\} \; \cup \; justB_{\Upsilon_{1m_1}}(A_1), \ldots,\; \\ \{k\} \; \cup \; justB_{\Upsilon_{nm_n}}(A_n)\}$.
 \end{enumerate}
 \end{enumerate}
\end{definition}

Note that there can be more than one Positive BABAS Justification of a literal contained in an answer set,
but only one Negative BABAS Justification of a literal not contained in an answer set.
Note also that the Positive BABAS Justification is a set of $supp\mathunderscore rel$ and $att\mathunderscore rel$ pairs (plus the literal which is justified),
whereas the Negative BABAS Justification is a set of sets containing these pairs (where each set also contains the literal which is justified).

A BABAS Justification can be represented as a graph, where all literals occurring in a $supp\mathunderscore rel$ or $att \mathunderscore rel$ pair form nodes,
and the $supp\mathunderscore rel$ and $att \mathunderscore rel$ relations are edges between these nodes.
For Negative BABAS Justifications, a separate graph for each set in the justification is given.
In contrast, Positive BABAS Justifications are illustrated as a single graph.

\begin{example}
\label{ex:p1_just_e}
 Based on the Basic Justifications in Example~\ref{ex:p1_basic_justification}, we illustrate the construction of BABAS Justifications.
 Consider $e \in S_1$, where the corresponding stable extension of $S_1$ is $\mathcal{E}_1$ (see Example~\ref{ex:p1_as_stable}).
 There is only one corresponding argument of $e$ in $\mathcal{E}_1$, namely $A_{14}: (\emptyset, \{e\}) \vdash e$,
 which has a unique positive Attack Tree with respect to $\mathcal{E}_1$, $\Upsilon_1 = attTree_{\mathcal{E}_1}^+(A_{14})$.
 As shown in Example~\ref{ex:p1_basic_justification}, the Basic Justification of $A_{14}$ with respect to $\Upsilon_1$ is $justB_{\Upsilon_1}(A_{14}) = \emptyset$.
 Therefore, the unique Positive BABAS Justification of $e$ with respect to $S_1$ is $justB_{S_1}^+(e) = \{e\}$.
 This justification expresses that $e$ is in the answer set $S_1$ because it is supported only by itself, in other words, it is a fact.

 We now consider the BABAS Justification of $a \notin S_2$, where the corresponding stable extension of $S_2$ in $ABA_{\mathcal{P}_1}$ is $\mathcal{E}_2$.
 Since $a \notin S_2$, we examine all arguments with conclusion $a$ in $ABA_{\mathcal{P}_1}$, that is $A_9$ and $A_{10}$.
 Both $A_9$ and $A_{10}$ have a unique negative Attack Tree with respect to $\mathcal{E}_2$,
 $\Upsilon_3 = attTree_{\mathcal{E}_2}^-(A_9)$ (see Figure~\ref{fig:p1_attackTrees_a5}) and $\Upsilon_2 = attTree_{\mathcal{E}_2}^-(A_{10})$ (see left of Figure~\ref{fig:p1_attackTrees_a6}).
 From the Basic Justifications $justB_{\Upsilon_3}(A_9)$ and $justB_{\Upsilon_2}(A_{10})$ explained in Example~\ref{ex:p1_basic_justification},
 the BABAS Justification of $a$ with respect to $S_2$ is obtained as follows:
\begin{align*}
 justB_{S_2}^-(a) &= \{\{a, supp\mathunderscore rel(not~ \neg a, a), att \mathunderscore rel(\neg a, not~ \neg a), supp\mathunderscore rel(not~ c, \neg a),\\
 &\qquad supp\mathunderscore rel(not~d, \neg a), att \mathunderscore rel(c, not~ c), att \mathunderscore rel(d, not~ d), \\
 &\qquad supp \mathunderscore rel(not~ e, c), att\mathunderscore rel(e, not~ e), supp \mathunderscore rel(not~ \neg a, d)\},\\
 &\qquad \{a, supp\mathunderscore rel(not~ c, a), supp\mathunderscore rel(not~ d, a), supp\mathunderscore rel(not~ e, a),\\ 
 &\qquad att\mathunderscore rel(e, not~ e)\}\}
\end{align*}
Figure~\ref{fig:p1_just_a} depicts the graphical representation of the Negative BABAS Justification $justB_{S_2}^-(a)$,
where the left of the figure represents the first set in $justB_{S_2}^-(a)$,
and the right of the figure the second set.
\end{example}

\begin{figure}[t]
 \centering
 \includegraphics[width=0.9\textwidth]{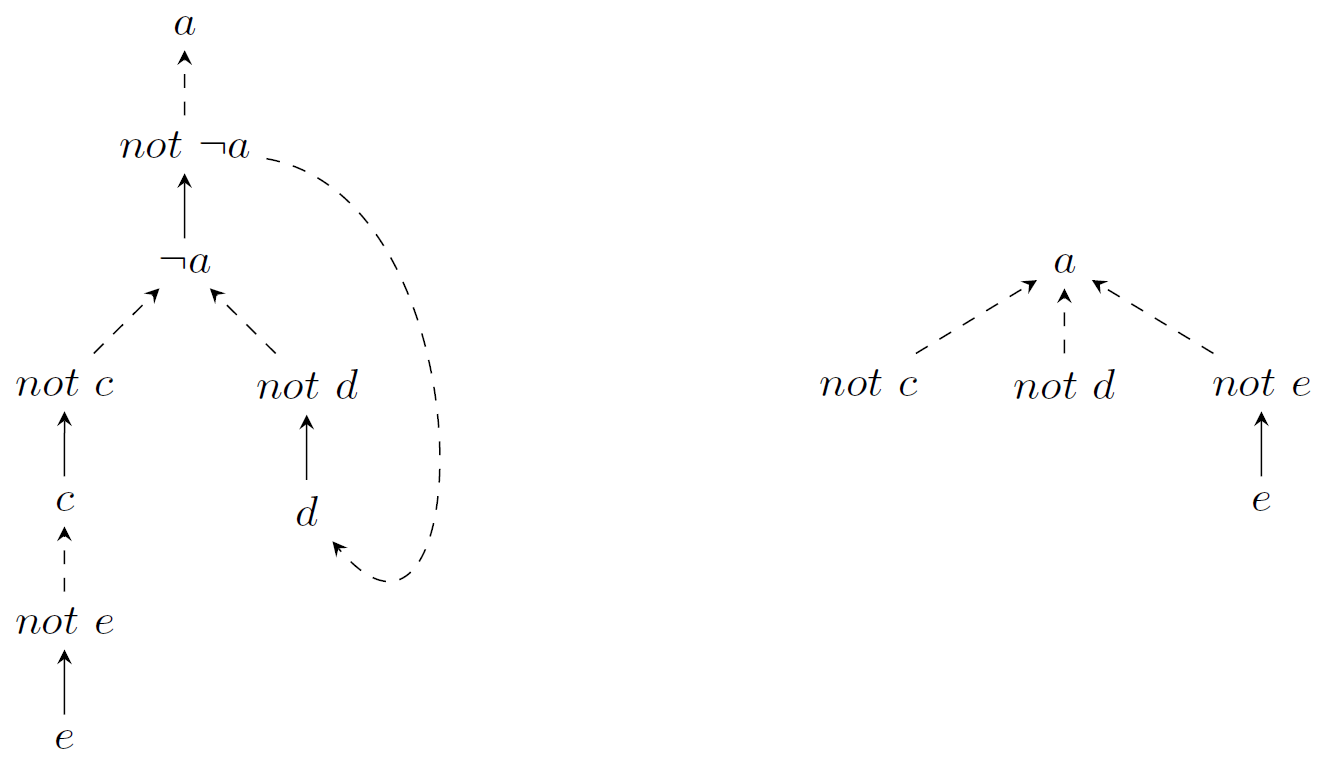}
  \caption{Graphical representation of the Negative BABAS Justification $justB_{S_2}^-(a)$ in Example~\ref{ex:p1_just_e},
  where the left graph represents the first set and the right graph the second set in $justB_{S_2}^-(a)$.
 Dotted lines stand for $supp\mathunderscore rel$ pairs in the BABAS Justification, whereas solid lines represent $att \mathunderscore rel$ pairs.}
 \label{fig:p1_just_a}
\end{figure}

So far, we only illustrated BABAS Justifications of literals $k$ for which at least one argument with conclusion $k$ exists in the translated ABA framework.
The next example demonstrates the BABAS Justification of a literal which does not have such an argument.
In general, the BABAS Justification of such literals is the empty set.

\begin{example}
\label{ex:babas_noArg}
Consider the literal $\neg c \notin S_1$ in the logic program $\mathcal{P}_1$ (see Examples \ref{ex:p1} and \ref{ex:p1_as_stable}).
There is no rule with head $\neg c$ in $\mathcal{P}_1$, and consequently $ABA_{\mathcal{P}_1}$ does not comprise an argument with conclusion $\neg c$.
Thus, there is no Attack Tree of an argument for $\neg c$ and no Basic Justification of an argument for $\neg c$.
As a consequent the Negative BABAS Justification of $\neg c$ with respect to $S_1$ is $justB_{S_1}^-(\neg c) = \emptyset$ (by Definition~\ref{def:babas}.2(b)).
\end{example}

%%%%%%%%%%%%%%%%%%%%%%%%%%%%%%%%%%%%%%%%%%%%%%%%%%%%%%%%%%%%%%%%%%%%%%%%%%%%%%%%%%%%%%%%%%%%%%%%%%%%%%%%%%%%%%%%%%%%%%%%%%%%%%%%%%%%%%%%%%%%%%%%%%%%%%%%%%%%%%%%%%
\subsection{Shortcomings of BABAS Justifications}
\label{sec:ASjust_simplicity}

A BABAS Justification is a flat structure which loses some information as compared to the underlying Attack Trees.
Attack Trees label arguments with respect to a stable extension,
expressing whether or not an argument is part of the stable extension.
However, a BABAS Justification does not provide any information about whether or not a literal is contained in the answer set in question.
Whether or not a literal is part of an answer set is important to know,
since attacks  and supports by literals contained in the answer set ``succeed'', whereas attacks and supports by literals not in the answer set do not ``succeed''.
This additional information is not captured by BABAS Justifications, even though it is provided by the underlying Attack Trees.

\begin{example}
 \label{ex:babas_shortcomings}
 Consider the Negative BABAS Justification $justB_{S_2}^-(a)$ from Example~\ref{ex:p1_just_e}, depicted as a graph in Figure~\ref{fig:p1_just_a}.
$justB_{S_2}^-(a)$ does not express whether or not the ``attacking'' literal $\neg a$ is part of $S_2$, neither in set notation nor in the graphical representation.
In contrast, the underlying Attack Tree $attTree_{\mathcal{E}_2}^-(A_9)$ in Figure~\ref{fig:p1_attackTrees_a5} specifies that
the argument $A_{11}$ for $\neg a$ is in the corresponding stable extension $\mathcal{E}_2$ by labelling $A_{11}$ as \textquotesingle$+$\textquotesingle.
It would be useful to capture this kind of information not only in the Attack Tree but also in the justification in terms of literals, so
$justB_{S_2}^-(a)$ should express that $a$ is not in $S_2$ because the support by $not~ e$ does not ``succeed'' as $not~ e \notin S_{2_{NAF}}$
because the attack by $e$ on $not~ e$ ``succeeds'' as $e \in S_2$.
\end{example}

The next example illustrates another shortcoming of BABAS Justifications,
which arises if the underlying Attack Tree contains different arguments which have the same conclusion and
occur as child nodes of the same parent node.

\begin{example}
\label{ex:sameConc_differentAsm}
 Consider the two logic programs $\mathcal{P}_3$ (left) and $\mathcal{P}_4$ (right):\\
 \begin{minipage}{0.49\textwidth}
  \begin{align*}
  p &\leftarrow not~ a\\
  p &\leftarrow not~ b\\
  q &\leftarrow not~ p\\
  a &\leftarrow\\
  b &\leftarrow
 \end{align*}
 \end{minipage}
 \begin{minipage}{0.49\textwidth}
  \begin{align*}
  p &\leftarrow not~ a, not~ b\\
  q &\leftarrow not~ p\\
  a &\leftarrow\\
  b &\leftarrow
 \end{align*}
 \end{minipage}\\
Both logic programs have only one answer set, $S_{\mathcal{P}_3} = S_{\mathcal{P}_4} = \{a,b,q\}$.
The translated ABA frameworks $ABA_{\mathcal{P}_3}$ (left) and $ABA_{\mathcal{P}_4}$ (right) have the following arguments:\vspace{0.2cm}
\\ 
\begin{minipage}{0.49\textwidth}

$A_1: (\{not~ a\}, \emptyset) \vdash not~ a$\\
$A_2: (\{not~ \neg a\}, \emptyset) \vdash not~ \neg a$\\
$A_3: (\{not~ b\}, \emptyset) \vdash not~ b$\\
$A_4: (\{not~ \neg b\}, \emptyset) \vdash not~ \neg b$\\
$A_5: (\{not~ p\}, \emptyset) \vdash not~ p$\\
$A_6: (\{not~ \neg p\}, \emptyset) \vdash not~ \neg p$\\
$A_7: (\{not~ q\}, \emptyset) \vdash not~ q$\\
$A_8: (\{not~ \neg q\}, \emptyset) \vdash not~ \neg q$\\
$A_9: (\{not~ p\}, \emptyset) \vdash q$\\
$A_{10}: (\emptyset,\{a\}) \vdash a$\\
$A_{11}: (\emptyset,\{b\}) \vdash b$\\
$A_{12}: (\{not~ a\}, \emptyset) \vdash p$\\
$A_{13}: (\{not~ b\}, \emptyset) \vdash p$\\
\end{minipage}
\begin{minipage}{0.49\textwidth}
$A_1: (\{not~ a\}, \emptyset) \vdash not~ a$\\
$A_2: (\{not~ \neg a\}, \emptyset) \vdash not~ \neg a$\\
$A_3: (\{not~ b\}, \emptyset) \vdash not~ b$\\
$A_4: (\{not~ \neg b\}, \emptyset) \vdash not~ \neg b$\\
$A_5: (\{not~ p\}, \emptyset) \vdash not~ p$\\
$A_6: (\{not~ \neg p\}, \emptyset) \vdash not~ \neg p$\\
$A_7: (\{not~ q\}, \emptyset) \vdash not~ q$\\
$A_8: (\{not~ \neg q\}, \emptyset) \vdash not~ \neg q$\\
$A_9: (\{not~ p\}, \emptyset) \vdash q$\\
$A_{10}: (\emptyset,\{a\}) \vdash a$\\
$A_{11}: (\emptyset,\{b\}) \vdash b$\\
$A_{14}: (\{not~ a, not~ b\}, \emptyset) \vdash p$\\
\end{minipage}\\
$ABA_{\mathcal{P}_3}$ and $ABA_{\mathcal{P}_4}$ share arguments $A_1$ to $A_{11}$.
In addition, $ABA_{\mathcal{P}_3}$ has arguments $A_{12}$ and $A_{13}$, whereas $ABA_{\mathcal{P}_4}$ has only one additional argument $A_{14}$.
Both ABA frameworks have a unique stable extension, $\mathcal{E}_{\mathcal{P}_3} = \mathcal{E}_{\mathcal{P}_4} = \{A_2,A_4,A_5,A_6, A_8,A_9,\\A_{10},A_{11}\}$.
$\mathcal{E}_{\mathcal{P}_3}$ is the corresponding stable extension of  $S_{\mathcal{P}_3}$ and $\mathcal{E}_{\mathcal{P}_4}$ the corresponding stable extension of $S_{\mathcal{P}_4}$.
We now examine the BABAS Justifications of $q$ with respect to $S_{\mathcal{P}_3}$ and $S_{\mathcal{P}_4}$ by constructing
Attack Trees of the corresponding arguments of $q$ with respect to  $\mathcal{E}_{\mathcal{P}_3}$ and  $\mathcal{E}_{\mathcal{P}_4}$, respectively.
In both $ABA_{\mathcal{P}_3}$ and $ABA_{\mathcal{P}_4}$, the only corresponding argument of $q$ is $A_9$ which
has a unique positive Attack Tree with respect to $\mathcal{E}_{\mathcal{P}_3}$ ($attTree_{\mathcal{E}_{\mathcal{P}_3}}^+(A_9)$),
depicted in Figure~\ref{fig:sameConc_differentAsm_p2},
and two positive Attack Trees with respect to $\mathcal{E}_{\mathcal{P}_4}$ ($attTree_{\mathcal{E}_{\mathcal{P}_4}}^+(A_9)_1$
and $attTree_{\mathcal{E}_{\mathcal{P}_4}}^+(A_9)_2$), depicted in Figure~\ref{fig:sameConc_differentAsm_p3}.
The unique Positive BABAS Justification of $q$ with respect to $S_{\mathcal{P}_3}$
constructed from $attTree_{\mathcal{E}_{\mathcal{P}_3}}^+(A_9)$ and the two possible Positive BABAS Justifications of $q$ with respect to $S_{\mathcal{P}_4}$
constructed from $attTree_{\mathcal{E}_{\mathcal{P}_4}}^+(A_9)_1$
and $attTree_{\mathcal{E}_{\mathcal{P}_4}}^+(A_9)_2$, respectively, are:
\begin{align*}
 justB_{S_{\mathcal{P}_3}}^+(q)
 & = \{q,\: supp \mathunderscore rel(not~ p, q),\: att \mathunderscore rel(p, not~ p),\:
 supp \mathunderscore rel(not~a, p),\\
 &\qquad att \mathunderscore rel(a, not~a),\:supp \mathunderscore rel(not~b, p),\: att \mathunderscore rel(b, not~b)\}
\end{align*}
\begin{align*}
justB_{S_{\mathcal{P}_4}}^+(q) 
 & = \{q,\: supp \mathunderscore rel(not~ p, q),\: att \mathunderscore rel(p, not~ p),\:
 supp \mathunderscore rel(not~a, p),\\
 &\qquad supp \mathunderscore rel(not~b, p),\: att \mathunderscore rel(a, not~a)\}
\end{align*}
\begin{align*}
justB_{S_{\mathcal{P}_4}}^+(q) 
 & = \{q,\: supp \mathunderscore rel(not~ p, q),\: att \mathunderscore rel(p, not~ p),\:
 supp \mathunderscore rel(not~a, p),\\
 &\qquad supp \mathunderscore rel(not~b, p),\: att \mathunderscore rel(b, not~b)\}
\end{align*}
The graphical representations of these BABAS Justifications are depicted in Figure~\ref{fig:sameConc_differentAsm_just}.
All of them give the impression that $p$ is supported by $not~ a$ and $not~b$ together,
which is only correct in the case of $\mathcal{P}_4$.
In $\mathcal{P}_3$, there are two different ways of concluding $p$, one supported by the NAF literal $not~a$, and the other one by $not~b$,
which is not clear from $justB_{S_{\mathcal{P}_3}}^+(q)$.
\end{example}

\begin{figure}
 \centering
  \includegraphics[width=0.8\textwidth]{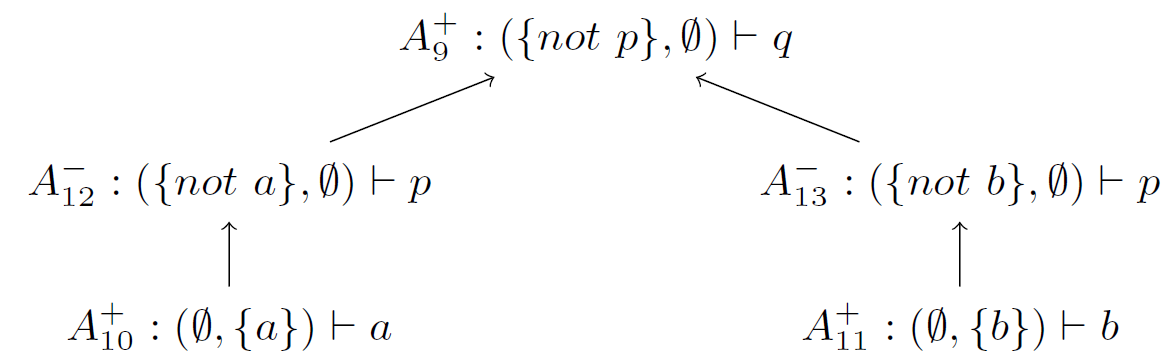}
 \caption{The unique positive Attack Tree $attTree_{\mathcal{E}_{\mathcal{P}_3}}^+(A_9)$ of $A_9$ with respect to $\mathcal{E}_{\mathcal{P}_3}$ (see Example~\ref{ex:sameConc_differentAsm}).}
 \label{fig:sameConc_differentAsm_p2}
\end{figure}

\begin{figure}
\centering
\includegraphics[width=0.9\textwidth]{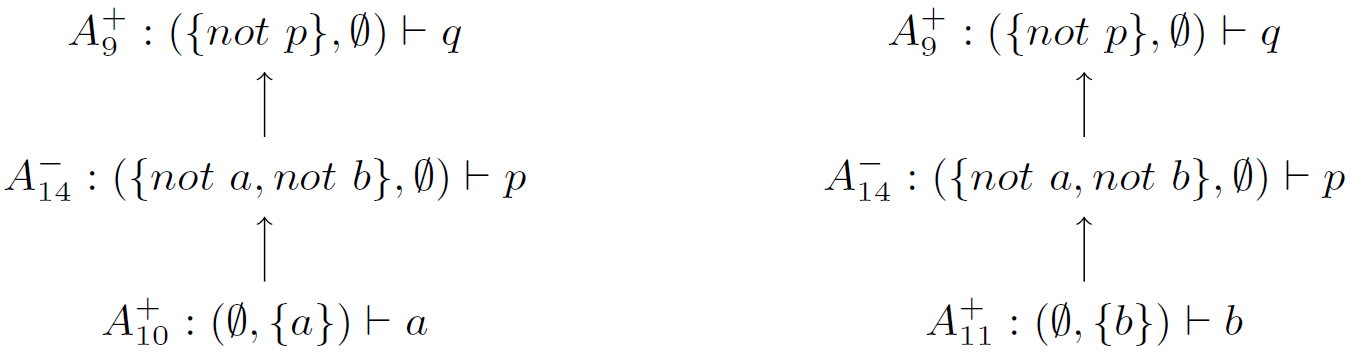}
 \caption{The two positive Attack Trees $attTree_{\mathcal{E}_{\mathcal{P}_4}}^+(A_9)_1$ (left) and $attTree_{\mathcal{E}_{\mathcal{P}_4}}^+(A_9)_2$ (right)
 of $A_9$ with respect to $\mathcal{E}_{\mathcal{P}_4}$ (see Example~\ref{ex:sameConc_differentAsm}).}
 \label{fig:sameConc_differentAsm_p3}
\end{figure}

\begin{figure}

 \centering
  \includegraphics[width=\textwidth]{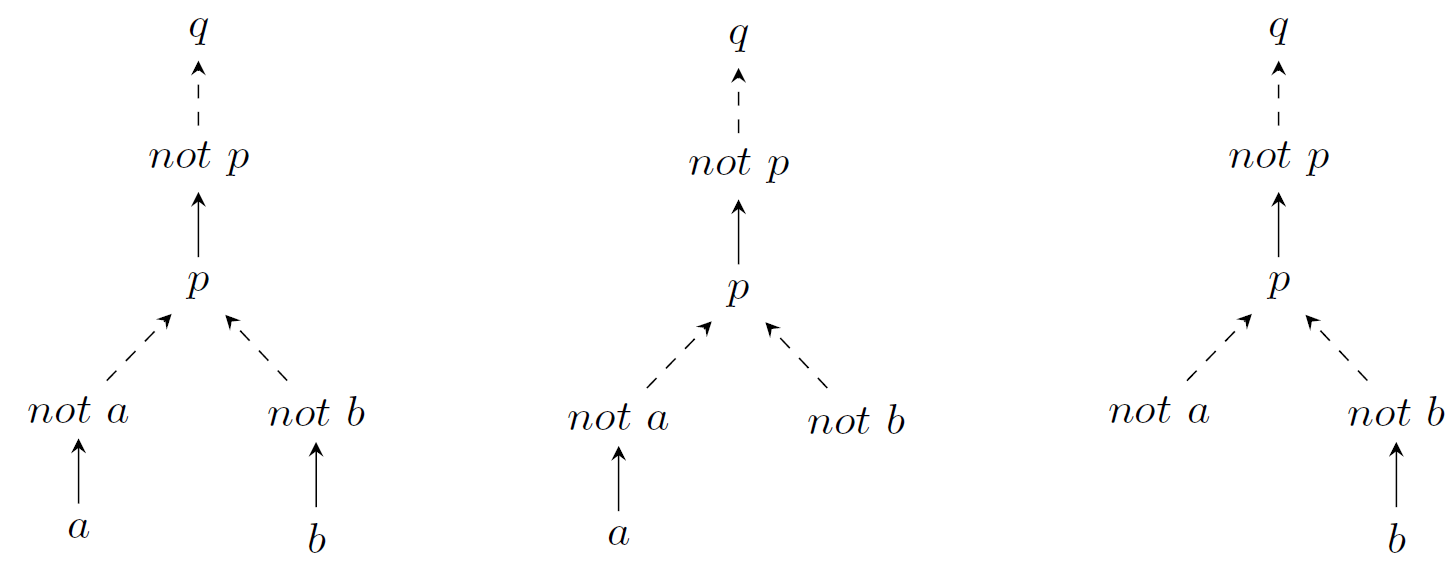}
 \caption{The unique Positive BABAS Justification $justB_{S_{\mathcal{P}_3}}^+(q)$ (left) and the two possible Positive BABAS Justifications $justB_{S_{\mathcal{P}_4}}^+(q)$ (middle and right)
  from Example~\ref{ex:sameConc_differentAsm}.}
 \label{fig:sameConc_differentAsm_just}
\end{figure}

Example~\ref{ex:sameConc_differentAsm} suggests that if a node in an Attack Tree has various children holding arguments with the same conclusion,
these child nodes should be distinguished in a justification.
We address this problem in the next section by defining a more elaborate version of ABAS Justifications.

%%%%%%%%%%%%%%%%%%%%%%%%%%%%%%%%%%%%%%%%%%%%%%%%%%%%%%%%%%%%%%%%%%%%%%%%%%%%%%%%%%%%%%%%%%%%%%%%%%%%%%%%%%%%%%%%%%%%%%%%%%%%%%%%%%%%%%%%%%%%%%%%%%%%%%%%%%%%%%%%%%
%%%%%%%%%%%%%%%%%%%%%%%%%%%%%%%%%%%%%%%%%%%%%%%%%%%%%%%%%%%%%%%%%%%%%%%%%%%%%%%%%%%%%%%%%%%%%%%%%%%%%%%%%%%%%%%%%%%%%%%%%%%%%%%%%%%%%%%%%%%%%%%%%%%%%%%%%%%%%%%%%%
\section{Labelled ABA-Based Answer Set Justifications}
\label{sec:ASjust_labelled}

We now introduce \emph{Labelled ABA-Based Answer Set (LABAS) Justifications}, which address the shortcomings of BABAS Justifications
by labelling the relations and literals in the justification as either \textquotesingle$+$\textquotesingle\ or \textquotesingle$-$\textquotesingle,
depending on the labels of arguments in the underlying Attack Trees.
In addition, literals can have an \textit{asm} or \textit{fact} tag, indicating that they are used as assumptions or facts, respectively.
Non-assumption and non-fact literals are tagged with their argument's name in order to distinguish between different arguments with the same conclusion occurring in an Attack Tree.
We refer to the structure of nodes in an Attack Tree as introduced in Notation~\ref{not:attTree}.
Similarly to BABAS Justifications, LABAS Justifications are defined in terms of \emph{Labelled Justifications},
which are a flattened version of Attack Trees.
In contrast to Basic Justifications, Labelled Justifications label the literals and relations extracted from an Attack Tree,
and extract only relevant support relations.

%%%%%%%%%%%%%%%%%%%%%%%%%%%%%%%%%%%%%%%%%%%%%%%%%%%%%%%%%%%%%%%%%%%%%%%%%%%%%%%%%%%%%%%%%%%%%%%%%%%%%%%%%%%%%%%%%%%%%%%%%%%%%%%%%%%%%%%%%%%%%%%%%%%%%%%%%%%%%%%%%%
\subsection{Labelled Justifications}
\label{sec:ASjust_labelled_justification}

A Labelled Justification assigns the label \textquotesingle$+$\textquotesingle\ to all facts and NAF literals occurring as premises of an argument labelled \textquotesingle$+$\textquotesingle\
in the Attack Tree, as well as to this argument's conclusion.
A Labelled Justification assigns the label \textquotesingle$-$\textquotesingle\ to the conclusion of an argument labelled \textquotesingle$-$\textquotesingle\ in the Attack Tree 
as well as to some NAF literals supporting this argument,
namely to those NAF literals whose contrary is the conclusion of a child node of this argument in the Attack Tree.
Attack and support relations are labelled \textquotesingle$+$\textquotesingle\ if the first literal in the relation is labelled \textquotesingle$+$\textquotesingle,
and labelled \textquotesingle$-$\textquotesingle\ if the first literal in the relation is labelled \textquotesingle$-$\textquotesingle.
Since the labels in a Labelled Justification depend on the labels of arguments in an Attack Tree, the definition is split into two cases:
one for nodes holding arguments labelled \textquotesingle$+$\textquotesingle\ in the Attack Tree,
and the other for nodes holding arguments labelled \textquotesingle$-$\textquotesingle\ in the Attack Tree.

 \begin{definition}[Labelled Justification]
 \label{def:labas}
 Let $\mathcal{P}$ be a logic program and
 let $X$ be a set of arguments in $ABA_{\mathcal{P}}$.
 Let $A$ be an argument in $ABA_{\mathcal{P}}$ and $\Upsilon = attTree_{X}(A)$ an Attack Tree of $A$ with respect to $X$.
 The \emph{Labelled Justification} of $A$ with respect to $\Upsilon$,
 denoted $justL_{\Upsilon}(A)$, is obtained as follows:\\
$justL_{\Upsilon}(A) = \\
\bigcup_{N \textit{ in } \Upsilon,\: label(N) = +}$
\begin{align*}
 &\{ supp\mathunderscore rel^+(k^+_{asm}, conc(N)_{A_N}^+) \; & &| \; k \in AP(N) \backslash conc(N), name(N) = A_N \} \; \cup\\
 &\{ supp\mathunderscore rel^+(k^+_{fact}, conc(N)_{A_N}^+) \; & &| \; k \in FP(N) \backslash conc(N), name(N) = A_N \} \; \cup\\
 &\{ att\mathunderscore rel^-(conc(M)_{A_M}^-, k^+_{asm}) \; & &| \; M \in children(N),conc(M) = \overline{k},\\
 & & &\;\;name(M)=A_M \} \; \cup
  \end{align*}
$\bigcup_{N \textit{ in } \Upsilon,\: label(N) = -}$
\begin{align*}
 &\{ supp\mathunderscore rel^-(k^-_{asm}, conc(N)_{A_N}^-) \; & &| \; k \in AP(N) \backslash conc(N),children(N) = \{M\},\\
 & & &\;\;  conc(M) = \overline{k}, name(N) = A_N \} \; \cup\\
 &\{ att\mathunderscore rel^+(conc(M)^+_{fact}, k^-_{asm}) \; & &| \; children(N) = \{M\}, conc(M) = \overline{k},\\
 & & &\;\; FP(M) = \{conc(M)\}, AP(M) = \emptyset\}\; \cup\\
 &\{ att\mathunderscore rel^+(conc(M)^+_{A_M}, k^-_{asm}) \; & &| \; children(N) = \{M\},   conc(M) = \overline{k}, AP(M) \neq \emptyset\\
 & & &\;\; \textit{ or } FP(M) \neq \{conc(M)\} , name(M)=A_M\}
 \end{align*}
 \end{definition}
 
 To illustrate Labelled Justifications and the differences with Basic Justification,
 we construct the Labelled Justifications for some of the arguments we used for Basic Justifications in Example~\ref{ex:p1_basic_justification}.

 \begin{example}
 \label{ex:p1_labelledJust}
 The Labelled Justification of $A_{14}: (\emptyset, \{e\}) \vdash e$ with respect to the positive Attack Tree
 $\Upsilon_1 =  attTree_{\mathcal{E}_1}^+(A_{14})$ is the empty set, exactly as for
 the Basic Justification: $justL_{\Upsilon_1}(A_{14}) = justB_{\Upsilon_1}(A_{14}) = \emptyset$.
 The reason is that $A_{14}$ is labelled \textquotesingle$+$\textquotesingle\ in $\Upsilon_1$, but none of the three conditions
 for nodes with label \textquotesingle$+$\textquotesingle\ in Definition~\ref{def:labas} is satisfied.
 
Now consider the Labelled Justification of $A_{10}$ with respect to the negative Attack Tree $\Upsilon_2 = attTree_{\mathcal{E}_2}^-(A_{10})$:
 \begin{align*}
 justL_{\Upsilon_2}(A_{10}) &= \{supp\mathunderscore rel^-(not~ e^-_{asm}, a_{A_{10}}^-)\}\; \cup \; \{att\mathunderscore rel^+(e^+_{fact}, not~ e^-_{asm})\}\\
 &= \{supp\mathunderscore rel^-(not~ e^-_{asm}, a_{A_{10}}^-), \: att\mathunderscore rel^+(e^+_{fact}, not~ e^-_{asm})\}
 \end{align*}
 This Labelled Justification contains fewer literal-pairs than the Basic Justification of $A_{10}$
 with respect to $\Upsilon_2$ (see Example~\ref{ex:p1_basic_justification}),
 which additionally comprises supports of $not~c$ and $not~d$ for $a$.
 Since these two supports are not necessary to explain why $a$ is not in $S_2$ (the explanation is that the supporting literal $not~ e$ is attacked by the fact $e$),
 they are omitted in the Labelled Justification.
\end{example}

The procedure of extracting attack and support relations from an Attack Tree in the construction of a Labelled Justification
is similar to the method of Basic Justifications,
where the relations are extracted step by step for every node in the Attack Tree.
The main difference of Labelled Justifications is that nodes holding arguments labelled \textquotesingle$+$\textquotesingle\ and nodes holding arguments labelled \textquotesingle$-$\textquotesingle\
in an Attack Tree are handled separately in order to obtain the correct labelling of literals and relations in the justification.
Furthermore, the extraction of the support relation is divided into two cases: one for assumption-premises, and one for fact-premises.
Similarly, there are two cases for the extraction of the attack relation: the attacker can be a fact or another (non-fact and non-assumption) literal.
Note that not all supporting literals of an argument with label \textquotesingle$-$\textquotesingle\ are
extracted for a Labelled Justification, but only ``attacked'' ones.

%%%%%%%%%%%%%%%%%%%%%%%%%%%%%%%%%%%%%%%%%%%%%%%%%%%%%%%%%%%%%%%%%%%%%%%%%%%%%%%%%%%%%%%%%%%%%%%%%%%%%%%%%%%%%%%%%%%%%%%%%%%%%%%%%%%%%%%%%%%%%%%%%%%%%%%%%%%%%%%%%%
\subsection{LABAS Justifications}
\label{sec:ASjust_labelled_labas}
 
 In this section, we define the \emph{Labelled ABA-Based Answer Set (LABAS) Justification} of a literal with respect to an answer set,
 which is based on the Labelled Justifications of an argument for this literal with respect to an Attack Tree.
 We also prove that a LABAS Justification provides an explanation for a literal using an admissible fragment of the answer set in question.
 
 Just as for BABAS Justifications, if a literal $k$ is contained in an answer set, its LABAS Justification is constructed
 from one Labelled Justification of one of the corresponding arguments of $k$.
 Conversely, if $k$ is not in an answer set, its LABAS Justification is constructed from all Labelled Justifications of all arguments with conclusion $k$.
 The only difference in the construction is that the literal being justified is labelled before it is added to the justification.
 \begin{definition}[Labelled ABA-Based Answer Set Justification]
 \label{def:labas_pos_neg}
 Let $\mathcal{P}$ be a logic program and
 $S$ an answer set of $\mathcal{P}$.
 Let $\mathcal{E}$ be the corresponding stable extension of $S$ in
 $ABA_{\mathcal{P}} = \langle \mathcal{L}_{\mathcal{P}}, \mathcal{R}_{\mathcal{P}}, \mathcal{A}_{\mathcal{P}},\: \bar{\; }\: \rangle$.
 \begin{enumerate}
  \item Let $k \in S_{\textit{NAF}}$, $A \in \mathcal{E}$ a corresponding argument of $k$, and $\Upsilon = attTree_{\mathcal{E}}^+(A)$ some positive Attack Tree of $A$ with respect to $\mathcal{E}$.
  Let $lab(k) = k^+_{asm}$ if $k \in \mathcal{A}_{\mathcal{P}}$, $lab(k) = k^+_{fact}$ if $k \leftarrow \;\in \mathcal{R}_{\mathcal{P}}$, and $lab(k) = k^+_A$ else.
 A \emph{Positive LABAS Justification} of $k$ with respect to $S$ is:\\ $justL_{S}^+(k) = \{lab(k)\} \; \cup \; justL_{\Upsilon}(A)$.
 \item Let $k \notin S_{\textit{NAF}}$, $A_1,\ldots, A_n$ ($n \geq 0$) all arguments with conclusion $k$ in $ABA_{\mathcal{P}}$,
 and $\Upsilon_{11},\ldots, \Upsilon_{1m_1}, \ldots, \Upsilon_{n1}, \ldots, \Upsilon_{nm_n}$ ($m_1,\ldots, m_n \geq 0$) all negative Attack Trees of $A_1,\ldots, A_n$ with respect to $\mathcal{E}$.
  \begin{enumerate}[(a)]
    \item If $n = 0$, then the \emph{Negative LABAS Justification} of $k$ with respect to $S$ is:\\
    $justL_{S}^-(k) = \emptyset$
    \item If $n > 0$, then let $lab(k_1) = \ldots = lab(k_n) = k^-_{asm}$ if $k \in \mathcal{A}_{\mathcal{P}}$ and $lab(k_1) = k^-_{A_1}, \ldots, lab(k_n) = k^-_{A_n}$ else.
 The \emph{Negative LABAS Justification} of $k$ with respect to $S$ is:\\ $justL_{S}^-(k) = \{\{lab(k_1)\} \; \cup \; justL_{\Upsilon_{11}}(A_1),\ldots, \{lab(k_n)\} \; \cup \; justL_{\Upsilon_{nm_n}}(A_n)\}$.
   \end{enumerate}
 \end{enumerate}
 \end{definition}

\begin{example}
 \label{ex:p2_labas}
 We illustrate the advantages of LABAS Justifications as compared to BABAS Justifications
 by justifying the same literal as in Example~\ref{ex:sameConc_differentAsm}, i.e. $q \in S_{\mathcal{P}_3}$ and $q \in S_{\mathcal{P}_4}$ of the logic programs $\mathcal{P}_3$ and $\mathcal{P}_4$.
 The LABAS Justifications are constructed from the same Attack Trees as the BABAS Justifications
 (see Figures~\ref{fig:sameConc_differentAsm_p2} and \ref{fig:sameConc_differentAsm_p3}).
 The unique Positive LABAS Justification of $q$ with respect to $S_{\mathcal{P}_3}$ and the two possible Positive LABAS Justifications of $q$ with respect to $S_{\mathcal{P}_4}$ are:
 \begin{align*}
  justL_{S_{\mathcal{P}_3}}^+(q) 
  &= \{q^+_{A_9},\: supp \mathunderscore rel^+(not~p^+_{asm}, q^+_{A_9}),\: att\mathunderscore rel^-(p_{A_{12}}^-,not~p^+_{asm}),\\
  &\qquad att\mathunderscore rel^-(p_{A_{13}}^-,not~p^+_{asm}),\: supp\mathunderscore rel^-(not~a^-_{asm},p_{A_{12}}^-),\\
  &\qquad att\mathunderscore rel^+(a^+_{fact}, not~a^-_{asm}),\: supp\mathunderscore rel^-(not~b^-_{asm},p_{A_{13}}^-),\\
  &\qquad att\mathunderscore rel^+(b^+_{fact}, not~b^-_{asm})\}
 \end{align*}
  \begin{align*}
  justL_{S_{\mathcal{P}_4}}^+(q) 
  &= \{q^+_{A_9},\: supp \mathunderscore rel^+(not~p^+_{asm}, q^+_{A_9}),\: att\mathunderscore rel^-(p_{A_{14}}^-,not~p^+_{asm}),\\
  &\qquad \: supp\mathunderscore rel^-(not~a^-_{asm},p_{A_{14}}^-), att\mathunderscore rel^+(a^+_{fact}, not~a^-_{asm})\}
 \end{align*}
   \begin{align*}
  justL_{S_{\mathcal{P}_4}}^+(q) 
  &= \{q^+_{A_9},\: supp \mathunderscore rel^+(not~p^+_{asm}, q^+_{A_9}),\: att\mathunderscore rel^-(p_{A_{14}}^-,not~p^+_{asm}),\\
  &\qquad \: supp\mathunderscore rel^-(not~b^-_{asm},p_{A_{14}}^-), att\mathunderscore rel^+(b^+_{fact}, not~b^-_{asm})\}
 \end{align*}
The graphical representations of these LABAS Justifications are depicted in Figure~\ref{fig:sameConc_differentAsm_justlabas}.
The differences between BABAS and LABAS Justifications can be easily spotted when comparing the BABAS Justification graphs in Figure~\ref{fig:sameConc_differentAsm_just} 
with the LABAS Justification graphs in Figure~\ref{fig:sameConc_differentAsm_justlabas},
both of which explain why $q$ is part of $S_{\mathcal{P}_3}$ and $S_{\mathcal{P}_4}$.
In contrast to the BABAS Justifications, the LABAS Justifications express that in $\mathcal{P}_3$ there are two different ways of deriving $p$,
one supported by $not~ a$ (yielding $A_{12}$) and the other one by $not~ b$ (yielding $A_{13}$), but in $\mathcal{P}_4$ there is only one way of
deriving $p$, supported by both $not~a$ and $not~b$ (yielding $A_{14}$).
The reason that neither of the two LABAS Justifications of $q$ with respect to $S_{\mathcal{P}_4}$ comprises both of these
supporting NAF literals is that LABAS Justifications only contain the supporting NAF literals which are ``attacked'';
in the first case $not~ a$ is attacked by $a$, in the second case $not~ b$ is attacked by $b$.
\end{example}

\begin{figure}
 \centering
  \includegraphics[width=\textwidth]{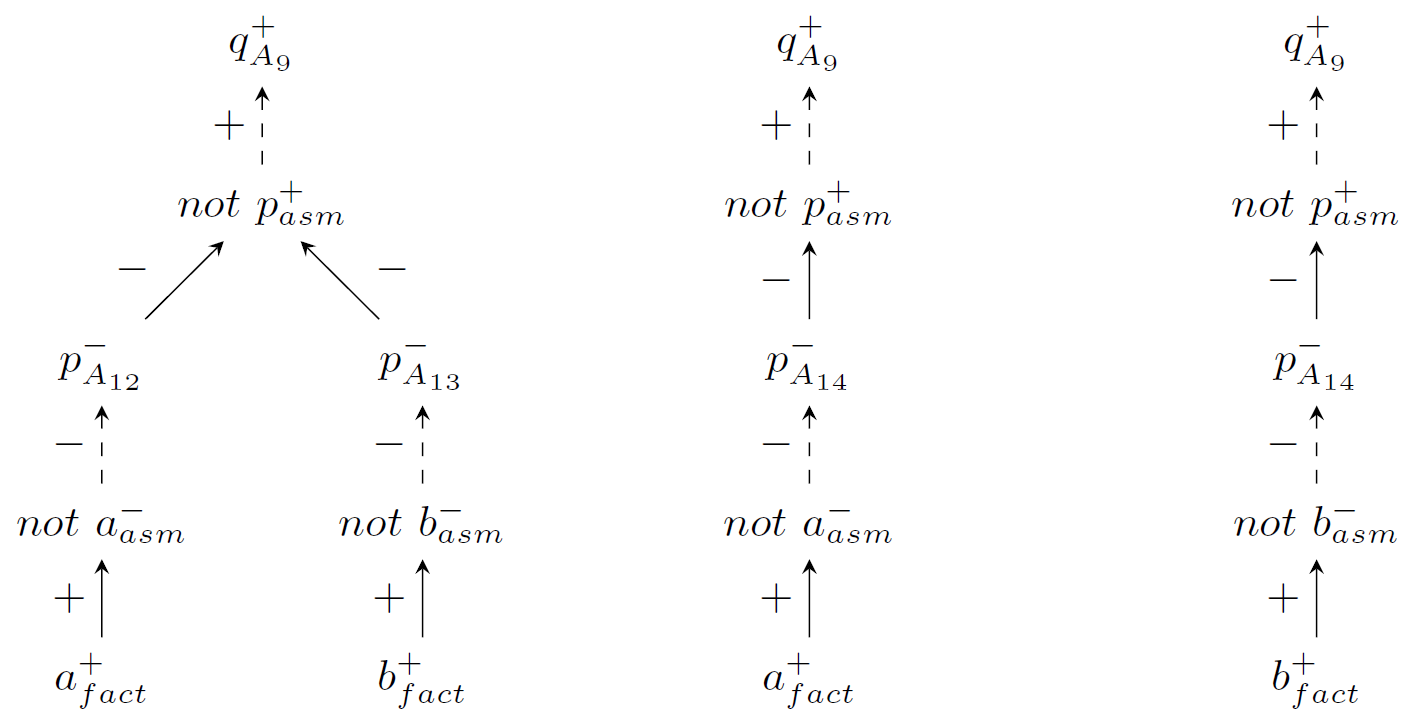}
 \caption{The unique Positive LABAS Justification $justL_{S_{\mathcal{P}_3}}^+(q)$ (left) and the two Positive LABAS Justifications $justL_{S_{\mathcal{P}_4}}^+(q)$ (middle and right)
  from Example~\ref{ex:p2_labas}. See Figure~\ref{fig:sameConc_differentAsm_just} for the respective BABAS Justifications of $q$.}
 \label{fig:sameConc_differentAsm_justlabas}
\end{figure}

As illustrated by Example~\ref{ex:p2_labas}, LABAS Justifications solve the shortcomings of BABAS Justifications:
They indicate whether or not support and attack relations ``succeed'', as well as which literals are facts or assumptions.
Furthermore, tagging literals with argument-names makes it possible to distinguish between different ways of deriving the same literal.
In addition, a LABAS Justification is sometimes shorter than the respective BABAS Justification,
only comprising relevant supporting literals of a literal not in the answer set in question.

\begin{example}
 \label{ex:doctor_labas}
 Recall Dr. Smith who has to determine whether
 to follow his own decision to treat the shortsightedness of his patient Peter with laser surgery
 or whether to act according to the suggestion of his decision support system and treat Peter with intraocular lenses (see Example \ref{ex:doctor}).
 In Example~\ref{ex:doctor_attackTree}, we illustrated how Attack Trees can be used to explain the suggestion of the decision support system
 as well as why Dr. Smith's treatment decision is wrong.
 Here, we demonstrate the LABAS Justifications explaining this.
 
 Figure~\ref{fig:doctor_labas_surgery} displays the Negative LABAS Justification of the literal $laserSurgery$ which is not contained in the answer set
 $S_{doctor}$ of the logic program $\mathcal{P}_{doctor}$ (see Example~\ref{ex:doctor}).
 This LABAS Justification is constructed from all Labelled Justifications of all arguments with conclusion $laserSurgery$,
 i.e. from all Attack Trees for arguments with conclusion $laserSurgery$.
 There is only one argument with conclusion $laserSurgery$, but there are two different negative Attack Trees for this argument (see Example~\ref{ex:doctor_attackTree}).
 The negative Attack Tree underlying the left part of the LABAS Justification in Figure~\ref{fig:doctor_labas_surgery} was illustrated in Figure~\ref{fig:doctor_attackTree_surgery}.
 The Negative LABAS Justification of $laserSurgery$ expresses that Peter should not have laser surgery for two reasons:
 first (left part), because laser surgery should only be used if the patient is not tight on money, but Peter is tight on money as he is a student and as there is no evidence that his parents are rich;
 second (right part), because laser surgery should only be used if it has not been decided that the patient should have corrective lenses,
 but there is evidence that Peter should have corrective lenses since he is shortsighted and since there is evidence against having laser surgery (and assuming that the patient does not have laser surgery
 is a prerequisite for having corrective lenses).
 
 A Positive LABAS Justification explaining why Peter should get intraocular lenses is displayed in Figure~\ref{fig:doctor_labas_intraocular}.
 This LABAS Justification expresses that all supporting assumptions needed to draw the conclusion that Peter should have intraocular lenses are satisfied,
 namely Peter is shortsighted, he should not have laser surgery, he should not have glasses, and he should not have contact lenses.
 The explanation also illustrates why these other treatments are not applicable.
 
 Using the LABAS Justifications, Dr. Smith can now understand why the decision support system suggested intraocular lenses as the best treatment for Peter
 and why Peter should not have laser surgery.
 Dr. Smith can therefore easily revise his original decision that Peter should have laser surgery, realizing that he forgot
 to consider that Peter is a student and that consequently Peter has not enough money to pay for laser surgery.
\end{example}

\begin{figure}
\centering
 \includegraphics[width=\textwidth]{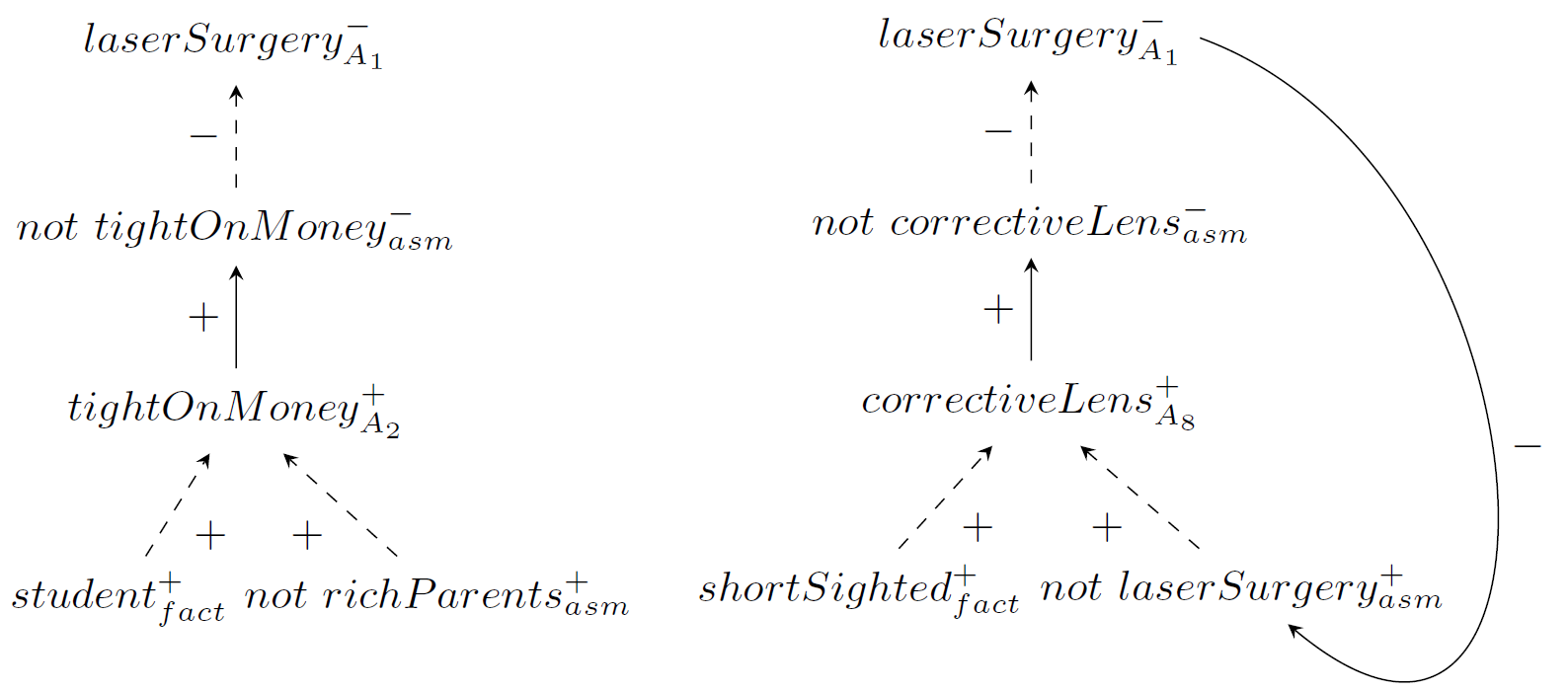}
 \caption{The Negative LABAS Justification of $laserSurgery$ with respect to $S_{doctor}$ of the logic program $\mathcal{P}_{doctor}$ (see Example~\ref{ex:doctor})
 as explained in Example~\ref{ex:doctor_labas}.}
 \label{fig:doctor_labas_surgery}
\end{figure}

\begin{figure}
 \centering
 \includegraphics[width=\textwidth]{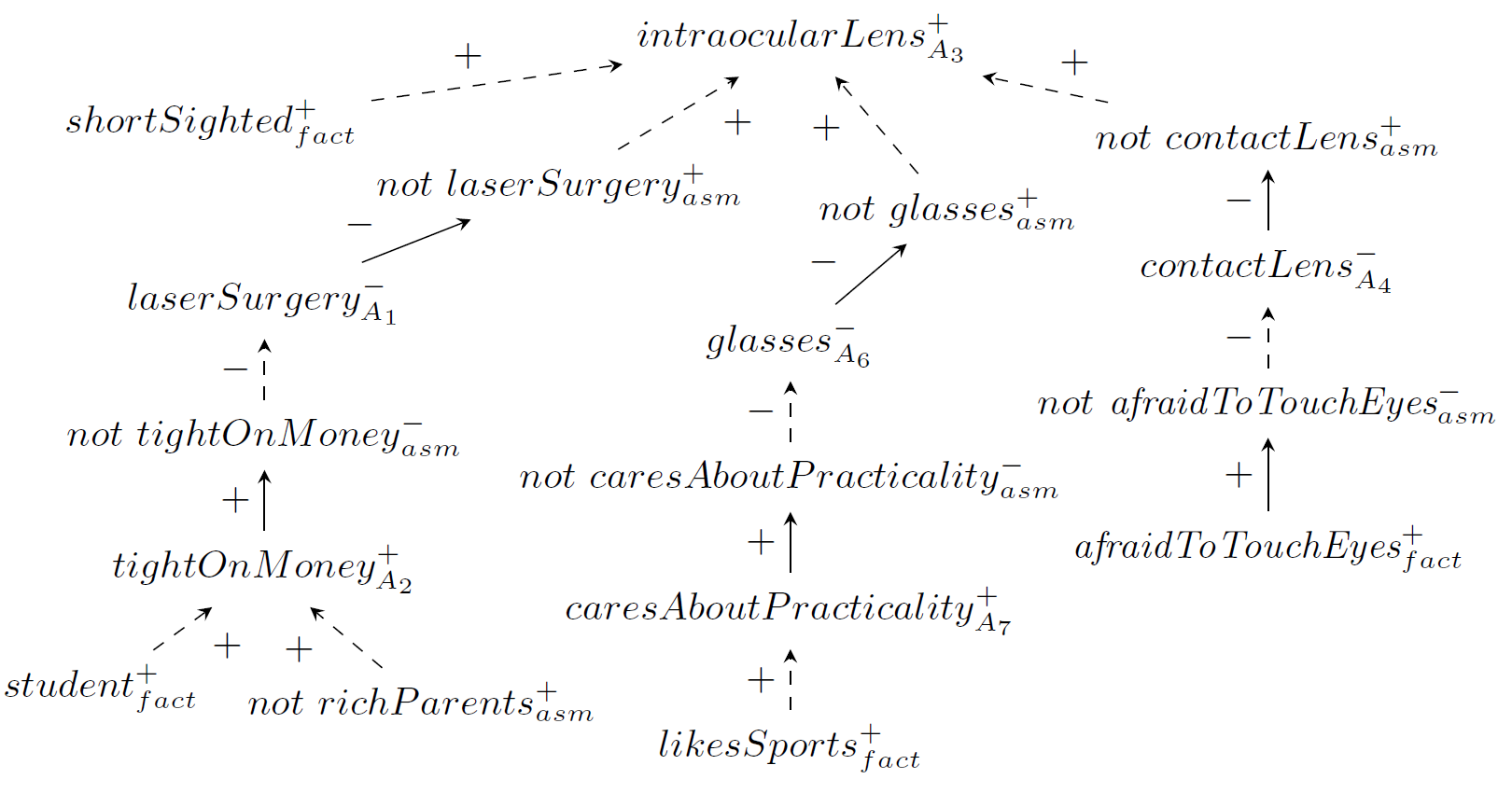}
 \caption{A Positive LABAS Explanation of $intraocularLens$ with respect to $S_{doctor}$ of the logic program $\mathcal{P}_{doctor}$ (see Example~\ref{ex:doctor})
 as explained in Example~\ref{ex:doctor_labas}.}
 \label{fig:doctor_labas_intraocular}
\end{figure}

In the following we show that LABAS Justifications
explain a literal with respect to an answer set in terms of an admissible fragment of this answer set.
We first introduce some terminology to refer to the literals in a LABAS Justification.

\begin{notation}
\label{not:labas_occuring_lit}
 Let $justL_{S}^{+}(k)$ be a Positive LABAS Justification. We say that a literal $k_1$ \emph{occurs in} $justL_{S}^{+}(k)$
 if and only if $k_1 = k$ or $k_1$ is one of the literals in a support- or attack-pair in $justL_{S}^{+}(k)$.
  We say that $k_1$ \emph{occurs positively} in $justL_{S}^{+}(k)$ if and only if it occurs as $k^+_{1_{asm}}$, $k^+_{1_{fact}}$,
  or $k^+_{1_{A}}$ (where $A$ is some argument with conclusion $k_1$).
 
  We use the same terminology for Negative LABAS Justifications.
\end{notation}

The following theorem characterizes the explanations given by Positive LABAS Justifications.
\begin{theorem}
 \label{lem:labas_admissble_pos}
 Let $\mathcal{P}$ be a logic program and let $justL_{S}^+(k_1)$ be a Positive LABAS Justification of some literal $k_1$ with respect to an answer set $S$ of $\mathcal{P}$.
  Let $NAF^+ = \{k\; |\;  k_{asm}^+ \textit{ occurs in } justL_{S}^+(k_1)\}$ be
 the set of all NAF literals occurring positively in $justL_{S}^+(k_1)$.
 Then
 \begin{itemize}
  \item $\mathcal{P}\; \cup\; NAF^+$ is an admissible scenario of $\mathcal{P}$ in the sense of \cite{admissible_scenario};
  \item $NAF^+ \subseteq S_{NAF}$.
 \end{itemize}
\end{theorem}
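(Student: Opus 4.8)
The plan is to reduce the statement to Theorem~\ref{lem:admissibleScenario}. Let $\Upsilon = attTree_{\mathcal{E}}^+(A)$ be the positive Attack Tree underlying $justL_{S}^+(k_1)$, so that $k_1 \in S_{\textit{NAF}}$, $A \in \mathcal{E}$ is a corresponding argument of $k_1$, and $justL_{S}^+(k_1) = \{lab(k_1)\} \cup justL_{\Upsilon}(A)$ by Definition~\ref{def:labas_pos_neg}.1. Let $Asms$ denote the set of all assumption-premises of \textquotesingle$+$\textquotesingle-labelled arguments occurring in $\Upsilon$ (the set called $Asms$ in Theorem~\ref{lem:admissibleScenario}). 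The whole argument then rests on the single claim that $NAF^+ = Asms$. Granting this, Theorem~\ref{lem:admissibleScenario}.1 says precisely that $\mathcal{P} \cup Asms = \mathcal{P} \cup NAF^+$ is an admissible scenario of $\mathcal{P}$; and for the second bullet, each $\alpha \in Asms$ is an assumption-premise of some \textquotesingle$+$\textquotesingle-node, hence of some argument $A_1 \in \mathcal{E}$ by Lemma~\ref{lem:nodes_in_tree}.1, so $AP(A_1) \subseteq \Delta_S$ by Theorem~\ref{lem:corr_stable}, giving $\alpha \in \Delta_S \subseteq S_{\textit{NAF}}$ by Definition~\ref{def:AS_NAF}; thus $NAF^+ = Asms \subseteq S_{\textit{NAF}}$.

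The real work is therefore the identity $NAF^+ = Asms$, and I expect this bookkeeping to be the only delicate point: one must check that flattening $\Upsilon$ into $justL_{\Upsilon}(A)$ neither drops nor invents a positively-labelled assumption literal. The structural observation I would record first is that in $ABA_{\mathcal{P}}$ every assumption is a NAF literal while every contrary is a classical literal, so by Definition~\ref{def:attacks} no argument whose conclusion is a NAF literal attacks any argument; hence an assumption-argument can occur in $\Upsilon$ only at the root, and this happens exactly when $k_1 \in \mathcal{A}_{\mathcal{P}}$, in which case $A = (\{k_1\},\emptyset) \vdash k_1$. Consequently every \textquotesingle$+$\textquotesingle-node $N$ of $\Upsilon$ other than such a root has a classical conclusion, so $\alpha \neq conc(N)$ for all $\alpha \in AP(N)$, and the first clause of Definition~\ref{def:labas} then puts $supp\mathunderscore rel^+(\alpha^+_{asm}, conc(N)^+_{A_N})$ into $justL_{\Upsilon}(A)$. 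Adding the contribution $lab(k_1) = k_1^+_{asm}$ made when $A$ is the root assumption-argument (so $k_1 \in AP(A)$), this gives $Asms \subseteq NAF^+$.

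For the reverse inclusion I would go through the finitely many clauses of Definitions~\ref{def:labas} and \ref{def:labas_pos_neg} that can produce an occurrence of the form $\alpha^+_{asm}$: the term $lab(k_1)$ (which forces $\alpha = k_1 \in AP(A)$ as just noted), the $supp\mathunderscore rel^+$ clause at a \textquotesingle$+$\textquotesingle-node $N$ (which directly yields $\alpha \in AP(N)$), and the $att\mathunderscore rel^-$ clause at a \textquotesingle$+$\textquotesingle-node $N$ with a child $M$ such that $conc(M) = \overline{\alpha}$. In the last case, $M$ being a child of the \textquotesingle$+$\textquotesingle-node $N$ makes $M$ an attacker of $N$ (Definition~\ref{def:attackTree}), so by Definition~\ref{def:attacks} there is $\gamma \in AP(N)$ with $\overline{\gamma} = conc(M) = \overline{\alpha}$; since the contrary map is injective on $\mathcal{A}_{\mathcal{P}}$ we get $\gamma = \alpha$, hence $\alpha \in AP(N)$. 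In every case $\alpha$ is an assumption-premise of a \textquotesingle$+$\textquotesingle-node of $\Upsilon$, i.e. $\alpha \in Asms$, so $NAF^+ \subseteq Asms$. I would also note, for completeness, that the remaining clauses of Definition~\ref{def:labas} introduce only literals tagged $fact$, literals tagged with an argument name, or literals labelled \textquotesingle$-$\textquotesingle, none of which have the shape $\alpha^+_{asm}$. Combining the two inclusions yields $NAF^+ = Asms$, and the theorem follows as described above.
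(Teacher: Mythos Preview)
Your proof is correct and follows the same route as the paper: reduce to Theorem~\ref{lem:admissibleScenario} by establishing $NAF^+ = Asms$, where $Asms$ is the set of assumption-premises of \textquotesingle$+$\textquotesingle-labelled nodes in the underlying Attack Tree. The paper simply asserts $NAF^+ = Asms$ by citing Definitions~\ref{def:labas} and~\ref{def:labas_pos_neg}, whereas you carry out the clause-by-clause bookkeeping explicitly (and also spell out the argument for $NAF^+ \subseteq S_{\textit{NAF}}$, which the paper leaves implicit); incidentally, the paper's own proof cites Theorem~\ref{lem:admissibleScenarioAttacks} where Theorem~\ref{lem:admissibleScenario} is clearly intended, so your reference is the accurate one.
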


\begin{proof}
 By Definitions~\ref{def:labas} and \ref{def:labas_pos_neg} and Notation~\ref{not:labas_occuring_lit},
$NAF^+$ is the union of all assumptions supporting arguments labelled \textquotesingle$+$\textquotesingle\ in the Attack Tree $attTree_{\mathcal{E}}^+(A)$
used for the construction of $justL_{S}^+(k_1)$,
where $\mathcal{E}$ is the corresponding stable extension of $S$ and $A \in \mathcal{E}$ is a corresponding argument of $k_1$.
So $NAF^+ = Asms$ as defined in Theorem~\ref{lem:admissibleScenarioAttacks}. \hfill
\end{proof}

This result expresses that LABAS Justifications explain that a literal is contained in an answer set because this literal is
supported and defended by the answer set.
However, LABAS Justifications do not simply provide the whole answer set as an explanation, but instead use an admissible fragment of it.
A similar result can be formulated for Negative LABAS Justifications:
\begin{theorem}
 \label{lem:labas_admissble_neg}
 Let $\mathcal{P}$ be a logic program and let $justL_{S}^-(k_1)$ be a Negative LABAS Justification of a literal $k_1$ with respect to an answer set $S$ of $\mathcal{P}$.
Let $NAF^+_{11}, \ldots, NAF^+_{1m_1}, \ldots, NAF^+_{n1},\\ \ldots, NAF^+_{nm_n}$ be the sets of all NAF literals occurring positively in the
subsets of $justL_{S}^-(k_1)$, i.e. $NAF^+_{ij} = \{ k \;|\; k_{asm}^+ \textit{ occurs in } lab(k_{1_i}) \cup justL_{\Upsilon_{ij}}(A_i)\}$ where
$0 \leq i \leq n$ and $0 \leq j \leq m_n$.
 Then for each $NAF^+_{ij}$
 \begin{itemize}
  \item $\mathcal{P}\; \cup\; NAF^+_{ij}$ is an admissible scenario of $\mathcal{P}$ in the sense of \cite{admissible_scenario};
  \item $NAF^+_{ij} \subseteq S_{NAF}$.
 \end{itemize}
\end{theorem}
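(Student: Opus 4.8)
The plan is to reduce the statement, one subset of $justL_{S}^-(k_1)$ at a time, to Theorem~\ref{lem:admissibleScenarioAttacks}, exactly as the proof of Theorem~\ref{lem:labas_admissble_pos} reduces the positive case to its counterpart. Fix a subset of $justL_{S}^-(k_1)$, namely the one built from a negative Attack Tree $\Upsilon_{ij} = attTree_{\mathcal{E}}^-(A_i)$ of an argument $A_i$ with conclusion $k_1$, where $\mathcal{E}$ is the corresponding stable extension of $S$; this subset is $\{lab(k_{1_i})\} \cup justL_{\Upsilon_{ij}}(A_i)$, and $NAF^+_{ij}$ collects the NAF literals tagged $k^+_{asm}$ in it.

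The first step is to pin down $NAF^+_{ij}$ exactly. Inspecting Definition~\ref{def:labas}: in the branch for nodes $N$ with $label(N) = +$, a NAF literal $k$ receives the tag $k^+_{asm}$ precisely when $k \in AP(N)\setminus\{conc(N)\}$ for such a node (it then appears in a $supp\mathunderscore rel^+$ pair, and possibly again as the second component of an $att\mathunderscore rel^-$ pair, but always as the assumption-premise of a $+$-labelled node); in the branch for nodes $N$ with $label(N) = -$, every literal carrying a $+$ superscript is the conclusion $conc(M)$ of a child argument $M$, hence a classical literal (the contrary of an assumption), never a NAF literal, while the assumption-premises extracted there are tagged $k^-_{asm}$; and $lab(k_{1_i})$ carries the label $-$. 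Hence $NAF^+_{ij}$ is exactly the set of assumption-premises of the arguments labelled $+$ in $\Upsilon_{ij}$.

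The second step is to notice that this set is literally the set $Asms$ in Theorem~\ref{lem:admissibleScenarioAttacks}, instantiated with the argument $A_i$ and its negative Attack Tree $\Upsilon_{ij}$ (the hypotheses hold since $k_1 \notin S_{NAF}$). Part~1 of Theorem~\ref{lem:admissibleScenarioAttacks} then yields that $\mathcal{P}\cup NAF^+_{ij}$ is an admissible scenario; and $NAF^+_{ij} \subseteq S_{NAF}$ follows either from Part~2 of that theorem applied to the assumption-arguments of these assumption-premises (which label $+$ nodes by Lemma~\ref{lem:nodes_in_tree} and Lemma~\ref{lem:stable_ext_support}) together with Theorem~\ref{lem:corr_ext}, or directly: each assumption-premise of a $+$-labelled — hence, by Lemma~\ref{lem:nodes_in_tree}, stable-extension — argument is, by Lemma~\ref{lem:stable_ext_support}, the conclusion of an argument in $\mathcal{E}$, so it lies in $S_{NAF}$ by Theorem~\ref{lem:corr_ext}. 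Since the subset was arbitrary, and the case $n = 0$ is vacuous because $justL_{S}^-(k_1) = \emptyset$, the claim holds for every $NAF^+_{ij}$.

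The main obstacle is the bookkeeping in the first step: one must work through every clause of Definition~\ref{def:labas} to confirm that ``a NAF literal occurs positively in the Labelled Justification'' is equivalent to ``it is an assumption-premise of a $+$-labelled node of the Attack Tree'', being especially careful that for $-$-labelled nodes only the attacked assumption-premises are extracted and that they are tagged $-$, so that no spurious NAF literal is added to $NAF^+_{ij}$ and none is omitted. Once that identification is in hand, the result is immediate from Theorem~\ref{lem:admissibleScenarioAttacks}.
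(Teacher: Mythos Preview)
Your proposal is correct and follows the same approach as the paper: the paper's proof simply reads ``Analogous to the proof of Theorem~\ref{lem:labas_admissble_pos}'', and that proof in turn identifies the positively occurring NAF literals with the set $Asms$ of assumption-premises of $+$-labelled arguments in the underlying Attack Tree, then invokes the corresponding Attack Tree theorem --- exactly what you do here with Theorem~\ref{lem:admissibleScenarioAttacks}. Your write-up just spells out the bookkeeping (the case analysis on Definition~\ref{def:labas}, the observation that $+$-labelled nodes in a negative Attack Tree are attackers and hence have classical conclusions so $AP(N)\setminus\{conc(N)\} = AP(N)$, and the route to $NAF^+_{ij}\subseteq S_{NAF}$ via Lemma~\ref{lem:nodes_in_tree} and Theorem~\ref{lem:corr_ext}) that the paper leaves implicit.
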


\begin{proof}
Analogous to the proof of Theorem~\ref{lem:labas_admissble_pos}.
\hfill 
\end{proof}

This means that the LABAS Justification of a literal which is not part of an answer set explains all different ways in which this
literal is ``attacked'' by an admissible fragment of the answer set.

In summary, LABAS Justifications use the same information for an explanation as Attack Trees,
namely an admissible fragment of an answer set, but expressing these information in terms of literals and the support and ``attack'' relations between them
rather than in terms of arguments and attacks.
Thus, LABAS Justifications are more suitable explanations if logic programming concepts are desired.

In the following, we will use the term \emph{ABAS Justification} as shorthand for both BABAS and LABAS Justifications.

%%%%%%%%%%%%%%%%%%%%%%%%%%%%%%%%%%%%%%%%%%%%%%%%%%%%%%%%%%%%%%%%%%%%%%%%%%%%%%%%%%%%%%%%%%%%%%%%%%%%%%%%%%%%%%%%%%%%%%%%%%%%%%%%%%%%%%%%%%%%%%%%%%%%%%%%%%%%%%%%%%
%%%%%%%%%%%%%%%%%%%%%%%%%%%%%%%%%%%%%%%%%%%%%%%%%%%%%%%%%%%%%%%%%%%%%%%%%%%%%%%%%%%%%%%%%%%%%%%%%%%%%%%%%%%%%%%%%%%%%%%%%%%%%%%%%%%%%%%%%%%%%%%%%%%%%%%%%%%%%%%%%%
\section{Discussion and Related Work}
\label{sec:relWork}
So far, the problem of justifying answer sets has not received much attention, even though the need for justifications has been expressed \cite{asp_expl_position}.
According to \cite{ASP_justification}, a justification should ``provide only the information that are relevant to the item being explained'',
making it easier understandable.
We incorporate this in ABAS Justifications by not using the whole derivation of a literal, but only 
the underlying facts and NAF literals necessary to derive the literal in question.

The two approaches for justifying why a literal is or is not part of an answer set which are
most related to ABAS Justifications are Argumentation-Based Answer Set Justifications and off-line justifications.
\emph{Argumentation-Based Answer Set Justifications} \cite{arg_based_just} are a ``predecessor''
of ABAS Justifications using the ASPIC+ argumentation framework \cite{Prakken} instead of ABA.
\emph{Off-line justifications} \cite{ASP_justification} explain why a literal is or is not part of an answer set by
making use of the well-founded model semantics for logic programs.
In the following Sections \ref{sec:relWork_offline} and \ref{sec:relWork_argBased}, we look at these two related approaches in more detail and compare them to ABAS Justifications.
In Section~\ref{sec:relWork_other}, we look at a number of other, less closely related explanation approaches.

%%%%%%%%%%%%%%%%%%%%%%%%%%%%%%%%%%%%%%%%%%%%%%%%%%%%%%%%%%%%%%%%%%%%%%%%%%%%%%%%%%%%%%%%%%%%%%%%%%%%%%%%%%%%%%%%%%%%%%%%%%%%%%%%%%%%%%%%%%%%%%%%%%%%%%%%%%%%%%%%%%
\subsection{Off-line Justifications}
\label{sec:relWork_offline}
The off-line justification for a classical literal $l$ is a graph of classical literals with root node $l$.
The child nodes of $l$ are the relevant literals which $l$ directly depends on.
In other words, the justified literal $l$ has the relevant body literals of an applicable clause in the logic program as its child nodes,
and the justifications of these body literals as subgraphs.

\begin{example}
\label{ex:abc}
Consider the following logic program $\mathcal{P}_{abc}$ (taken from \cite{ASP_justification}), which has two answer sets $S_1 = \{ b,e,f\}$ and $S_2 = \{a,e,f\}$:
\begin{align*}
a &\leftarrow f, not~ b\\
 b &\leftarrow e, not~ a\\
 f &\leftarrow e\\
 d &\leftarrow c, e\\
 c &\leftarrow d, f\\
 e &\leftarrow
 \end{align*}
 The off-line justification for $b \in S_1$ is depicted on the top right of Figure~\ref{fig:b_argjust}.
 It is constructed using the second clause in $\mathcal{P}_{abc}$, yielding a positive dependency of $b$ on $e$, and a negative dependency of $b$ on $a$.
 This expresses that $b$ is in the answer set because it depends on $e$ being part of the answer set and on $a$ not being part of it.
 Whether or not a classical literal $l$ occurring in the off-line justification is part of the answer set in question is indicated by the labels
 \textquotesingle$+$\textquotesingle\ (if $l$ is in the answer set)
 or \textquotesingle$-$\textquotesingle\ (if $l$ is not in the answer set).
 The dependency conditions of $b$ on $e$ and $a$ are satisfied, since $e$ is labelled \textquotesingle$+$\textquotesingle\ and
 $a$ is labelled \textquotesingle$-$\textquotesingle.
 The off-line justification graph also expresses that $e$ is known to be true since it is a fact
 (indicated by $\top$ in the graph) and that $a$ is assumed to be false (indicated by $assume$ in the graph).
 It is important to note that NAF literals are represented indirectly in an off-line justification by means of their corresponding classical literal.
 For example in the off-line justification of $b$ (top right of Figure~\ref{fig:b_argjust}),
 the classical literal $a$ is used to represent the dependency of $b$ on the NAF literal $not~ a$.
  \end{example}
  
\begin{figure}[t]
 \centering
 \includegraphics[width=0.9\textwidth]{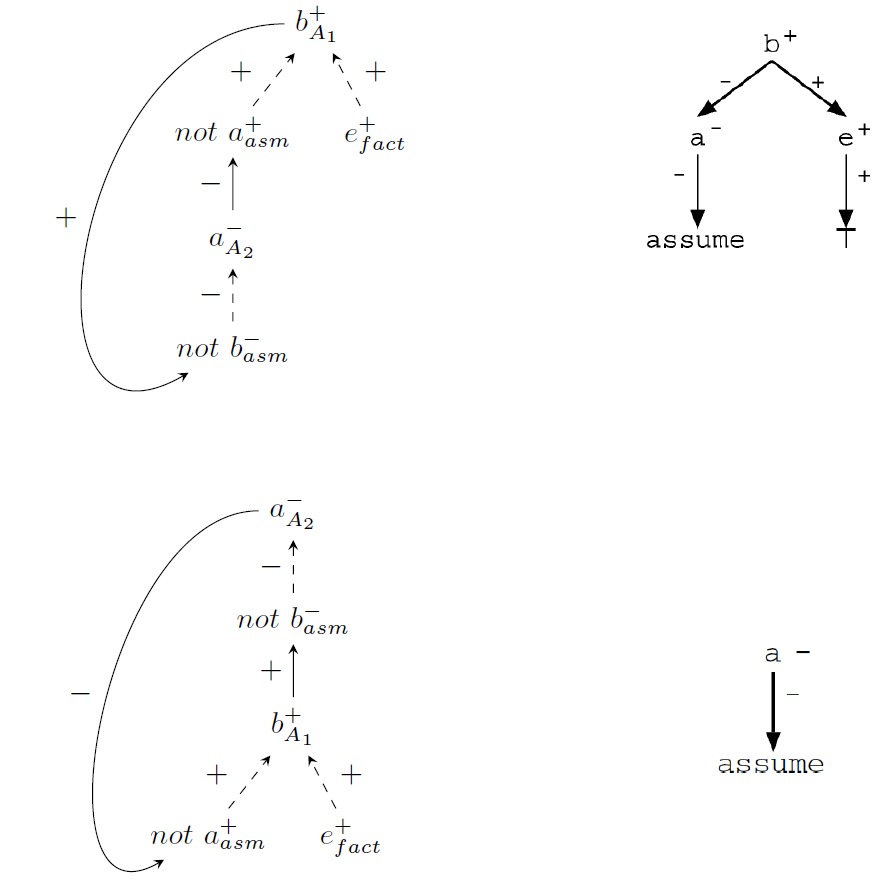}
 \caption{The two graphs at the top illustrate the LABAS Justification (left) and the Off-line Justification (right) of $b \in S_1$ of $\mathcal{P}_{abc}$, 
 whereas the graphs at the bottom represent the justifications of $a \notin S_1$ of $\mathcal{P}_{abc}$.}
 \label{fig:b_argjust}
\end{figure}
 
 Off-line justifications treat the relationship between literals in a proof-oriented way, that is as top-down dependencies, 
 whereas ABAS Justifications (and Attack Trees) provide explanations in a bottom-up manner in terms of
 assumptions and underlying knowledge supporting the conclusion.
 We argue that our bottom-up approach might be clearer for non-experts, as human decision making seems to involve
 starting from what is known along with some kind of assumptions, and then drawing conclusions from that.
 Instead of saying that $b$ is dependent on $e$ in $\mathcal{P}_{abc}$ as done by an off-line justification,
 an ABAS Justification expresses that $e$ supports $b$, as shown on the top left of Figure~\ref{fig:b_argjust}.
Especially with respect to NAF literals, we believe that a bottom-up support relation is more intuitive than a top-down dependency relation:
instead of saying that $b$ negatively depends on $a$ not being in the answer set as done by an off-line justification,
the ABAS Justification states that $not~ a$ supports $b$ (compare the two graphs at the top of Figure~\ref{fig:b_argjust}).

The well-founded model semantics is used in the construction of off-line justifications to determine literals which are ``assumed'' to be false with respect to an answer set, as opposed to literals which are always false.
These assumed literals are not further justified, i.e. they are leaf nodes in an off-line justification graph.
In contrast, ABAS Justifications further justify these ``assumed'' literals.
They are usually true NAF literals which are part of a dependency cycle.
An example is the literal $a$ in the logic program $\mathcal{P}_{abc}$, which is assumed to be false in the off-line justification
of $b$ with respect to $S_1$ (bottom right of Figure~\ref{fig:b_argjust}).
In contrast, the ABAS Justification further explains that $a$ is not in the answer set because the support by $not~ b$ does not ``succeed''
since the attack by $b$ on $not~ b$ ``succeeds'' (bottom left of Figure~\ref{fig:b_argjust}).

An off-line justification graph includes all intermediate literals in the derivation of the literal in question.
However, following \cite{asp_expl_position} we argue that for a justification it is sufficient to include the most basic relevant literals,
without considering intermediate steps.
Especially in the case of large logic programs, where derivations include many steps, an off-line justification will be a large graph with many positive and negative dependency relations,
which is hard to understand for humans.
In contrast, an ABAS Justification only contains the basic underlying literals,
i.e. facts and NAF literals necessary to derive the literal in question, making the justification clearer.
However, if the intermediate steps were required, they could be easily extracted from the arguments in the Attack Trees underlying an ABAS Justification.

In contrast to off-line justifications, where in addition to answer sets the well-founded model has to be computed,
for the construction of ABAS Justifications the computation of answer sets is sufficient.
Even though the definitions of ABAS Justifications refer to the corresponding stable extensions of the translated ABA framework,
it is not necessary to compute these stable extensions.
Whether or not the arguments needed for an ABAS Justification are contained in the respective corresponding stable extension can be directly deduced from the
answer set due to the correspondence between answer sets and stable extensions as stated in Theorems~\ref{lem:corr_ext}, \ref{lem:corr_stable}, and \ref{the:AsStable}.

%%%%%%%%%%%%%%%%%%%%%%%%%%%%%%%%%%%%%%%%%%%%%%%%%%%%%%%%%%%%%%%%%%%%%%%%%%%%%%%%%%%%%%%%%%%%%%%%%%%%%%%%%%%%%%%%%%%%%%%%%%%%%%%%%%%%%%%%%%%%%%%%%%%%%%%%%%%%%%%%%%
\subsection{Argumentation-Based Answer Set Justification}
\label{sec:relWork_argBased}

Argumentation-Based Answer Set Justification \cite{arg_based_just}
is the first work that applies argumentation theory to answer set programming, and in particular for the justification of answer sets.
There, the ASPIC+ argumentation framework \cite{Prakken} is used instead of ABA.

\begin{figure}[t]
\includegraphics[width=0.99\textwidth]{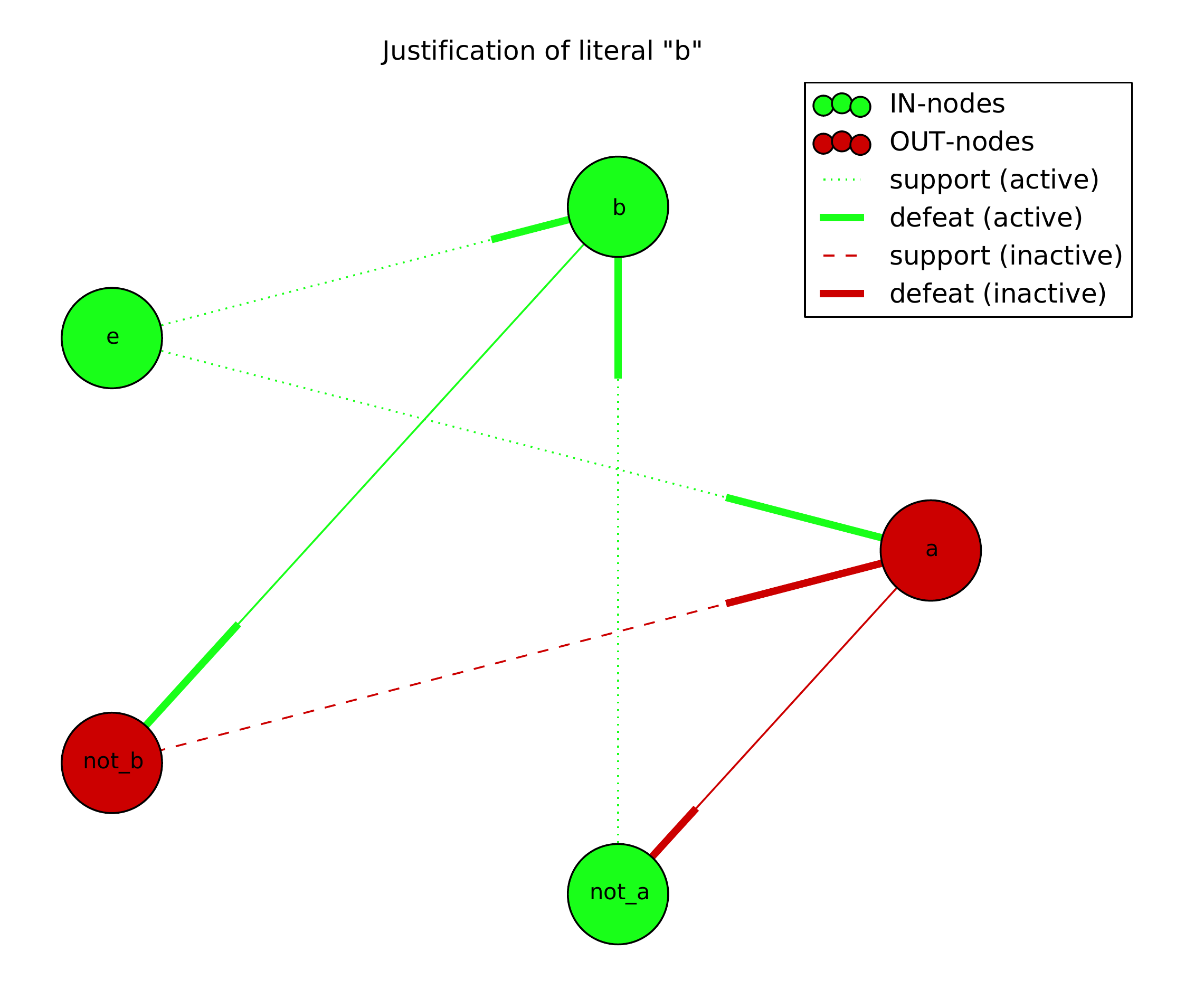}
\caption{Argumentation-Based Answer Set Justification of $b \in S_1$ of $\mathcal{P}_{abc}$ from Example~\ref{ex:abc}.}
 \label{fig:arg_based}
\end{figure}

Similarly to ABAS Justifications, in Argumentation-Based Answer Set Justifications literals are justified with respect to an answer set by means of ASPIC+ arguments
with respect to the stable extension corresponding to the answer set in question.
For the translation of a logic program into an ASPIC+ framework
 only a fraction of ASPIC+ features are needed; defeasible rules, issues, and preference orders are redundant.
This is to say that the ASPIC+ framework is too complex for the purpose of a justification and a more lightweight framework like ABA is more suitable.

The method for constructing a justification in Argumentation-Based Answer Set Justification is slightly different from the ABAS Justification approach.
Instead of extracting support- and attack-pairs from Attack Trees, Argumentation-Based Answer Set Justifications are defined recursively:
for an assumption-argument its attackers are investigated, whereas for non-assumption- and non-fact-arguments supports by assumption- and fact-arguments are examined.
The recursion terminates when fact-arguments or non-attacked assumption-arguments are encountered.

Argumentation-Based Answer Set Justifications have the same deficiencies as BABAS Justifications;
it is not clear which literals are facts or assumptions, and whether or not support and attack relations ``succeed''.
The implementation of Argumentation-Based Answer Set Justification colours the relations and literals similarly to the labels \textquotesingle$+$\textquotesingle\ and  \textquotesingle$-$\textquotesingle\
on relations and literals in LABAS Justifications, where green corresponds to \textquotesingle$+$\textquotesingle\
and red to \textquotesingle$-$\textquotesingle.
However, facts and assumptions cannot be distinguished from other literals, as depicted in Figure~\ref{fig:arg_based}.

In summary, ABAS Justifications are an improvement of Argumentation-Based Answer Set Justifications, both with respect to the elegance of the justification definition
and the appropriateness of the argumentation framework used.
LABAS Justifications also solve the deficiencies of Argumentation-Based Answer Set Justifications
by providing more information about the literals in the explanation as well as about their relationship.
Furthermore, Argumentation-Based Answer Set Justifications were introduced without any characterization.
In contrast, here we prove that ABAS Justifications provide an explanation in terms of an admissible fragment of the answer set in question,
and show their relationship with abstract dispute trees in ABA. 

%%%%%%%%%%%%%%%%%%%%%%%%%%%%%%%%%%%%%%%%%%%%%%%%%%%%%%%%%%%%%%%%%%%%%%%%%%%%%%%%%%%%%%%%%%%%%%%%%%%%%%%%%%%%%%%%%%%%%%%%%%%%%%%%%%%%%%%%%%%%%%%%%%%%%%%%%%%%%%%%%%
\subsection{Other related explanation approaches}
\label{sec:relWork_other}

In addition to the two explanations approaches for answer sets discussed in the previous sections, \cite{erdem_biologicalExplanation} introduce a formalism for explaining biomedical queries expressed in ASP.
Similar to ABAS Justifications, they construct trees for the explanation, but in contrast to our justifications these trees carry rules in the nodes rather than literals.
Another difference is that their explanation trees comprise every step in the derivation of a literal 
(similar to the approach of \cite{ASP_justification} explained in Section~\ref{sec:relWork_offline}) rather than
abstracting away from intermediate derivation steps between the literal in question and the underlying facts and NAF literals.

\cite{brain_debuggingASP} try to answer a similar question as the one we address with ABAS Justifications,
i.e. why a set of literals is or is not a subset of an answer.
Their explanations are presented in text form, but they point out that it might be possible to use a tree representation instead.
Just like \cite{erdem_biologicalExplanation}, all intermediate steps in a derivation are considered in the explanation,
thus differing from ABAS Justifications.

Related to the explanation of ASP is the visualization of the structure of logic programs in general.
ASPIDE \cite{aspide} is an Integrated Development Environment for ASP which, among other features,
displays the dependency graph of a logic program, i.e. it visualizes the positive (negative) dependencies between the rule heads and the atoms (NAF literals, respectively).
It is thus similar to the previously mentioned approaches in that it illustrates every step in a derivation.

The problem of constructing explanations has been addressed for logic programs without NAF in \cite{explanation_derivation} and \cite{explanation_prooftree}.
In the early work by Arora et al. explanations of atoms in a logic program are constructed as simple derivations of these atoms.
Thus, this approach is closer to \cite{erdem_biologicalExplanation} and \cite{brain_debuggingASP} than to ABAS Justifications as it provides all intermediate derivation steps.
Similar to this,
\cite{explanation_prooftree} show how to use proof trees as explanations for least fixpoint operators,
such as the semantics of constraint logic programs, where proof trees are derivations.

The comparison with these existing approaches demonstrates the novelty of ABAS Justifications
as they only provide the facts and NAF literals necessary for the derivation of a literal in question rather than
the whole derivation with all its intermediate steps.

Explanations have also received attention in other areas in the field of knowledge representation and reasoning,
and it has been emphasized that any expert system should provide explanations
for its solutions (see \cite{explanation_review} for an overview of explanations in heuristic expert systems).
Furthermore, it has been pointed out that even though argumentation and other knowledge-based systems have been studied mostly separately in the past,
argumentation could serve as a useful tool for the explanation of other knowledge-based systems \cite{explanation_argumentation}.
In fact, \cite{explanation_argumentsLP} provide an early account of explanations for logic programs in terms of arguments,
where Toulmin's argument scheme is applied.
However, a meta-program encoding the argument scheme has to be created by hand for any logic program that needs explanation,
making it infeasible for automatic computation.
Related to argumentation as an explanation method, \cite{garcia_argumentExplanation} introduce explanations in argumentative terms for argumentation-based reasoning methods,
such as Defeasible Logic Programming \cite{Garcia}, explaining why an argument with a certain conclusion is or is not deemed to be ``winning''.
Similar to ABAS Justifications and Attack Trees, the motivation behind their approach is to explain the solution of 
applying argumentation semantics to an argumentation framework using the context of the semantic analysis,
i.e. the attacking and defending relations between arguments.
Explanations are given in terms of argument trees similar to Attack Trees,
where arguments held by child nodes in the tree attack the argument held by the parent node.
In contrast to Attack Trees, however, every node in the tree is extended with all its attackers and the tree is labelled with respect to the grounded extension,
a different argumentation semantics, instead of stable extensions.
Another difference to our justifications is that Garc\'{\i}a et al. explain why a literal $l$ is not a winning conclusion
in terms of an explanation why the contrary literal $\neg l$ is a winning conclusion.
In contrast, ABAS Justifications explain why a literal $l$ is not a winning conclusion by pointing out why it cannot possibly be winning.

%%%%%%%%%%%%%%%%%%%%%%%%%%%%%%%%%%%%%%%%%%%%%%%%%%%%%%%%%%%%%%%%%%%%%%%%%%%%%%%%%%%%%%%%%%%%%%%%%%%%%%%%%%%%%%%%%%%%%%%%%%%%%%%%%%%%%%%%%%%%%%%%%%%%%%%%%%%%%%%%%%
%%%%%%%%%%%%%%%%%%%%%%%%%%%%%%%%%%%%%%%%%%%%%%%%%%%%%%%%%%%%%%%%%%%%%%%%%%%%%%%%%%%%%%%%%%%%%%%%%%%%%%%%%%%%%%%%%%%%%%%%%%%%%%%%%%%%%%%%%%%%%%%%%%%%%%%%%%%%%%%%%%
\section{Conclusion and Future Work}
\label{sec:conclusion}

We present two approaches for justifying why a literal is or is not contained in an answer set of a consistent logic program by translating the logic program into an Assumption-Based Argumentation (ABA) framework and
using the structure of arguments and attacks in this \emph{translated ABA framework} for the explanation.
Attack Trees, our first justification approach, provide an explanation for a literal in argumentation-theoretic terms, i.e. 
in terms of arguments and attacks between them.
ABA-Based Answer Set Justifications, our second justification approach, flatten the structure of Attack Trees,
yielding a set of literal-pairs in a support relation and literal-pairs in an attack relation.
This justification approach is more aligned with logic programming concepts as it uses literals rather than arguments as an explanation.
Both justification approaches are based on the correspondence between answer sets of a logic program and stable extensions of the translated ABA framework,
namely for every answer set of a consistent logic program there is a \emph{corresponding stable extension} of the translated ABA framework and vice versa.

Nodes in an \emph{Attack Tree} hold arguments, where the argument held by a parent node is attacked by the arguments held by the parent's child nodes.
The root node of an Attack Tree always holds an argument for the literal being justified.
Importantly, an Attack Tree is constructed with respect to the stable extension corresponding to the answer set in question.
If an argument in the Attack Tree is contained in the corresponding stable extension,
all arguments attacking it occur as its child nodes in the Attack Tree.
The intuition behind this is that an argument is contained in the stable extension if all attacking arguments are not contained in this stable extension.
Thus, all attacking arguments are added as children in the Attack Tree and further justified as to why they are not contained in the stable extension.
In contrast, if an argument in the Attack Tree is not contained in the corresponding stable extension,
only one attacking argument is picked as a child node, in particular one which is part of the corresponding stable extension.
The intuition behind picking only one attacking argument is inspired by the idea of proof by counterexample,
i.e. that one counterexample is enough to disprove a claim.
Thus, it is enough to show one way in which an argument can be disproven by an attacking argument,
even if there are other ways.
Importantly, the attacking argument has to be in the stable extension to prove that the attacked argument is not in the stable extension.
The resulting structure of an Attack Tree is an alternation of arguments in the corresponding stable extension attacked by 
arguments not in the corresponding stable extension and so on.

An \emph{ABA-Based Answer Set (ABAS) Justification} is obtained from Attack Trees by extracting a support-relation between literals
from the structure of arguments occurring in the Attack Trees,
and an attack-relation between literals from the attacks between these arguments.
Thus, ABAS Justifications are the flattened version of Attack Trees, expressing the same explanation,
but in terms of literals and their relations rather than in terms of arguments and attacks between them.
We present two versions of ABAS Justifications:
The simpler \emph{BABAS Justifications} are used to introduce the flattening method; 
the more elaborate \emph{LABAS Justifications} apply the same flattening method but additionally use labels
on literals and their relations in order to overcome some deficiencies of BABAS Justifications.
An ABAS Justification can also be interpreted as a graph of literal-nodes connected via support and attack edges.

Importantly, both Attack Trees and ABAS Justifications explain why a literal is or is not in an answer set in terms of an admissible fragment of
this answer set.
The justification that a literal is in an answer set is that a derivation of this literal is supported by an admissible fragment of this answer set.
In contrast, the justification that a literal is not contained in an answer set is that all derivations of this literal are ``attacked'' by an admissible fragment of this answer set.
In comparison to the few existing explanation methods for logic programming,
ABAS Justifications take an argumentative premise-conclusion approach, i.e. a literal is explained in terms of the facts and NAF literals necessary for its derivations,
rather than in terms of the whole derivation.

Future work includes to apply ABAS Justifications to real-world examples, with focus on medical decision making and legal reasoning.
Applying ABAS Justifications to these domains will not only yield a plausible medical or legal decision but also provide
an easily accessible explanation for elements of the solution.
A potential legal rule base for the application of ABAS Justifications is the encoding of the Japanese Civil Code as used in \cite{proleg}.
With respect to applicability of ABAS Justifications, we are planning to develop a user-friendly implementation of ABAS Justification
and conduct a survey both among experts in ASP and among non-experts using ASP as a decision-making tool.
Furthermore, we are working on an extension of ABAS Justifications to explain inconsistencies in logic programs and to help debugging these logic programs.

%%%%%%%%%%%%%%%%%%%%%%%%%%%%%%%%%%%%%%%%%%%%%%%%%%%%%%%%%%%%%%%%%%%%%%%%%%%%%%%%%%%%%%%%%%%%%%%%%%%%%%%%%%%%%%%%%%%%%%%%%%%%%%%%%%%%%%%%%%%%%%%%%%%%%%%%%%%%%%%%%%
%%%%%%%%%%%%%%%%%%%%%%%%%%%%%%%%%%%%%%%%%%%%%%%%%%%%%%%%%%%%%%%%%%%%%%%%%%%%%%%%%%%%%%%%%%%%%%%%%%%%%%%%%%%%%%%%%%%%%%%%%%%%%%%%%%%%%%%%%%%%%%%%%%%%%%%%%%%%%%%%%%
\section*{Acknowledgements}
\label{sec:acknowledgements}
We would like to thank the anonymous reviewers and David Pearce for their constructive feedback, as well as Abdallah Arioua for pointing out some related work.

%%%%%%%%%%%%%%%%%%%%%%%%%%%%%%%%%%%%%%%%%%%%%%%%%%%%%%%%%%%%%%%%%%%%%%%%%%%%%%%%%%%%%%%%%%%%%%%%%%%%%%%%%%%%%%%%%%%%%%%%%%%%%%%%%%%%%%%%%%%%%%%%%%%%%%%%%%%%%%%%%%
%%%%%%%%%%%%%%%%%%%%%%%%%%%%%%%%%%%%%%%%%%%%%%%%%%%%%%%%%%%%%%%%%%%%%%%%%%%%%%%%%%%%%%%%%%%%%%%%%%%%%%%%%%%%%%%%%%%%%%%%%%%%%%%%%%%%%%%%%%%%%%%%%%%%%%%%%%%%%%%%%%

\bibliography{literature}
\bibliographystyle{acmtrans}
\end{document}